\documentclass[11pt]{article}

% if you need to pass options to natbib, use, e.g.:
%     \PassOptionsToPackage{numbers, compress}{natbib}
% before loading neurips_2019

% ready for submission
% \usepackage{neurips_2019}

% to compile a preprint version, e.g., for submission to arXiv, add add the
% [preprint] option:
%     \usepackage[preprint]{neurips_2019}

% to compile a camera-ready version, add the [final] option, e.g.:
%     \usepackage[final]{neurips_2019}

% to avoid loading the natbib package, add option nonatbib:
     % \usepackage[nonatbib]{neurips_2019}

\usepackage{authblk}
\usepackage{hyperref}
\usepackage{url}
\usepackage{amssymb,amsfonts,amsmath,amstext,amsthm,mathrsfs}
\usepackage{graphics,latexsym,epsfig,psfrag,wrapfig,comment,paralist}
\usepackage{xspace}
\usepackage{subcaption,bm,multirow}
\usepackage{booktabs}
\usepackage{mathdots}
\usepackage{bbold}
\usepackage{centernot}
\usepackage{prettyref}
\usepackage{listings}
\usepackage{tikz}
\usepackage{pgfplots}
\usetikzlibrary{shapes}
\usetikzlibrary{positioning, arrows, matrix, calc, patterns, fit}
\usepackage{caption}
\usepackage{array}
\usepackage{mdwmath}
\usepackage{multirow}
\usepackage{mdwtab}
\usepackage{eqparbox}
\usepackage{multicol}
\usepackage{amsfonts}
\usepackage{tikz}
\usepackage{multirow,bigstrut,threeparttable}
\usepackage{amsthm}
\usepackage{array}
\usepackage{bbm}
\usepackage{epstopdf}
\usepackage{mdwmath}
\usepackage{mdwtab}
\usepackage{eqparbox}
\usepackage{tikz}
\usepackage{latexsym}
\usepackage{amssymb}
\usepackage{bm}
\usepackage{graphicx}
\usepackage{mathrsfs}
\usepackage{epsfig}
\usepackage{psfrag}
\usepackage{setspace}

\newcommand{\T}{\pmb \theta}
\newcommand{\hatT}{\hat{\T}}

\newcommand{\X}{\mathbf X}

\newcommand{\HH}{\mathbf H}

\DeclareMathOperator*{\argmin}{arg\,min}

\usepackage{cleveref}

\newcommand{\errn}{\mathrm{Err}_{n} (\T,\T^*)}
\newcommand{\errnhat}{\mathrm{Err}_{n} (\hatT,\T^*)}
\newcommand{\errnsigma}{\mathrm{Err}_{n} (\T,\T^*_\sigma)}

\newtheorem{theorem}[]{Theorem}
\newtheorem{definition}[]{Definition}

\crefname{assumption}{Assumption}{assumption}
\newtheorem{lemma}[]{Lemma}
\newtheorem{proposition}[]{Proposition}
\newtheorem{remark}[]{Remark}

% New environment
\newenvironment{assumptioncustom}[1]{
  
  \assumptionalt
}{\endassumptionalt}

\setlength{\parskip}{2mm}

    \textwidth     =  6.0in
    \textheight    =  8.2in
    \oddsidemargin =  0.2in
    \topmargin     = 0.0in

\interfootnotelinepenalty=10000

\title{Spectral Regularization Allows Data-frugal Learning over Combinatorial Spaces}

\author{Amirali Aghazadeh$^1$}
\author{Nived Rajaraman$^2$}
\author{Tony Tu$^1$}
\author{Kannan Ramchandran$^2$}

\affil{$^1$School of Electrical and Computer Engineering, Georgia Institute of Technology, $^2$Department of Electrical Engineering and Computer Sciences, \\University of California, Berkeley}
\date{\vspace{-5ex}}

\begin{document}

\maketitle

\begin{abstract}
Data-driven machine learning models are being increasingly employed in several important inference problems in biology, chemistry, and physics which require learning over combinatorial spaces. Recent empirical evidence (see, e.g., ~\cite{tseng2020fourier,aghazadeh2021epistatic,ha2021adaptive}) suggests that regularizing the spectral representation of such models improves their generalization power when labeled data is scarce. However, despite these empirical studies, the theoretical underpinning of when and how spectral regularization enables improved generalization is poorly understood. In this paper, we focus on learning pseudo-Boolean functions and demonstrate that regularizing the empirical mean squared error by the $L_1$ norm of the spectral transform of the learned function reshapes the loss landscape and allows for data-frugal learning, under a restricted secant condition on the learner's empirical error measured against the ground truth function. Under a weaker quadratic growth condition, we show that stationary points which also approximately interpolate the training data points achieve statistically optimal generalization performance. Complementing our theory, we empirically demonstrate that running gradient descent on the regularized loss results in a better generalization performance compared to baseline algorithms in several data-scarce real-world problems.
\end{abstract}

\section{Introduction}

Machine learning (ML) models have become increasingly commonplace in learning pseudo-Boolean functions $f(\cdot)$, which map a $d$-dimensional binary vector $x\in \{\pm 1\}^d$ to a real number $f(x)\in \mathbb{R}$. In biology, ML models are being used to infer the functional properties of macro-molecules (e.g., proteins) from a small set of mutations~\cite{riesselman2018deep,Gelmane2104878118}. In physics, ML models are being used to infer the thermodynamic properties of combinatorial systems defined over a set of binary (Ising) state variables~\cite{carleo2019machine,noe2019boltzmann}. These highly nonlinear and complex ML models are trained using modern techniques in continuous optimization, yet they inherit the elegant properties of pseudo-Boolean functions over the discrete binary hypercube $\{\pm 1\}^d$. In particular, it is known that the spectral representation of a pseudo-Boolean function is defined as the Walsh-Hadamard transform (WHT) of the resulting vector of combinatorial function evaluations, that is, $\HH f({\bf X})$, where $\HH$ is a $2^d\times 2^d$ Walsh matrix and the function evaluation vector ${f}({\bf X})=[f(x) : x \in \X]^T$ is sorted in the lexicographic ordering of all the length-$d$ binary strings in $\X$. The WHT enables writing the pseudo-Boolean function $f(x)$ as a multi-linear polynomial $f(x) = \sum_{\mathcal{S}\subseteq [d]}\alpha_{\mathcal{S}}\prod_{i\in\mathcal{S}}x_i $, where $[d]=\{1,2,\dots,d\}$ and $\alpha_{\mathcal{S}}\in\mathbb{R}$ is the WHT coefficient corresponding to the monomial $\prod_{i\in\mathcal{S}}x_i$~\cite{boros2002pseudo}.\footnote{We use these terms interchangeably for pseudo-Boolean functions: spectral, Fourier, and Walsh-Hadamard.}

Recent studies in biology (e.g., protein function prediction) have measured the output of several of these real-world functions $f(x)$ to all the $2^d$ enumerations of the input $x$ and analyzed their spectral representation. These costly high-throughout experiments on combinatorially complete datasets demonstrate a curious phenomenon: such pseudo-Boolean functions often have low dimensional structures that manifest in the form of an approximately-sparse polynomial representation (i.e., approximately-sparse WHT) with the top-$k$ coefficients corresponding to physically-meaningful interactions ~\cite{poelwijk2019learning,eble2020higher,brookes2022sparsity} (see~\Cref{app:theorem:5} for empirical evidences on protein functions). 

The problem of learning pseudo-Boolean functions with sparse polynomial representations has a rich history from both theoretical and empirical viewpoints. In particular, since the pseudo-Boolean functions being learned in their polynomial representations are linear functions in their coefficients, one may think about this problem as a special instance of \textit{sparse linear regression} in dimension $2^d$. There are a number of algorithms for sparse linear regression such as LASSO~\cite{tibshirani1996regression}, OMP~\cite{tropp2007signal}, AMP~\cite{donoho2009message}, and FISTA~\cite{beck2009fast}. In particular, one may apply LASSO to get statistical guarantees for this problem which only scale linearly in the sparsity of the underlying ground truth polynomial $k$ and logarithmicaly in the problem dimension $\log(2^d)$~\cite{candes2006stable}. Likewise, from another perspective, one may view the polynomial instead as a vector of $2^d$ outcomes ${f}({\bf X})$; the objective of the learner is to approximate $f (\X)$, when the learner can observe any chosen set of a few entries of this vector (corrupted by noise), under the assumption that $f ({\bf X})$ itself has a sparse WHT. In the literature, this problem has been referred to by a number of names, and we refer to it as the \textit{sparse WHT problem}. There are yet again a number of computationally and statistically efficient algorithms for solving this problem, such as SPRIGHT~\cite{li2015active} and Sparse-FFT~\cite{amrollahi2019efficiently}, among others.

A common issue with these alternate views of the problem such as sparse linear regression or sparse WHT is that the resulting algorithms are not suited for \textit{function approximation}. In particular, real-world physical and biological functions often have additional structure which is either unknown \emph{a priori} or cannot be described succinctly, and which nonlinear function classes are often implicitly biased towards learning. Indeed, several deep neural networks (DNNs) have been shown to exhibit strong generalization performance in biological and physical problems~\cite{Gelmane2104878118,ching2018opportunities,sarkisyan2016local}. This leads to a fundamental disconnect between algorithms for which strong theoretical guarantees are known (e.g., LASSO) and practical ML models based on nonlinear and complex function classes which are well-suited for function approximation.

Another issue is specific to algorithms for solving the sparse WHT problem: some of these approaches require observing $f(\cdot)$ at any chosen input~\cite{li2015active,li2015spright}. In many practical applications, where data is prohibitively expensive to collect, one is often forced to work with a handful of random samples. For example, in the case of proteins, a common approach to measure biological functions is through a procedure called \emph{random mutagenesis} which allows only a random sampling of the combinatorial sequence space~\cite{sarkisyan2016local}. This constraint renders several algorithms for the sparse WHT problem ill-suited for learning, even though they admit near-optimal computational and sample complexity guarantees.

From a more practical point of view, several of the recent approaches for learning over combinatorial spaces (see, e.g., ~\cite{tseng2020fourier,aghazadeh2021epistatic,ha2021adaptive,li2020fast}) follow similar ideas for improving empirical generalization. Instead of directly learning the $2^d$-dimensional polynomial, they have converged on \textit{explicitly} promoting sparsity in the spectral representation of the learned compactly-represented nonlinear function---also known as spectral regularization. 
%In particular, in the case of learning pseudo-Boolean functions, the authors in Ref. \cite{aghazadeh2021epistatic} propose an approach to approximately optimize the mean squared error (MSE) augmented by an additional regularization term, that is, the $L_1$ norm of the WHT of the learned function. They empirically demonstrate that spectral regularization improves the generalization performance of DNNs in predicting protein functions, among others.
These empirical studies motivate several important theoretical questions regarding the underlying mechanism of spectral regularization in improving generalization in practice. In this paper, we focus on the question: \textbf{When and how does spectral regularization improve generalization?}  The answer to this question is important from both theoretical and practical perspectives. Theoretically, it helps us understand how the loss landscape is reshaped as the result of spectral regularization in favor of data-frugal learning. Practically, it tells us when and how to use spectral regularization in learning real-world combinatorial functions.

{\bf Contributions}. In this paper, we theoretically analyze spectral regularization of the empirical risk from the perspective of learning pseudo-Boolean functions. We demonstrate that regularizing with the $L_1$ norm of the spectrum of the learned function improves the generalization performance of the learner. In particular, under a particular Restricted Secant Inequality (RSI) condition on the learner's empirical error, measured against the ground truth, stationary points of the regularized loss admit optimal generalization performance. We relax this to a weaker quadratic growth (QG) condition and show that under an additional data-interpolation condition, stationary points of the regularized loss also admit statistical optimality. We empirically demonstrate that these conditions are satisfied for a wide class of nonlinear functions. Our analysis provides a new generalization bounds when the underlying learned functions are sparse in the spectral domain. Empirical demonstrations on several real-world data-sets complements our theoretical findings.

\section{Learning Pseudo-Boolean Functions}

{\bf Problem statement}. We consider the problem of learning structured pseudo-Boolean functions on the binary hypercube $\{ \pm 1 \}^d$. In the finite sample setting, we assume that the learner has access to a dataset $D_n$ comprised of $n$ labeled data points $(x^i,y^i)_{i=1}^n$ where $x^i \in \{ \pm 1 \}^d$ are drawn from an input distribution $\mathcal{D}$ and the real-valued output $y_i \in\mathbb{R}$ is a noisy realization of an \emph{unknown} pseudo-Boolean function of the input~\footnote{We use the subscript $x_i$ to refer to the $i^\text{th}$ digit in the binary string $x=(x_1,x_2,\cdots,x_d)$. %and use the subscript $x^i$ to refer to the $i^\text{th}$ member of a set $\{x^1,x^2,\dots,x^{n}\}$.
}. Namely we assume that there is a rich nonlinear function class $\mathcal{F} = \{ f_{\T} : \T \in \mathbb{R}^m \}$ parameterized by $\T$, where $f_{\T} : \{ \pm 1 \}^d \to \mathbb{R}$. The labels are generated as $y^i = f_{\T^*}(x^i) + Z^i$ where $Z^i$ is the noise in the measurement for input $x^i$, assumed to be independent and normally distributed $\sim \mathcal{N} (0,\sigma^2)$ \footnote{In fact, our results only require the noise to be independent subgaussian random variables with variance parameter $\sigma^2$, but for ease of exposition, we impose the Gaussian noise condition.}. The objective of the learner is to learn $\T^*$.

{\it Multi-linear polynomial representation}. Any pseudo-Boolean function $f(\cdot)$ can be uniquely represented as a multi-linear polynomial, $f(x) = \sum_{z \in \{ \pm 1 \}^d}\alpha_{z} \prod_{i : z_i = +1} x_i $, where the scalar $\alpha_{z} \in \mathbb{R}$ is the coefficient corresponding to the monomial $\prod_{i : z_i = +1} x_i$, with order $\sum_{i=1}^d \mathbb{1} (z_i = +1)$~\cite{boros2002pseudo}.
%For example, the pseudo-Boolean function $f(x)=f(x_1,x_2,x_3,x_4,x_5) = 20x_3-15x_1x_5+10x_2x_4x_5$ defined over length-$5$ binary vector $x$, has three monomials with orders $1$, $2$, $3$ and coefficients $20$, $-15$, $10$, respectively. 
The problem of learning a pseudo-Boolean function $f (\cdot)$ is equivalent to finding the $2^d$ unknown coefficients $\alpha_z$. 
In particular, the evaluations of $f$ on all vertices of the binary hypercube $\{ \pm 1 \}^{2^d}$ results in a linear system of equations over the $\alpha_{z}$ variables,
\begin{align}
    \begin{bmatrix} f(x) : x \in \{ \pm 1 \}^d \end{bmatrix}^T = \HH \begin{bmatrix} \alpha_z : z \in \{ \pm 1 \}^d \end{bmatrix}^T,
\end{align}
where the variables $x$ and $z$ enumerate the vertices of the binary hypercube $\{ \pm 1 \}^d$ in lexicographic order, and $\HH \equiv \HH_{2^d}$ denotes the $2^d \times 2^d$ Hadamard matrix defined recursively as,
\begin{align}
    \HH_{2^{d+1}} = \begin{bmatrix}
    \HH_{2^d} & \HH_{2^d} \\
    \HH_{2^d} & -\HH_{2^d}
    \end{bmatrix},
\end{align}
where $\HH_1 = \begin{bmatrix} +1 \end{bmatrix}$. Thus the vector of function evaluations can be obtained by taking the WHT of the vector of coefficients in the polynomial representation of $f$. Furthermore, by inverting the above linear system (note that $\HH$ is an orthonormal matrix), the vector of polynomial coefficients $[\alpha_z : z \in \{ \pm 1 \}^d]$ can, in turn, be obtained by taking the WHT of the vector $\begin{bmatrix} f(x) : x \in \{ \pm 1 \}^d \end{bmatrix}^T$. For succinctness of notation, for a function $f$, we define $f(\X)$ as $\begin{bmatrix} f(x) : x \in \{ \pm 1 \}^d \end{bmatrix}^T$ where in the LHS the binary strings are enumerated in lexicographic order. The coefficients in the polynomial representation of $f(\cdot)$ can be collected in the vector $\HH f(\X)$. For functions, we use ``sparsity in WHT'' and ``sparse polynomial representation'' interchangeably.

{\it Sparsity in real-world functions}. Even in the noiseless setting, in order to perfectly learn a general pseudo-Boolean function $f (\cdot)$, its evaluations on all input binary vectors $x \in \{ \pm 1 \}^{d}$ are required, which may be very expensive. In the presence of additional structures, this significant statistical and computational cost can be mitigated. One such structure that has been observed in real-world biological and physical functions is sparsity in WHT~\cite{poelwijk2019learning,eble2020higher,brookes2022sparsity}. In particular, in the theoretical analysis of this paper, we assume that the ground-truth function $f_{\T^*}$ has a \textit{sparse polynomial representation} composed of at most $k$ monomials. Namely, $\| \HH f_{\T^*} (\X) \|_0 \le k$, where $k \ll 2^d$ is typically unknown.

{\bf Spectral regularization}. In attempting to learn pseudo-Boolean functions with a sparse polynomial representation, a natural question to ask is: how can one encourage the learned functions to also have sparse polynomial representations. 
One solution is to regularize the training loss with an additional functional which promotes sparsity in the polynomial representation of the learned function $\widehat{f}$. 
Denoting the polynomial representation of $\widehat{f}$ by $\sum_{z \in \{ \pm 1 \}^d} \widehat{\alpha}_z \prod_{i : z_i = +1} x_i$, a natural regularization functional would be $\| \bm{\widehat{\alpha}} \|_0 \equiv \| \HH \widehat{f} (\X) \|_0$ which is also the sparsity of $f$ in WHT.

However, since $\| \cdot \|_0$ is not a continuous function, we proxy the $\| \cdot \|_0$ by the $L_1$ norm. The resulting regularization functional is, $\| \HH \widehat{f} (\X) \|_1$. When learning over a parameterized function family, $\widehat{f} = f_{\T}$, the resulting regularized ERM we consider in this paper is,
\begin{equation} \label{eq:OBJ}
    \min_{\T} \mathcal{L}_n (\T)+R(\T), \text{ where } R(\T) \triangleq \frac{\lambda}{\sqrt{2^d}} \|\HH{\bf f}_{\boldsymbol \theta}({\bf X})\|_1, \tag{OBJ}
\end{equation}
where $\mathcal{L}_n (\T) \triangleq \frac{1}{n} \sum_{i=1}^n \left[ \left( f_{\T} (x^i) - y^i \right)^2 \right]$ is the empirical mean squared error (MSE). Regularizing by $R (\T)$ of the above form is known as \textit{spectral regularization (SP)}.

\begin{remark}
The scaling of the regularization parameter, $\frac{\lambda}{\sqrt{2^d}}$, is motivated from the linearly parameterized setting of $\mathcal{F} = \{ \langle \T, x \rangle : \T \in \mathbb{R}^d \}$. An explicit computation gives, $\frac{\lambda}{\sqrt{2^d}}\| \HH f_{\T} (\X) \|_1 = \lambda \| \T \|_1$, which is the scaling of the regularization parameter as used in LASSO. %\aaa{i dont get this}
\end{remark}

The regularization weight $\lambda > 0$ strikes a balance between the empirical MSE and the spectral regularization, and is set empirically using cross validation. Since the aggregate loss $\mathcal{L}_n (\T) + R (\T)$ is subdifferentiable with respect to the model parameters ${\boldsymbol \theta}$, we can apply stochastic (sub-)gradient methods on the aggregate loss. Note however that, in general, as a function of the parameter $\T$, both the MSE, as well as the SP regularization are non-convex functions.

\section{Theoretical Analysis}

To develop some intuition about the general problem, consider the special case of learning over the set of linear functions. Namely, $\mathcal{F} = \{ \langle \T, \cdot \rangle : \T \in \mathbb{R}^d \}$. The polynomial representation of any linear function $f_{\T} (x) = \langle \T, x \rangle$ is simply $\sum_{i \in [d]} \theta_i x_i$ where only the order-$1$ coefficients are non-zero. Thus, the assumption that the polynomial representation of $f_{\T}$ is composed of at most $k$ monomials is equivalent to saying that $\T$ is $k$-sparse. Thus in the linear setting, the problem boils down to sparse linear regression \cite{candes2006stable}. Furthermore, in the linear setting, spectral regularization has an explicit representation in terms of its parameter $\T$. In particular, here $R (\T) = \frac{\lambda}{\sqrt{2^d}} \| \HH f_{\T} (\X) \|_1 = \lambda \| \T \|_1$. Thus, the proposed objective function, \eqref{eq:OBJ}, when specialized to the linear setting boils down to the LASSO objective, $\frac{1}{n} \sum_{i=1}^n ( \langle x^i, \T \rangle - y^i)^2 + \lambda \| \T \|_1$, which can be efficiently solved by gradient descent, and is known to be statistically optimal in the finite sample regime \cite{6034739}.

\subsection{Going beyond the linear setting - Restricted Secant Inequality}

In the linear setting, a sufficient condition for the finite sample statistical optimality of LASSO is the restricted eigenvalue condition on the empirical covariance matrix $\Sigma_n = \frac{1}{n} \sum_{i=1}^n x^i {x^i}^T$. The restricted eigenvalue condition is automatically satisfied if the smallest eigenvalue of $\Sigma_n$ is bounded away from $0$. The condition induces a particular sense of curvature around the true parameter $\T^*$ of the loss $\errn$, that is the MSE of the learned function compared to the ground truth,
\begin{align}
    \errn = \frac{1}{n} \sum_{i=1}^n (f_{\T} (x^i) - f_{\T^*} (x^i))^2.
\end{align}
In contrast, when $\mathcal{F}$ is a non-linear function family, we circumvent these sufficient conditions, and directly study assumptions which induce curvature in the loss $\errn$. These assumptions, however, do not concern the structure of the spectral regularizer, which unlike in the linear case, can no longer be represented as a closed-form function of $\T$. A major technical challenge in the analysis relates to circumventing the use of this explicit representation, which we expand upon later.

% Denote the mean squared error (MSE) of the learned function compared to the \textit{ground truth}, $\frac{1}{n} \sum_{i=1}^n \left[ \left( f_{\T} (x^i) - g_{\T^*} (x^i) \right)^2 \right]$ by $\errn$. Note that given only noisy labels, this loss cannot be evaluated by the learner.

\begin{definition}[Restricted secant inequality (RSI) \cite{https://doi.org/10.48550/arxiv.1303.4645}] \label{def:RSI}
The set of functions satisfying the restricted secant inequality with parameter $C$, denoted $\mathrm{RSI} (C)$ is defined as follows. A function $g : \mathbb{R}^m \to \mathbb{R}$ belongs to $\mathrm{RSI} (C)$ iff for some $z^* \in \argmin_{z \in \mathbb{R}^m} g(z)$ and for all $z \in \mathbb{R}^m$,
\begin{align}
     \langle \nabla g (z), z^* - z \rangle \ge C \| z^* - z \|_2^2.
\end{align}
\end{definition}

\begin{remark}
The RSI is known to generalize several extensions of convexity, such as quasar-star convexity \cite{hinder2020near} and strong star-convexity \cite{https://doi.org/10.48550/arxiv.1511.04466}. Refer to \cite{https://doi.org/10.48550/arxiv.1608.04636} for a comparison with other constraints such as the Polyak-Lojasiewicz (PL) inequality, and the quadratic growth condition which we study in \Cref{sec:quadratic}. In general, the RSI is a significantly weaker condition than global strong convexity.
\end{remark}

The first main assumption we study is when the $\errn$ satisfies the RSI. This assumption captures a notion of curvature, in which the gradients of $\errn$ are informative about (i.e., positively correlated with) the error in the parameter space $\T-\T^*$.

From this behavior, it may be expected that all stationary points of functions which satisfy the RSI are global minima, which is indeed true \cite{https://doi.org/10.48550/arxiv.1608.04636}. However, note that we impose the RSI condition on $\errn$ (which cannot be computed by the learner) and not the empirical risk $\mathcal{L}_n (\T)$. Even under the RSI condition on $\errn$, it is not trivial to find global minima of the training objective, $\mathcal{L}_n (\T) + R(\T)$ which is composed of two non-convex (in $\T$) functions and is non-smooth. Thus, we restrict our attention to the analysis of \textit{first-order stationary points} of this objective.

\begin{assumptioncustom}{1(a)} \label{A0}
Assume the function $\errn$ satisfies the RSI with high probability over the dataset. Namely, there is a constant $C_{n,\delta}^* > 0$ such that with probability $\ge 1 - \delta$, for every $\T \in \mathbb{R}^m$,
\begin{align} \label{eq:successRSI}
    \left\langle \T - \T^*, \nabla_\theta \errn \right\rangle \ge C_{n,\delta}^* \| \T - \T^* \|_2^2. %\label{eq:successRSI}
\end{align}
\end{assumptioncustom}

\begin{remark}
The RSI is true for linear families $\mathcal{F} = \{ \langle \T, \cdot \rangle : \T \in \mathbb{R}^m \}$ as long as with probability $\ge 1 - \delta$, the data covariance satisfies $\mathbb{E}_{x \sim \mathrm{Unif}(D_n)} [xx^T] \succeq C^*_{n,\delta} I$, i.e., is well conditioned. The quadratic growth condition in \Cref{sec:quadratic} also holds under the same conditions with parameter $C^*_{n,\delta}$.
\end{remark}

In addition to the RSI, we make some mild regularity assumptions on the function classes we study. We emphasize that none of these assumptions imply convexity or smoothness of the training objective $\mathcal{L}_n (\T) + R(\T)$ or $\errn$. First, we assume that the function class $\mathcal{F}$ is sufficiently smooth, having Lipschitz continuous gradients~\cite{cisse2017parseval}. Note that we study the case when the function class $\mathcal{F}$ is smooth, and do not impose this condition on the empirical risk, $\mathcal{L}_n (\T)$, as is often assumed \cite{NEURIPS2018_da4902cb}. %The empirical risk may therefore not have Lipschitz continuous gradients under this assumption.

\begin{assumptioncustom}{2}[Lipschitz continuous gradients on $\mathcal{F}$] \label{A2}
For each $x \in \X$, assume that $f_{\T} (x)$ is twice differentiable in $\T$ and the Hessian of $f_{\T} (x)$ satisfies $-\mu I \le \nabla^2 f_{\T} (x) \preceq \mu I$.
\end{assumptioncustom}

The final assumption we impose assumes that the ground truth function $\T^*$ has well behaved gradients in that a certain covariance matrix induced by the gradients of the matrix is bounded.

\begin{assumptioncustom}{3}[Bounded average gradient norm at $\T^*$] \label{A3}
Under this assumption,
\begin{align}
    \mathbb{E}_{x \sim \mathrm{Unif} (\X)} \left[ \nabla
    f_{\T^*} (x) (\nabla f_{\T^*} (x))^T \right] \preceq L^2 I.
\end{align}
\end{assumptioncustom}

\Cref{A3} can be interpreted as a one-point Lipschitzness condition, showing that on average across the inputs $x \in \X$, the gradients of $f_{\T} (x)$ are well behaved at the singular point $\T = \T^*$. Under the previous assumptions, we prove a bound on the error in the parameter space made by the learner. This theorem is a consequence of a more general result we prove in \Cref{app:theorem:0} for an arbitrary regularization scale $\lambda > 0$.

\begin{theorem} \label{theorem:000}
Suppose Assumptions~\ref{A0}, \ref{A2} and \ref{A3} are satisfied, $\lambda$ is chosen as $ 4C_0 \sigma \sqrt{\frac{d + \log(1/\delta)}{n}}$ for a sufficiently large constant $C_0 > 0$ and the size of the dataset is sufficiently large, namely $n > n_0$, defined as,
\begin{equation} \label{eq:n0}
    n_0 \triangleq C_1 \frac{\sigma^2 \left( d^2 \mu^2 + L^2 k  \right) (d + \log(1/\delta))}{(C_{n,\delta}^*)^2},
\end{equation}
for a sufficiently large constant $C_1 > 0$. Consider a learner which returns a first order stationary point, $\hatT$, of the loss $\mathcal{L}_n (\T) + R(\T)$. Then, with probability $\ge 1 - 2 \delta$,
\begin{equation}
    \| \hatT - \T^* \|_2 \lesssim \frac{\sigma L}{C_{n,\delta}^*} \sqrt{\frac{k (d + \log (1/\delta))}{n}}.
\end{equation}
Note that in the dependence on $d$ and $k$, the sample size threshold $n_0$ scales asymptotically as $d^3 + dk$.
\end{theorem}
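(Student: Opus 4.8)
The plan is to treat \eqref{eq:OBJ} as a nonconvex $M$-estimation problem and argue directly from first-order stationarity, using the RSI of \Cref{A0} to supply the curvature that global optimality provides in the convex (LASSO) case; the specific $\lambda$ here is obtained by specializing the general result of \Cref{app:theorem:0}. Since $\hatT$ is a first-order stationary point, the chain rule applied to $R(\T)=\tfrac{\lambda}{\sqrt{2^d}}\|\HH f_{\T}(\X)\|_1$ produces a subgradient $\s$ of the $\ell_1$ norm at the spectrum $\HH f_{\hatT}(\X)$, with $\|\s\|_\infty\le 1$ and $\s_j=\sgn\big((\HH f_{\hatT}(\X))_j\big)$ on its support, so that
\begin{align}
0 = \nabla \mathcal{L}_n(\hatT) + \tfrac{\lambda}{\sqrt{2^d}}\, J_{\hatT}^{\top}\HH\s, \qquad J_{\hatT} \triangleq \tfrac{\partial f_{\T}(\X)}{\partial \T}\Big|_{\hatT}.
\end{align}
Substituting $y^i = f_{\T^*}(x^i)+Z^i$ splits $\nabla\mathcal{L}_n(\hatT) = \nabla_{\T}\errn\big|_{\hatT} - \tfrac{2}{n}\sum_i Z^i\nabla f_{\hatT}(x^i)$ into a signal and a noise term. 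Taking the inner product of the identity with $\hatT-\T^*$ and applying \Cref{A0} to the signal term gives the master inequality
\begin{align}
C^*_{n,\delta}\|\hatT-\T^*\|_2^2 \le \underbrace{\tfrac{2}{n}\sum_i Z^i\langle \nabla f_{\hatT}(x^i),\hatT-\T^*\rangle}_{\text{noise}} \;-\; \underbrace{\tfrac{\lambda}{\sqrt{2^d}}\langle \s,\HH J_{\hatT}(\hatT-\T^*)\rangle}_{\text{regularizer}}.
\end{align}
The rest of the proof bounds these two terms by a piece linear in $\|\hatT-\T^*\|_2$ (with the target constant) plus a piece quadratic in $\|\hatT-\T^*\|_2$ that is absorbed into the left-hand side.

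\textbf{Controlling the regularizer in the spectral domain.} Let $S=\mathrm{supp}\big(\HH f_{\T^*}(\X)\big)$, so $|S|\le k$, and write $\Delta=\HH\big(f_{\hatT}(\X)-f_{\T^*}(\X)\big)$. A first-order expansion of $f_{\T}(\X)$ about $\hatT$, whose quadratic remainder $r$ is controlled entrywise by $\tfrac{\mu}{2}\|\hatT-\T^*\|_2^2$ via \Cref{A2}, gives $\HH J_{\hatT}(\hatT-\T^*)=\Delta+\HH r$. Convexity of $\|\cdot\|_1$ at $\HH f_{\hatT}(\X)$ yields the usual LASSO decomposition $-\langle\s,\Delta\rangle\le\|\Delta_S\|_1-\|\Delta_{S^c}\|_1\le\sqrt{k}\,\|\Delta\|_2$. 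The crucial normalization is that $\tfrac{1}{\sqrt{2^d}}\|\Delta\|_2=\|f_{\hatT}-f_{\T^*}\|_{L^2(\mathrm{Unif}(\X))}$, which \Cref{A3} together with \Cref{A2} bounds by $L\|\hatT-\T^*\|_2+O(\mu\|\hatT-\T^*\|_2^2)$; the factors of $\sqrt{2^d}$ thus cancel against the $\tfrac{1}{\sqrt{2^d}}$ in $R$, leaving a clean linear term $\lesssim \lambda L\sqrt{k}\,\|\hatT-\T^*\|_2$.

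\textbf{Routing the noise through the spectral domain.} The decisive step --- and the reason the rate scales with $k$ and $\log(2^d)\asymp d$ rather than the ambient dimension $m$ --- is to avoid bounding the $\T$-space norm of the score $\tfrac{2}{n}\sum_i Z^i\nabla f_{\T^*}(x^i)$ directly. Instead, viewing $\nabla f_{\T^*}(x^i)$ as a row of $J_{\T^*}$ and expanding $J_{\T^*}(\hatT-\T^*)\approx f_{\hatT}(\X)-f_{\T^*}(\X)$, the score-at-truth contribution becomes $\tfrac{2}{n}\langle \HH\boldsymbol\zeta,\Delta\rangle\le \tfrac{2}{n}\|\HH\boldsymbol\zeta\|_\infty\|\Delta\|_1$, where $\boldsymbol\zeta\in\R^{2^d}$ aggregates the noise at each sampled vertex. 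A Gaussian maximal inequality gives $\|\HH\boldsymbol\zeta\|_\infty\lesssim \sigma\sqrt{n(d+\log(1/\delta))/2^d}$ with probability $\ge 1-\delta$, and the choice $\lambda=4C_0\sigma\sqrt{(d+\log(1/\delta))/n}$ is exactly what makes $\tfrac{\lambda}{\sqrt{2^d}}$ dominate $\tfrac{2}{n}\|\HH\boldsymbol\zeta\|_\infty$, so the off-support ($S^c$) contributions cancel and both terms collapse to $\lesssim \tfrac{\lambda}{\sqrt{2^d}}\sqrt{k}\,\|\Delta\|_2\lesssim \sigma L\sqrt{k(d+\log(1/\delta))/n}\,\|\hatT-\T^*\|_2$.

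\textbf{Main obstacle: the nonlinear remainders, and assembly.} The principal difficulty is that, unlike LASSO, $R$ has no closed form in $\T$, so the spectral $\ell_1$ subgradient must be transported through the Jacobian, and the second-order remainders $r$ of \Cref{A2} appear in both the regularizer term ($-\tfrac{\lambda}{\sqrt{2^d}}\langle\s,\HH r\rangle$) and the nonlinear part of the noise term. Measured naively through the $2^d$-dimensional Hadamard transform these carry factors of $2^d$; the heart of the argument is to show --- using the $\tfrac{1}{\sqrt{2^d}}$ normalization, $\|\s\|_\infty\le1$, and a uniform (covering) control of the noise--remainder interaction over a ball around $\T^*$ --- that they instead contribute only $\kappa\,\|\hatT-\T^*\|_2^2$ with $\kappa$ polynomial in $d$ and $\mu$ and shrinking in $n$. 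I expect this to be the main obstacle and the origin of the $d^2\mu^2(d+\log(1/\delta))$ term in $n_0$: once $n>n_0$ one has $\kappa<\tfrac{1}{2}C^*_{n,\delta}$, so the quadratic remainder is absorbed into the left side of the master inequality. The complementary $L^2 k(d+\log(1/\delta))$ term in $n_0$ is the standard LASSO restricted-eigenvalue/sample-size threshold guaranteeing that the support-$S$ part of the design is well conditioned and that the expansions above are valid. Assembling, the master inequality reduces to $\tfrac{1}{2}C^*_{n,\delta}\|\hatT-\T^*\|_2^2\lesssim \sigma L\sqrt{k(d+\log(1/\delta))/n}\,\|\hatT-\T^*\|_2$, and dividing through yields the claimed bound $\|\hatT-\T^*\|_2\lesssim \tfrac{\sigma L}{C^*_{n,\delta}}\sqrt{k(d+\log(1/\delta))/n}$.
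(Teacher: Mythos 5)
Your proposal follows essentially the same route as the paper's proof (which goes through the more general \Cref{theorem:0}): write the subgradient of $R$ as $\tfrac{\lambda}{\sqrt{2^d}}(\nabla f_{\hatT}(\X))\HH\s$ for a sign vector $\s$ of $\HH f_{\hatT}(\X)$, pair the stationarity condition with $\hatT-\T^*$, use the RSI of \Cref{A0} on the signal part, push the noise into the spectral domain via H\"older against $\|\HH\mathbf{z}_{D_n}\|_\infty\lesssim\lambda/\sqrt{2^d}$, run the $k$-sparse cone decomposition, convert $\|\HH(f_{\hatT}(\X)-f_{\T^*}(\X))\|_2$ back to $\|\hatT-\T^*\|_2$ via \Cref{A2} and \Cref{A3}, and absorb the quadratic-in-$\|\hatT-\T^*\|_2$ remainders once $n>n_0$. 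These are exactly the paper's Lemmas \ref{lemma:reglemma}, \ref{lemma:cow0}, \ref{lemma:31} and \ref{lemma:00300} together with the assembly in \cref{eq:241}--\cref{eq:goober}, so the architecture is right.

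The gap is precisely in the step you flag as the main obstacle, and the tools you list do not close it. The regularizer's remainder contributes $\tfrac{\lambda}{\sqrt{2^d}}\langle\HH\s,r\rangle$ with $\|r\|_\infty\le\mu\|\hatT-\T^*\|_2^2$ and $\|\s\|_\infty\le1$, so everything hinges on bounding $\|\HH\s\|_1$ over the unit $\ell_\infty$ ball. The generic estimate $\|\HH\s\|_1\le\sqrt{2^d}\,\|\s\|_2\le 2^d$ leaves, after the $\tfrac{1}{\sqrt{2^d}}$ normalization, a coefficient $\kappa\asymp\lambda\mu\sqrt{2^d}$ on the quadratic term; absorbing it into $C^*_{n,\delta}\|\hatT-\T^*\|_2^2$ would then force $n_0\gtrsim 2^d$ and render the theorem vacuous. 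A covering argument cannot rescue this: $\|\HH\s\|_1$ is a deterministic quantity with no randomness to concentrate. The paper's resolution is \Cref{lemma:2121}, the non-obvious inequality $\max_{\|v\|_\infty\le1}\|\HH v\|_1\le(2d+1)\sqrt{2^d}$ imported from \cite{figula2015numerical}; this single estimate is what replaces the exponential factor by $(2d+1)$ and produces the $d^2\mu^2(d+\log(1/\delta))$ term in $n_0$, and it is used a second time in \Cref{lemma:31} to control the Hessian part of the noise term. Your sketch correctly predicts the shape of the answer but supplies no mechanism for it. A smaller inaccuracy: the $L^2k$ term in $n_0$ is not a restricted-eigenvalue/design-conditioning threshold; it is the condition $\lambda L\sqrt{k}\lesssim C^*_{n,\delta}$ (equivalently the hypothesis of \Cref{theorem:0}) needed so that the $\sqrt{k}$-weighted terms arising from \Cref{lemma:00300} can be absorbed and the final division by $C^*_{n,\delta}$ performed.
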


%\nived{Proof sketch}

\subsection{Going beyond the RSI - The Quadratic Growth condition} \label{sec:quadratic}

While RSI is a much weaker condition than convexity, at a high level, it assumes that the gradients of $\errn$ are informative about the error in the parameter space $\T - \T^*$. We can further relax this assumption. In this section, we consider a weakening of this assumption which only supposes that $\errn$ grows at least quadratically in $\| \T - \T^* \|_2$. This is known as the \textit{quadratic growth} (QG) condition \cite{doi:10.1137/S1052623499359178}. This condition no longer characterizes the behavior of the gradients of the function $\errn$, much less those of the empirical loss $\mathcal{L}_n$. Stationary points of a function satisfying the QG condition are no longer global minima, in contrast with the behavior under the RSI (\Cref{def:RSI}).

\begin{definition}[Quadratic Growth (QG) condition \cite{doi:10.1137/S1052623499359178}] \label{def:QG}
The set of functions satisfying the QG condition with parameter $C$, denoted $\mathrm{QG} (C)$ is defined by the inclusion, $g : \mathbb{R}^m \to \mathbb{R}$ belongs to $\mathrm{QG} (C)$ iff for some $z^* \in \argmin_{z \in \mathbb{R}^m} g(z)$ and for all $z \in \mathbb{R}^m$,
\begin{align}
     g(z) - g(z^*) \ge C \| z^* - z \|_2^2.
\end{align}
For $C > 0$, functions in $\mathrm{QG} (C)$ have a unique global minimizer.
\end{definition}

\begin{assumptioncustom}{1(b)}[Quadratic Growth (QG) condition \cite{doi:10.1137/S1052623499359178}] \label{A1}
The function family $\mathcal{F} = \{ f_{\T} : \T \in \mathbb{R}^m \}$ is assumed to satisfy the QG condition with parameter $C^*_{n,\delta}$, if with probability $\ge 1 - \delta$ over the dataset $D_n$,
\begin{align} \label{eq:successQG}
    \forall \T \in \mathbb{R}^m,\ \errn \ge C^*_{n,\delta} \| \T - \T^* \|_2^2.
\end{align}
\end{assumptioncustom}

\begin{remark} \cite[Theorem 2]{doi:10.1137/S1052623499359178} 
Restricting to functions $f$ which have $L$-Lipschitz continuous gradients, if $f \in \mathrm{RSI} (C)$, then it implies that $f \in \mathrm{QG} (2C/L)$. This implies that up to the value of the parameter, the QG condition is weaker than RSI, under a smoothness constraint on the considered functions.
\end{remark}

Note that under the QG condition, gradients of $\errn$ are no longer constrained to be positively correlated with the $\T - \T^*$. In fact, the absence of this feature proves to be a significant barrier for first-order methods to generalize well. To circumvent this issue, we motivate and introduce the notion of \textit{approximate first-order stationary interpolators} in the next section, and characterize their statistical performance. This imposes a stronger requirement on the algorithm than just returning an arbitrary stationary point of the training objective $\mathcal{L} (\T) + R (\T)$.

\textbf{Approximate First-order Interpolators}. When $\mathcal{F}$ is an expressive family of nonlinear models, it has been observed empirically that standard stochastic gradient methods can be run until the model begins to perfectly interpolate the training data, without hurting test-time performance. This phenomenon has been referred to in the literature as benign overfitting \cite{bartlett2020benign}, and has been seen to hold frequently in the training of DNNs in practice, when stochastic gradient descent is run for sufficiently many epochs. In fact, in all our experiments (see Figs.~\ref{fig:convergence} and~\ref{fig:convergence2} in~\Cref{app:theorem:5}), we observe that upon running stochastic gradient descent for sufficiently long on the aggregate loss $\mathcal{L}_n (\T) + R (\T)$, the parameter $\hatT$ eventually converges to a solution \textit{which interpolates the labelled examples well}, i.e., the mean square error $\mathcal{L}_n (\hatT)$ is upper bounded by a sufficiently small $\Delta > 0$. Note that we \textit{do not} assume that $\Delta$ is so small that the condition essentially imposes that $\hatT$ is an approximate global minimizer of the training loss. In theory, our results also hold under the stronger condition that $\mathcal{L}_n (\hatT) + R(\hatT)$ is upper bounded by $\Delta$, which is the loss function being optimized by (stochastic) gradient descent. This phenomenon motivates the following definition of approximate first order stationary interpolators.

\begin{definition}[$\Delta$-approximate first order stationary interpolator] \label{def:afosi}
A point $\hatT$ is defined to be a $\Delta$-approximate first-order stationary interpolator if,
\begin{enumerate}
    \item $\hatT$ is a first order stationary point of the aggregate loss $\mathcal{L}_n (\T) + R (\T)$. Namely,
    \begin{align} \label{eq:1}
        0 \in (\nabla \mathcal{L}_n) ( \hatT ) + (\nabla R) ( \hatT).
    \end{align}
    \item $f_{\hatT}$ approximately interpolates the observed training data. Namely, the empirical mean squared error of $\hatT$ evaluated on the training dataset satisfies,
    \begin{align}
        \mathcal{L}_n (\hatT) \le \Delta.
    \end{align}
        Note that our results hold under the stronger condition $\mathcal{L}_n (\hatT) + R(\hatT) \le \Delta$, the training loss function being minimized by the (stochastic) gradient methods in our experiments.
\end{enumerate}
\end{definition}

We are ready to establish a guarantee on the statistical performance of approximate first-order stationary interpolators.

\begin{theorem} \label{theorem:111}
Suppose $\lambda$ is chosen $= 4 C_0 \sigma \sqrt{\frac{d + \log(1/\delta)}{n}}$ for a sufficiently large constant $C_0 > 0$. Assume that the size of the dataset is sufficiently large, namely, $n > n_0$ (as defined in \Cref{eq:n0}). Consider a learner returns $\hatT$ which is a $\Delta$-approximate first order stationary interpolator, where $\Delta \le C_1 ( C_{n,\delta}^*/\mu )^2$ for a sufficiently large constant $C_1$. Then, with probability $\ge 1 - 2 \delta$,
\begin{equation}
    \| \hatT - \T^* \|_2 \lesssim \frac{L}{2C_{n,\delta}^*} \sqrt{\frac{kd}{n}}.
\end{equation}
\end{theorem}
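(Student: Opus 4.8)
The plan is to adapt the classical LASSO analysis to the coefficient vector $\bm\alpha_\T := \HH f_\T(\X)$, for which the spectral regularizer is literally an $\ell_1$ penalty, $R(\T) = \frac{\lambda}{\sqrt{2^d}}\|\bm\alpha_\T\|_1$, and the ground truth $\bm\alpha^* := \HH f_{\T^*}(\X)$ is supported on a set $S$ with $|S|\le k$. Write $e := \hatT - \T^*$. First I would record two elementary facts. (i) Since $f_\T(x)-f_{\T^*}(x)$ is a linear functional of $\bm\alpha_\T-\bm\alpha^*$, the empirical noise cross term equals $\langle \g_n, \bm\alpha_{\hatT}-\bm\alpha^*\rangle$ with $\g_n = \frac1n\sum_i Z^i \HH_{x^i,\cdot}$; each coordinate of $\g_n$ is a centered Gaussian of variance $\sigma^2/(n 2^d)$, so a maximal inequality over the $2^d$ coordinates gives $\|\g_n\|_\infty \le \frac{\lambda}{4\sqrt{2^d}}$ on an event $\mathcal E$ of probability $\ge 1-\delta$ --- which is exactly what the prescribed choice of $\lambda$ is designed to guarantee. (ii) Expanding $f_{\hatT}$ around $\T^*$ and using Assumptions~\ref{A2} and \ref{A3}, the population error satisfies $\|\bm\alpha_{\hatT}-\bm\alpha^*\|_2 = \|f_{\hatT}(\X)-f_{\T^*}(\X)\|_2 \le \sqrt{2^d}\,(L\|e\|_2 + \tfrac\mu2\|e\|_2^2)$, which carries no factor worse than the Parseval scaling $\sqrt{2^d}$.

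Next I would convert the interpolation hypothesis into an upper bound on $\errnhat$. Decomposing $\mathcal L_n(\T) = \errn - \frac2n\sum_i Z^i(f_\T(x^i)-f_{\T^*}(x^i)) + \frac1n\sum_i (Z^i)^2$ and dropping the nonnegative noise energy gives, on $\mathcal E$,
\begin{align}
\errnhat \le \mathcal L_n(\hatT) + 2\|\g_n\|_\infty \|\bm\alpha_{\hatT}-\bm\alpha^*\|_1 \le \Delta + \frac{\lambda}{2\sqrt{2^d}}\|\bm\alpha_{\hatT}-\bm\alpha^*\|_1 .
\end{align}
It then remains to control the $\ell_1$ error by a multiple of $\sqrt k\,\|e\|_2$. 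For this I would establish a LASSO-type cone (compatibility) condition $\|\bm\alpha_{\hatT}-\bm\alpha^*\|_1 \le 4\|(\bm\alpha_{\hatT}-\bm\alpha^*)_S\|_1$, which combined with fact (ii) and $\|(\cdot)_S\|_1\le\sqrt k\|\cdot\|_2$ yields $\|\bm\alpha_{\hatT}-\bm\alpha^*\|_1\lesssim \sqrt{k\,2^d}\,L\|e\|_2$, so that the $\sqrt{2^d}$ cancels against the $1/\sqrt{2^d}$ above.

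The cone condition is the crux, and the novelty is that $\hatT$ is only a first-order stationary point of a nonconvex objective, not a global minimizer, so the usual ``basic inequality'' obtained by comparing objective values is unavailable. Instead I would exploit the stationarity $0 \in (\nabla\mathcal L_n)(\hatT) + (\nabla R)(\hatT)$ together with convexity of the outer map $\|\cdot\|_1$: there is a subgradient $\s\in\partial\|\cdot\|_1(\bm\alpha_{\hatT})$ with $\nabla\mathcal L_n(\hatT) = -\frac{\lambda}{\sqrt{2^d}}(\partial_\T\bm\alpha_\T)^\top \s$, and the subgradient inequality gives $\langle \s, \bm\alpha_{\hatT}-\bm\alpha^*\rangle \ge \|(\bm\alpha_{\hatT}-\bm\alpha^*)_{S^c}\|_1 - \|(\bm\alpha_{\hatT}-\bm\alpha^*)_S\|_1$. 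Pairing the stationarity identity with the direction $e$ and Taylor-expanding $f_{\hatT}(x^i)$ to second order produces, on one side, the curvature term $2\errnhat$ (nonnegative) plus sample-level remainders bounded by $\mu\|e\|_2^2$ times empirical averages of $|Z^i|$ and of the residuals $|f_{\hatT}(x^i)-f_{\T^*}(x^i)|$, and on the other side the noise term $\le\frac{\lambda}{2\sqrt{2^d}}\|\bm\alpha_{\hatT}-\bm\alpha^*\|_1$; rearranging yields the cone inequality.

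The hardest step will be controlling the coefficient-domain Taylor remainder $r$ defined by $(\partial_\T\bm\alpha_\T)e = (\bm\alpha_{\hatT}-\bm\alpha^*) + r$, which enters through $\langle \s, r\rangle$: a naive bound $\|\s\|_2\|r\|_2$ inflates by the full $\sqrt{2^d}$ because $\s$ may be dense off $S$. The resolution I would pursue is to split $\s = \s_S + \s_{S^c}$; the support part obeys $\|\s_S\|_2\le\sqrt k$ and contributes only $\tfrac{\lambda\mu}{2}\sqrt k\,\|e\|_2^2$, while the off-support coupling $\langle \s_{S^c}, r\rangle$ must be absorbed into the $\|(\bm\alpha_{\hatT})_{S^c}\|_1$ term already present, using Assumption~\ref{A2}, the sample-size threshold $n>n_0$ (whose $d^2\mu^2$ term is precisely what renders this remainder lower order), and the hypothesis $\Delta \le C_1(C_{n,\delta}^*/\mu)^2$, which keeps $\|e\|_2$ small enough that the quadratic remainder is dominated by the linear curvature. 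Finally I would combine the cone bound, the interpolation bound, and the quadratic-growth lower bound $\errnhat \ge C_{n,\delta}^*\|e\|_2^2$ (Assumption~\ref{A1}) into a scalar quadratic inequality in $\|e\|_2$, whose solution gives $\|e\|_2 \lesssim \frac{\lambda\sqrt k\,L}{C_{n,\delta}^*} + \sqrt{\Delta/C_{n,\delta}^*}$; substituting the prescribed $\lambda$ and the bound on $\Delta$ makes the first term dominate and reproduces $\|\hatT-\T^*\|_2 \lesssim \frac{L}{C_{n,\delta}^*}\sqrt{kd/n}$, up to the suppressed $\sigma$ and logarithmic factors.
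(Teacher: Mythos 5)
Your overall architecture matches the paper's: Hölder in the Hadamard domain to control the noise cross term with $\|\HH \mathbf{z}_{D_n}\|_\infty \le \lambda/4\sqrt{2^d}$, a cone/compatibility condition extracted from first-order stationarity rather than from a basic inequality, the reduction $\|\HH(f_{\hatT}(\X)-f_{\T^*}(\X))\|_1 \lesssim \sqrt{k}\,\|\nu\|_2 \lesssim \sqrt{k\,2^d}\,(L\|e\|_2+\mu\|e\|_2^2)$, and a final application of the quadratic growth condition. You also correctly identify the crux: the coefficient-domain Taylor remainder $\langle \s, r\rangle$ with $r = \HH A$, $\|A\|_\infty \le \mu\|e\|_2^2$. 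But your proposed resolution of that crux does not work. Splitting $\s = \s_S + \s_{S^c}$ handles the on-support part, but $\langle \s_{S^c}, r_{S^c}\rangle$ cannot be ``absorbed into $\|(\bm{\widehat{\alpha}})_{S^c}\|_1$'': $r_{S^c}$ bears no pointwise relation to $(\bm{\widehat{\alpha}})_{S^c}$ (the latter can be arbitrarily small while $r_{S^c}$ is not), so the only generic bound is $\|r\|_1 = \|\HH A\|_1 \le \sqrt{2^d}\,\|A\|_2 \le 2^d \mu\|e\|_2^2$, which after the $\lambda/\sqrt{2^d}$ prefactor costs $\lambda\mu\,2^{d/2}\|e\|_2^2$ and forces a sample threshold exponential in $d$. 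The paper closes exactly this hole with a nontrivial fact (its Lemma on $\max_{\|v\|_\infty\le 1}\|\HH v\|_1 \le (2d+1)\sqrt{2^d}$, imported from a result of Figula et al.), which improves $2^d$ to $(2d+1)\sqrt{2^d}$ and is what makes the $d^2\mu^2$ term in $n_0$ the right scale. Your remark that $n>n_0$ ``renders this remainder lower order'' presupposes this bound rather than proving it.

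There is a second gap in how you deploy the interpolation hypothesis. Writing $\errnhat \le \Delta + \frac{\lambda}{2\sqrt{2^d}}\|\HH(f_{\hatT}(\X)-f_{\T^*}(\X))\|_1$ and then invoking QG yields a scalar inequality whose solution is $\|e\|_2 \lesssim \lambda L\sqrt{k}/C^*_{n,\delta} + \sqrt{\Delta/C^*_{n,\delta}}$. Under the stated hypothesis $\Delta \le C_1 (C^*_{n,\delta}/\mu)^2$ the second term is of order $\sqrt{C^*_{n,\delta}}/\mu$ --- a constant independent of $n$ --- so it is \emph{not} dominated by the first term for large $n$, and the claimed rate $\sqrt{kd/n}$ does not follow. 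The paper avoids any additive $\Delta$-term: interpolation enters only through the bound $\mathbb{E}_{x\sim\mathrm{Unif}(D_n)}[|f_{\hatT}(x)-y(x)|] \le \sqrt{\mathcal{L}_n(\hatT)} \le \sqrt{\Delta}$ multiplying the second-order remainder, i.e., it shrinks the \emph{coefficient} of $\|e\|_2^2$ from $C^*_{n,\delta}$ to something still positive, after which two applications of QG give $\|e\|_2 \le 3\lambda L\sqrt{k}/C^*_{n,\delta}$ with no residual constant. You would need to restructure your argument so that $\Delta$ appears only multiplicatively against $\|e\|_2^2$, not additively.
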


\begin{remark}
For pseudo-Boolean functions, the log-covering number in any norm up to log factors is $\approx \log \binom{2^d}{k} \approx kd$. Therefore, an algorithm which returns a function which is an exact minimizer of the mean squared error among all functions with a $k$-sparse polynomial representation, $\T^{\mathrm{LS}}_k$ admits the guarantee $\| \T^{\mathrm{LS}}_k - \T^* \|_2 \lesssim \sqrt{kd/n}$ up to scaling constants, with high probability, up to log factors. However, when $\Delta$ is large, the learner considered in \Cref{theorem:111} is not constrained to return the exact minimizer of the squared error (subject to the sparsity constraint). Thus, under the additional condition of first-order stationarity, \Cref{theorem:111} imposes a much weaker condition, $\mathcal{L}_n (\hatT) \lesssim \Delta$ rather than $\hatT \in \argmin \mathcal{L}_n (\T)$.
\end{remark}

Finally, in the context of the previous two results, we prove a lower bound showing statistical optimality under the imposed assumptions.

% \aaa{can this go to the discussion section at the end?} The $\sqrt{\frac{kd}{n}}$ bound can be interpreted differently as $\sqrt{\frac{\log \left( \binom{2^d}{k} \right)}{n}}$. The space of functions with a $k$-sparse Hadamard transform can be covered by an $\epsilon$-net of size $\binom{2^d}{k}$ ignoring $\epsilon$ dependency, which is polynomial. This bound can be achieved by naively running LASSO with all parameters, which is inefficient (run time is exponential in the dimension $d$).
% \begin{remark}
% A sufficient condition for the interpolation criterion to be met is if  learnt parameter $\hatT$ is sufficiently close to $\T^*$ in that,
%     \begin{equation} \label{eq:close}
%         \errnhat \le \left( \frac{\Delta}{2 \mu} \right)^2
%     \end{equation}
%     where $\Delta = C_{n,\delta}^* - (2d+1) \mu \lambda - 2 \mu \sqrt{\mathcal{L}_n (\T^*)} - \frac{3}{2} \lambda \mu \sqrt{k}$. 
% \end{remark}

\begin{theorem} \label{theorem:lb}
Suppose $d \ge 3$ and $k \le 2^d/4$. Then, there exists a parameter class $\Theta$, and an associated function class $\mathcal{F} = \{ f_{\T} : \T \in \Theta \}$ such that
% \begin{enumerate}
%     \item The empirical loss $\errn$ satisfies \Cref{A0} and \Cref{A1} with parameters $C^*_{n,\delta} \in [1/2,2]$ with probability $\ge 1 - \exp()$ over the dataset, \nived{compute it}
%     \item The function class $\mathcal{F}$ satisfies \Cref{A2} with $\mu = 0$, and
%     \item The one-point smoothness condition in \Cref{A3} is also satisfied with $L \in [1/2,2]$ with probability $\ge 1- \exp()$ over the dataset.
% \end{enumerate}
% For this class $\mathcal{F}$,
for any learner $\hatT$, there exists a ground truth function $f_{\T^*} : \{ \pm 1 \}^d \to \mathbb{R}$ having a $k$-sparse polynomial representation, such that given a sufficiently large dataset of $n$ samples,
\begin{enumerate}
    \item Assumptions~\ref{A0}, \ref{A1}, \ref{A2} and \ref{A3} are satisfied with constants $L = 1$, $\mu = 0$ and $C_{n,\delta^*} \in \left[ \frac{1}{2}, \frac{3}{2} \right]$ with probability $\ge 1 - \delta$.
    \item $\mathbb{E} \left[ \| \hatT - \T^* \|_2 \right] \gtrsim \sigma \sqrt{\frac{kd}{n}}$, where $\sigma^2$ denotes the noise variance in the observed labels.
\end{enumerate}
\end{theorem}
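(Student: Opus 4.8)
The plan is to establish a minimax lower bound via an information-theoretic (Fano) reduction from estimation to hypothesis testing over a well-conditioned sparse instance, thereby certifying that the $\sigma\sqrt{kd/n}$ upper bounds of \Cref{theorem:000} and \Cref{theorem:111} are tight. Since \Cref{A2} is imposed with $\mu = 0$, the hard instance must be \emph{linear} in $\T$, so I would take the linear class over the Walsh basis, $f_\T(x) = \langle \T, \phi(x) \rangle$ with $\phi(x) = [\chi_z(x)]_{z \in \{\pm 1\}^d}$ and $\chi_z(x) = \prod_{i : z_i = +1} x_i$, under the uniform input distribution on $\{\pm 1\}^d$; the parameter class $\Theta$ is a bounded set of $k$-sparse coefficient vectors, so each $f_\T$ has a $k$-sparse polynomial representation and every difference $\T - \T^*$ is $2k$-sparse. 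The ``soft'' assumptions are then immediate: linearity gives $\nabla^2 f_\T(x) = 0$, i.e.\ $\mu = 0$; and orthonormality of the Walsh functions under the uniform measure gives $\E_x[\nabla f_{\T^*}(x)\, \nabla f_{\T^*}(x)^\top] = \E_x[\phi(x)\phi(x)^\top] = \I$, so \Cref{A3} holds with $L = 1$ exactly.

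Next I would verify the RSI (\Cref{A0}) and QG (\Cref{A1}) conditions with $C^*_{n,\delta} \in [1/2, 3/2]$. For this linear class $\errn = (\T - \T^*)^\top \Sigma_n (\T - \T^*)$ with $\Sigma_n = \frac1n\sum_i \phi(x^i)\phi(x^i)^\top$, so both conditions reduce to a two-sided control of the quadratic form $v^\top \Sigma_n v$ over $2k$-sparse directions $v = \T - \T^*$. Since $\E \Sigma_n = \I$, a matrix-Bernstein estimate for each $2k \times 2k$ principal submatrix, combined with a union bound over the $\binom{2^d}{2k}$ supports, shows $\tfrac12\|v\|_2^2 \le v^\top \Sigma_n v \le \tfrac32\|v\|_2^2$ for all $2k$-sparse $v$ with probability $\ge 1 - \delta$, once $n \gtrsim kd$ (consistent with $n > n_0$ when $\mu = 0,\ L = 1$). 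This restricted-eigenvalue property delivers RSI and QG with a constant in $[1/2,3/2]$ \emph{simultaneously} for every $k$-sparse ground truth, which is exactly what an adversarial choice of $\T^*$ requires.

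For the lower bound itself I would build a packing $\{\T^{(1)}, \dots, \T^{(M)}\} \subseteq \Theta$ of $k$-sparse vectors by a Varshamov--Gilbert argument: drawing supports from a constant-weight binary code with minimum Hamming distance $\asymp k$ and assigning each active coordinate a common magnitude $a \asymp \sigma\sqrt{d/n}$ yields $\log M \gtrsim k\log(2^d/k) \asymp kd$ codewords whose pairwise $\ell_2$ distances are all $\asymp \sqrt{k}\,a \asymp \sigma\sqrt{kd/n}$ (the minimum support distance bounds the separations below, and $k$-sparsity bounds them above). In the Gaussian linear model the KL divergence between the data laws under $\T^{(i)}$ and $\T^{(j)}$ equals $\tfrac{n}{2\sigma^2}(\T^{(i)} - \T^{(j)})^\top \Sigma_n (\T^{(i)} - \T^{(j)}) \lesssim \tfrac{n}{\sigma^2}\, k a^2 \asymp kd \asymp \log M$. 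Applying Fano's inequality to this $M$-ary testing problem gives a constant probability of error, and the standard testing-to-estimation reduction yields $\E[\|\hatT - \T^*\|_2] \gtrsim \sigma\sqrt{kd/n}$ for the hardest codeword; part~1 of the statement holds for this $\T^*$ by the verification above, and $d \ge 3,\ k \le 2^d/4$ guarantee the code is large enough.

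The step I expect to be the main obstacle is threading the packing construction through all the competing constraints at once: the separations must be large enough to force $\Omega(\sigma\sqrt{kd/n})$ estimation error, yet the pairwise KL budget must stay at $O(\log M)$ so that Fano returns a \emph{constant} (not vanishing) error probability, all while the design stays well-conditioned with $C^*_{n,\delta}$ pinned inside the narrow band $[1/2, 3/2]$. Extracting the sharp metric-entropy factor $k\log(2^d/k) \asymp kd$ --- rather than a loose $\Theta(k)$ --- is precisely where the Varshamov--Gilbert bound over sparse supports is essential, and keeping the restricted-eigenvalue concentration uniform over all supports appearing in the code while simultaneously matching the KL to $\log M$ is the delicate part.
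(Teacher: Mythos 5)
Your overall strategy --- a linear function class over the Walsh basis (so $\mu=0$ and $L=1$ are immediate), a packing of $k$-sparse coefficient vectors with pairwise separation $\asymp \sigma\sqrt{kd/n}$, and a Fano reduction from estimation to testing --- is exactly the route the paper takes. The only cosmetic difference in the information-theoretic step is that you bound pairwise KL divergences, while the paper bounds the mutual information $I(\T^*;D_n)$ by the average KL to the fixed reference law $P_0$ of the identically-zero function, which by Parseval evaluates to $nk\Delta^2/2\sigma^2$ directly; both give the same rate. (One small slip on your side: the KL between the joint laws of $D_n$ involves the \emph{population} covariance $\E_x[\phi(x)\phi(x)^\top]=I$, not the empirical $\Sigma_n$, since the inputs are part of the data law.)

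The one substantive deviation is in your verification of Assumptions~\ref{A0} and \ref{A1}. You propose a restricted-eigenvalue argument: matrix Bernstein on each $2k\times 2k$ principal submatrix of $\Sigma_n$ plus a union bound over the $\binom{2^d}{2k}$ supports, so that $\tfrac12\|v\|_2^2 \le v^\top\Sigma_n v\le\tfrac32\|v\|_2^2$ holds only for $2k$-sparse $v$, with $n\gtrsim kd$. But Assumptions~\ref{A0} and \ref{A1} quantify over \emph{every} $\T\in\R^m$ (here $m=2^d$) with no sparsity restriction on $\T-\T^*$, so the restricted bound does not establish them as stated; a learner's $\hatT$ need not be sparse. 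The paper instead applies matrix Bernstein to the full $2^d\times 2^d$ empirical second-moment matrix, obtaining $\tfrac12 I\preceq\Sigma_n\preceq\tfrac32 I$ globally at the cost of requiring $n\gtrsim(d+\log(1/\delta))\,2^{d}$ --- which is what the hypothesis ``given a sufficiently large dataset'' is absorbing. The fix is immediate (use the global concentration), but as written your part 1 is not proved. A second caveat: your entropy count $\log M\gtrsim k\log(2^d/k)$ is the correct metric entropy, but it is $\asymp kd$ only when $k\le 2^{cd}$ for some $c<1$; at the boundary $k=2^d/4$ permitted by the hypothesis it degenerates to $\asymp k$, so the claimed $\sqrt{kd}$ separation-versus-KL tradeoff cannot be threaded there (the paper's own packing lemma silently suffers from the same issue in that regime).
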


{\bf Extensions to cases with multiple local minima}.
While we focus on general parametric function classes in \Cref{theorem:000,theorem:111}, for the specific case of DNNs, the issue of ``permutation invariance'' can appear. In particular, a permutations (i.e., relabeling) of the neurons in the same layer of a network can result in a different network with the same functional relationship between the input and the output. In this case, Assumptions~\ref{A0} and \ref{A1} cannot hold globally for all $\T \in \mathbb{R}^m$, as the loss $\errn$ and its gradient become $0$ at any parameter $\T^*_{\sigma}$, where $\T^*_\sigma$ denotes the weight matrices corresponding to a functionally invariant permutation $\sigma$ of the neurons of the base network with parameter $\T^*$. However, even in this case, we can extract a guarantee from \Cref{theorem:000} and \Cref{theorem:111} by modifying the underlying assumptions slightly.

In particular, consider an arbitrary norm $\| \cdot \|$ and a nearest neighbor partitioning of the parameter space depending on the nearest permutation of $\T^*$. Namely, for a permutation $\sigma$, $\T \in \mathcal{K}_\sigma \subset \mathbb{R}^m$ iff $\sigma = \argmin_{\sigma'} \| \T - \T^*_{\sigma'} \| $. Suppose for each $\T \in \mathbb{R}^m$, belonging to the partition $\mathcal{K}_\sigma$, the condition, $\langle \T - \T^*_\sigma , \nabla_{\T} \errnsigma \rangle \ge C^*_{n,\delta} \| \T - \T^*_\sigma \|_2^2$ (modified Assumption \ref{A0}) and $\errnsigma \ge C^*_{n,\delta} \| \T - \T^*_\sigma \|_2^2$ (modified Assumption \ref{A1}) hold. In words, instead of globally, we impose the RSI and QG conditions in a local neighborhood (corresponding to the partition $\mathcal{K}_\sigma$) around each permutation $\T_\sigma^*$ of the ground truth parameter $\T^*$. This avoids the issue alluded to before, since the modified versions of Assumptions~\ref{A0} and \ref{A1} are now true trivially at $\T = \T^*_\sigma$, corresponding to any within-layer permutation of the neurons of $\T^*$.

Corresponding to the new modified assumptions, a learner which returns any $\hatT$ which is a stationary point under \Cref{A0} (resp. approximate first order stationary interpolator under \Cref{A1}), guarantees to approximate $\T_\sigma^*$, the nearest permutation of $\T^*$ to it in the norm $\| \cdot \|$. The proofs of these results follow identically to \Cref{theorem:000} and \Cref{theorem:111}. In particular, replacing $\T^*$ by the parameter $\T^*_\sigma$ where $\hatT \in \mathcal{K}_\sigma$ and following the same argument completes the proof of the result showing that $\T^*_\sigma$ can be recovered approximately.

In summary, rather than assuming that the loss $\errn$ is globally bowl shaped as in \Cref{A1}, which cannot hold globally because of the permutation invariance of neural networks, it suffices to assume that the loss function is locally bowl shaped around each functionally invariant permutation $\T_\sigma^*$ of the ground truth parameter $\T^*$. This line of reasoning can be extended to incorporate other symmetries in the function class which prevent exact parameter recovery, as in the case of permutation invariance of neural networks. The corresponding parameter recovery guarantee is also be modified to be up to this symmetry.

\begin{figure}[t]
 \centering
 \includegraphics[width=\textwidth]{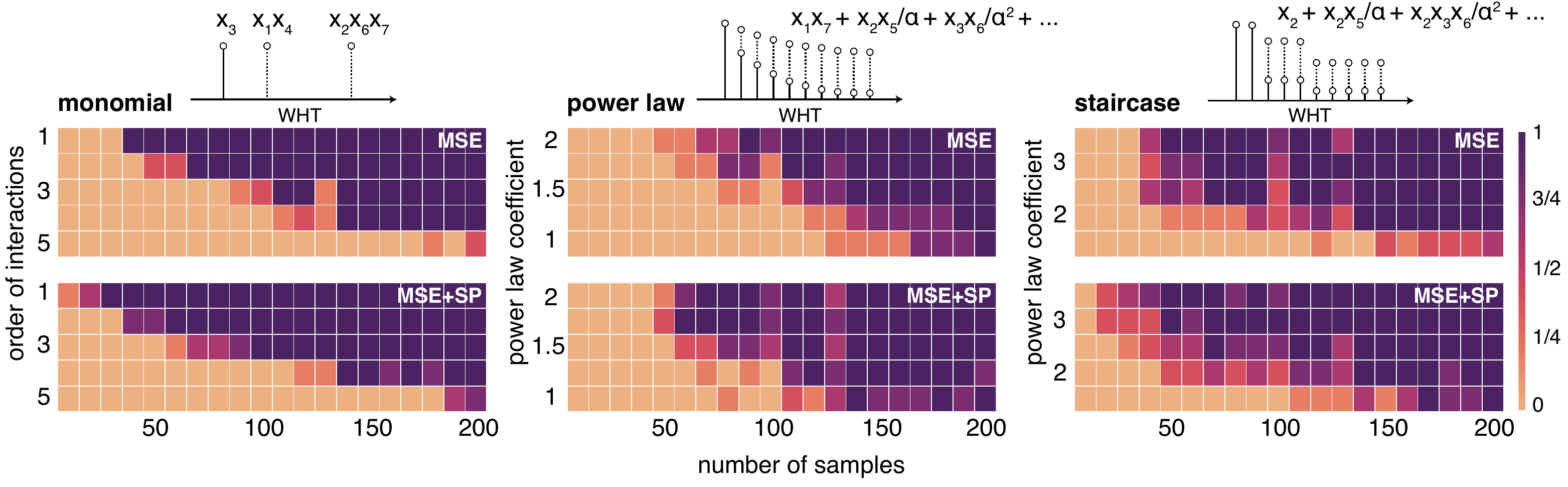}
 \caption{The plots demonstrate the generalization power of a depth-$4$ feed-forward neural network in learning $3$ classes of sparse pseudo-Boolean functions $f(x):\{\pm 1\}^{13}\rightarrow \mathbb{R}$ using the mean squared error (MSE) loss before (first row) and after (second row) adding the spectral (SP) regularizer, as a function of the number of training samples. \textsc{Monomial} are $1$-sparse functions in Walsh-Hadamard transform (WHT) with an increasing order of interactions ($1$ to $5$). \textsc{Power law} are $10$-sparse functions in WHT with second order interactions and coefficients set using a power law function. \textsc{Staircase} are $18$-sparse functions in WHT with $3$ first order, $6$ second order, and $9$ third order interactions, each with an equal coefficients and set using a power law function. The heat maps show the fraction of times (among $5$ random independent trials) the models generalize on unseen data with the coefficient of determination larger than R$^2\geq0.45$ on the test data points.}
 \label{fig:dl_phase_tr}
\end{figure}

\section{Empirical Studies} \label{sec:experiment}

\begin{figure}[t]
    \centering
    \includegraphics[width=1\textwidth]{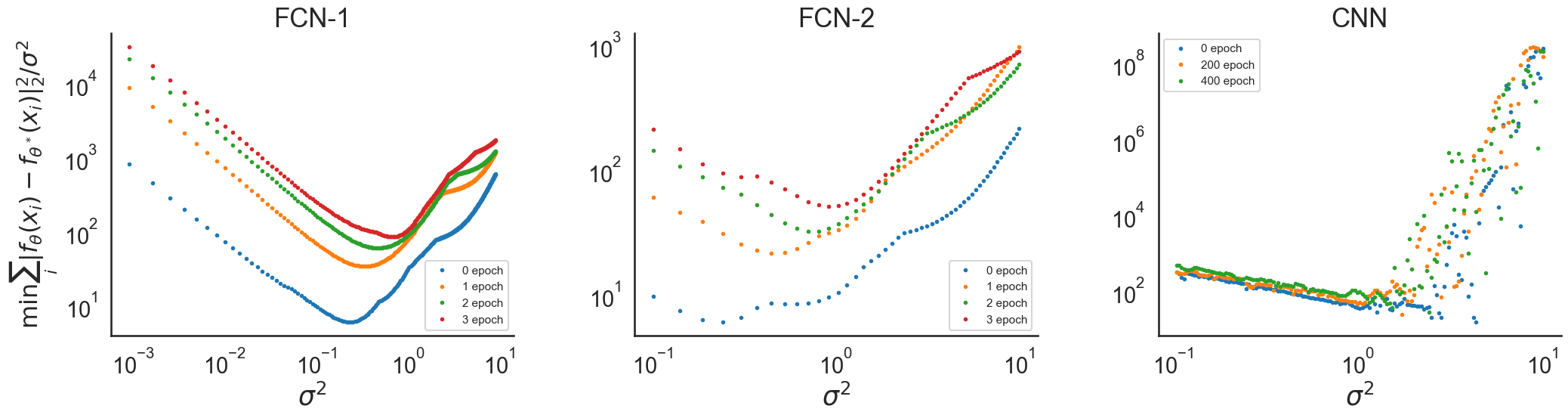}
    \caption{We estimate the empirical lower bound $\hat{C}_{n,\delta}^\ast$ in the quadratic growth (QG) condition by finding the minimum change in the network's output as a result of perturbations to the weights with noise drawn from the normal distribution $W \sim \mathcal{N} (0,\sigma^2)$. Experiments on $2$ fully connected networks (FCNs) and a convolutional neural network (CNN) reveals empirical lower bounds.  }
    \label{fig:val_assump}
\end{figure}

We design our experiments in a way to address these questions:
\begin{itemize}
  \setlength{\itemsep}{1.5ex}
  \setlength{\parskip}{0pt}
  \setlength{\parsep}{0pt}   
  \addtolength{\itemindent}{-2em}
    \item Does SP improve generalization accuracy in learning sparse polynomials?
        \item When does QG hold for common DNN architectures? what is the empirical lower bound $\hat{C}_{n,\delta}^\ast$?
    \item Does SP improve generalization accuracy in real-world problems?
    \item How does $L_1$-regularization compare to SP-regularization in practice in terms of generalization? 
\end{itemize}

{\bf Sparse polynomials}. We compare the generalization performance of a depth-$4$ fully connected network with and without the SP regularization on $3$ classes of sparse polynomials in Fig.~\ref{fig:dl_phase_tr}: 1) \textsc{Monomial} are randomly-drawn $1$-sparse functions in WHT with increasing order of interactions from $1$ to $5$. 2) \textsc{Power Law} are randomly-drawn $10$-sparse functions in WHT all with order-$2$ interactions and coefficients set based on a power law function with decreasing exponent. 3) \textsc{Staircase} are randomly-drawn $18$-sparse functions in the WHT with $3$ first order, $6$ second order, and $9$ third order interactions, each with equal coefficients and set based on a power law function. These synthetic sparse polynomials are inspired by physical models for real-world pseudo-Boolean functions (e.g., protein functions~\cite{brookes2022sparsity,qian2001protein,qin2018power}). We observe clear transitions in generalization power: it is harder in terms of sample cost to learn nonlinear models with higher order interactions and lower sparsity exponent (i.e., denser functions in WHT). This analysis adds a new axis to the accuracy-vs-sample-cost phase transition curves in compressed sensing~\cite{donoho2011noise}. SP regularization consistently improves the transition curves for generalization power as a function of order and sample cost.

{\bf QG condition}. To empirically estimate the lower bound $\hat{C}_{n,\delta}^\ast$, we follow this procedure. We collect a set of training data points $(x^i,y^i)_{i=1}^{n=1000}$, initialize a DNN at ${\boldsymbol \theta}^{\ast}$ (see below for more detail), and repeatedly perturb the weights with Gaussian noise to generate ${\boldsymbol \theta}^{\text{pert},k} = {\boldsymbol \theta}^{\ast}+W_k$, where $W \sim \mathcal{N} (0,\sigma^2 I)$ is independent and normally distributed, for $k=1,\cdots,K$. For each $k$, we compute the ratio $\sum_{i=1}^n ( f_{\boldsymbol \theta^{\text{pert},k}}(x^i)-f_{\boldsymbol \theta^*}(x^i) )^2 / \sigma^2$ and report the minimum value across $k \in [K]$ as a function of $\sigma^2$ in Fig.~\ref{fig:val_assump}. Next, we train the DNN with SP regularization for certain number of epochs to arrive at a new ${\boldsymbol \theta}^{\ast}$ and repeat the same procedure again to observe how the lower bound changes with training. The outputs $y$ are generated from binary vectors $x\in\{\pm 1\}^{13}$ using the (randomly-drawn) sparse polynomial $f(x) = 3x_1 + 4x_2x_3 + 5x_4x_5 + x_{12}$. We choose sufficiently small architectures compared to the number of perturbations to ensure a reliable empirical estimate for ${\hat C}_{n,\delta}^\ast$. Fig.~\ref{fig:val_assump} shows the results for Xavier-initialized, depth-$2$ FCNs with $222$ parameters, perturbed $K = 100$ (FCN-1) and $K = 500$ (FCN-2) times, with different ranges of $\sigma^2$ (learning rate $= 5\times 10^{-3}$). Fig.~\ref{fig:val_assump} also shows the result for a depth-$4$ CNN with $1551$ parameters, initialized by a trained network using MSE loss and perturbed $K = 6000$ times (learning rate $=10^{-3}$). The plots demonstrate that the QG condition is satisfied for these instances with empirical lower bounds $6.02$ and $19.5$, respectively. The bound also improves drastically with training; moreover, initializing the networks with trained models consistently improves the lower bound. See~\Cref{app:theorem:5} for more detailed empirical studies and discussions of these results.

\begin{figure}[t]
    \centering
    \includegraphics[width=1\textwidth]{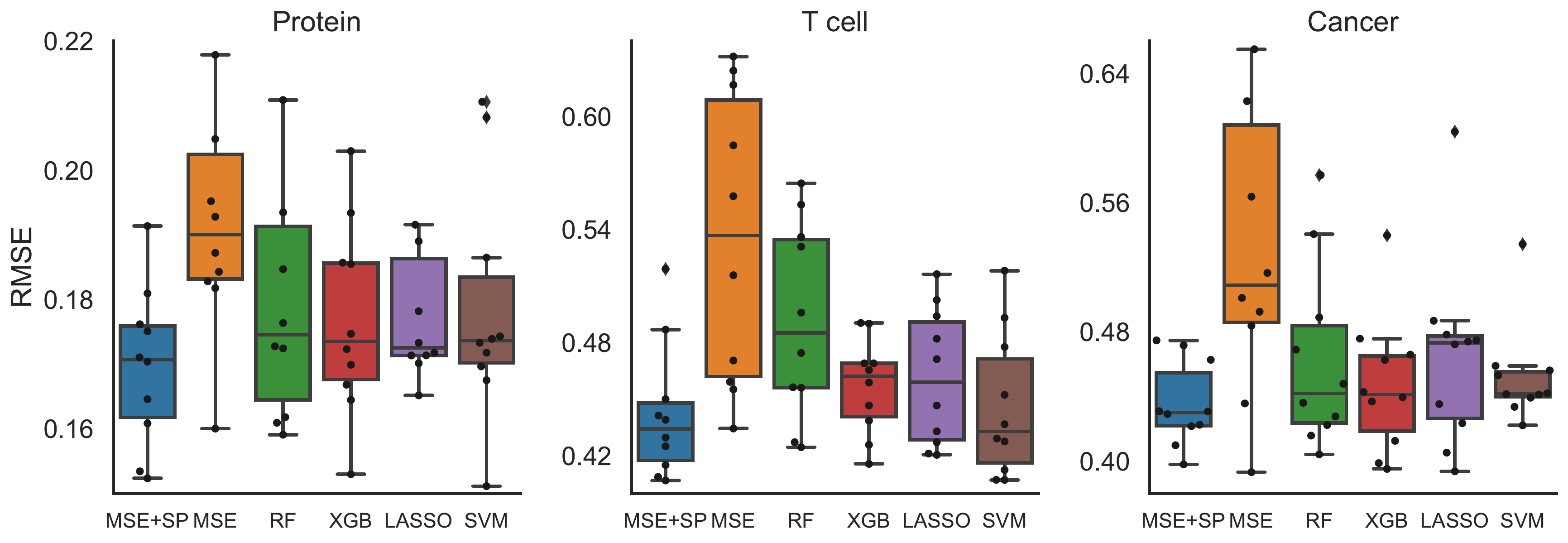}
    \caption{The plot demonstrates the generalization error of depth-$4$ neural networks with the mean squared error (MSE) loss before and after adding the spectral (SP) regularizer as compared to Random Forest, XGBoost, LASSO, and SVM. SP drastically improves the generalization error of neural network (reduced mean and variance) and allows for a superior performance over LASSO with $L_1$ norm regularization over the polynomial representation directly. }
    \label{fig:realworld}
\end{figure}
    
{\bf Real world experiments}. We compare the generalization error with and without the SP regularization to standard baseline algorithms for data-scarce learning in the real-world (Fig.~\ref{fig:realworld}). \textsc{Protein} is a dataset which measures the fluorescence level of $2^{13}$ protein sequences that link two variants of the \emph{Entacmaea quadricolor} proteins different at exactly $13$ amino acids ~\cite{poelwijk2019learning}. \textsc{T cell} is a dataset which measures the DNA repair outcome of T cells (average length of deletions) on $1521$ sites on human genome after applying double-strand breaks (DSBs) using CRISPR ~\cite{leenay2019large}. \textsc{Cancer} is a similar dataset on $287$ sites on cancer genome~\cite{leenay2019large}. In the two latter datasets, a one-hot-encoded context sequence of size $20$ around the DSB is used as the input to predict the DNA repair outcome. Following the low-$n$ experimental setting in~\cite{biswas2021low}, we use a subset of $30$ sequences drawn uniformly at random for training and validation and use the rest for testing. We repeat each experiments $10\times$ with independent random splits of the data and report the RMSE in predicting the phenotype. We initialize DNNs using Xavier (equal seeds). We use the default hyperparamters in scikit-learn for the baselines. Despite minimal hyperparameter tuning and no architecture search, SP allows for a competitive performance. In particular, the SP-regularized model outperforms LASSO which applies the $L_1$ norm penalty on the coefficients in the polynomial representation (Fig.~\ref{fig:other_regularizations} in~\Cref{app:theorem:5} compares SP with different regularizers).

\section{Related Works}
A recent line of theoretical works on learning pseudo-Boolean functions demonstrate a staircase-like property of gradient descent in learning DNNs in that the WHT coefficients corresponding to higher order monomials (e.g., $x_1x_2x_3$) are reachable from lower order ones along increasing chains (i.e., $x_1x_2$ and $x_1$), and are thus learnable in polynomial time and sample cost~\cite{abbe2021staircase}. Other works have shown that under certain distributions, low order parity functions are learnable by means of gradient decent on depth-$2$ networks, while they cannot be learned efficiently using linear methods~\cite{daniely2020learning}. These analyses are limited to certain DNN architectures or assume (linear) approximations to DNNs (e.g., neural tangent kernels) which entirely disallows the analysis of spectral regularization as they manifest only in nonlinear function classes.

Spectral bias of DNNs have been the subject of several other empirical and theoretical works~\cite{yang2019fine}. Approximations to the Fourier transform of two-layer~\cite{zhang2019explicitizing} and multi-layer~\cite{rahaman2019spectral} ReLU networks show that these networks have a learning bias towards low frequency functions~\cite{xu2019frequency}. To improve the limitations of DNNs in learning high frequency components, empirical works use Fourier features explicitly as part of the input~\cite{tancik2020fourier}. A parametrization of polynomial DNNs has also been shown to speed up the learning of higher frequency components in two-layer networks~\cite{choraria2022spectral}. Different notions of spectral priors have also been empirically investigated in graph neural networks~\cite{li2020fast} and elsewhere~\cite{yoshida2017spectral}. These results support the implicit bias of DNNs towards dense and low-frequency spectral representation and only motivate our analysis of sparsity as an explicit spectral prior.

\section{Conclusion and Future Vision}

Our theoretical analysis and empirical validations demonstrate the statistical benefits of spectral regularization for learning pseudo-Boolean functions. Future works involve analyzing the algorithms that can achieve such statistical performance. It would be tempting to ask under what conditions stochastic gradient descend achieves stationary points with restricted secant and quadratic growth conditions (for which we have clear empirical evidences). Further, the statistical analysis of the computationally-efficient optimization algorithms for spectral regularization is still poorly understood. On the algorithmic side, spectral algorithms for pseudo-Boolean functions can be extended to generalized Fourier transform to accommodate a larger class of data-frugal combinatorial problems. Overall, our work provides a concrete framework to connect combinatorial algorithms with strong theoretical guarantees and nonlinear machine learning models with strong generalization power.   

\bibliography{arxiv.bib}
\bibliographystyle{ieeetr}

\appendix

\section{Additional Empirical Validations} \label{app:theorem:5} 

\subsection{Visualization of real-world functions in WHT}
In this subsection, we visualize the combinatorial function evaluations and the WHT of the evaluation vector, that is, the left and right hand sides of the equation below:
\begin{align}
    \begin{bmatrix} f(x) : x \in \{ \pm 1 \}^d \end{bmatrix}^T = \HH \begin{bmatrix} \alpha_z : z \in \{ \pm 1 \}^d \end{bmatrix}^T,
\end{align}
for the experimental data obtained from the fluorescence protein~\cite{poelwijk2019learning}. In addition to these plots, we have attached videos as supplemental materials where we visualize the learning trajectory of a DNN initialized using Xavier initialization under data-scarce (red) and data-sufficient (blue) regimes. DNNs start from a local minima that does not have a well-structured WHT representation, and then gets sparser with training. However, if a sufficient amount of data is not available for training, the network does not converge to a good solution. Spectral regularisation enable DNN converge to the sparse solution in the data-scarce regime.

\begin{figure}[!h]
    \centering
    \includegraphics[width=0.75\textwidth]{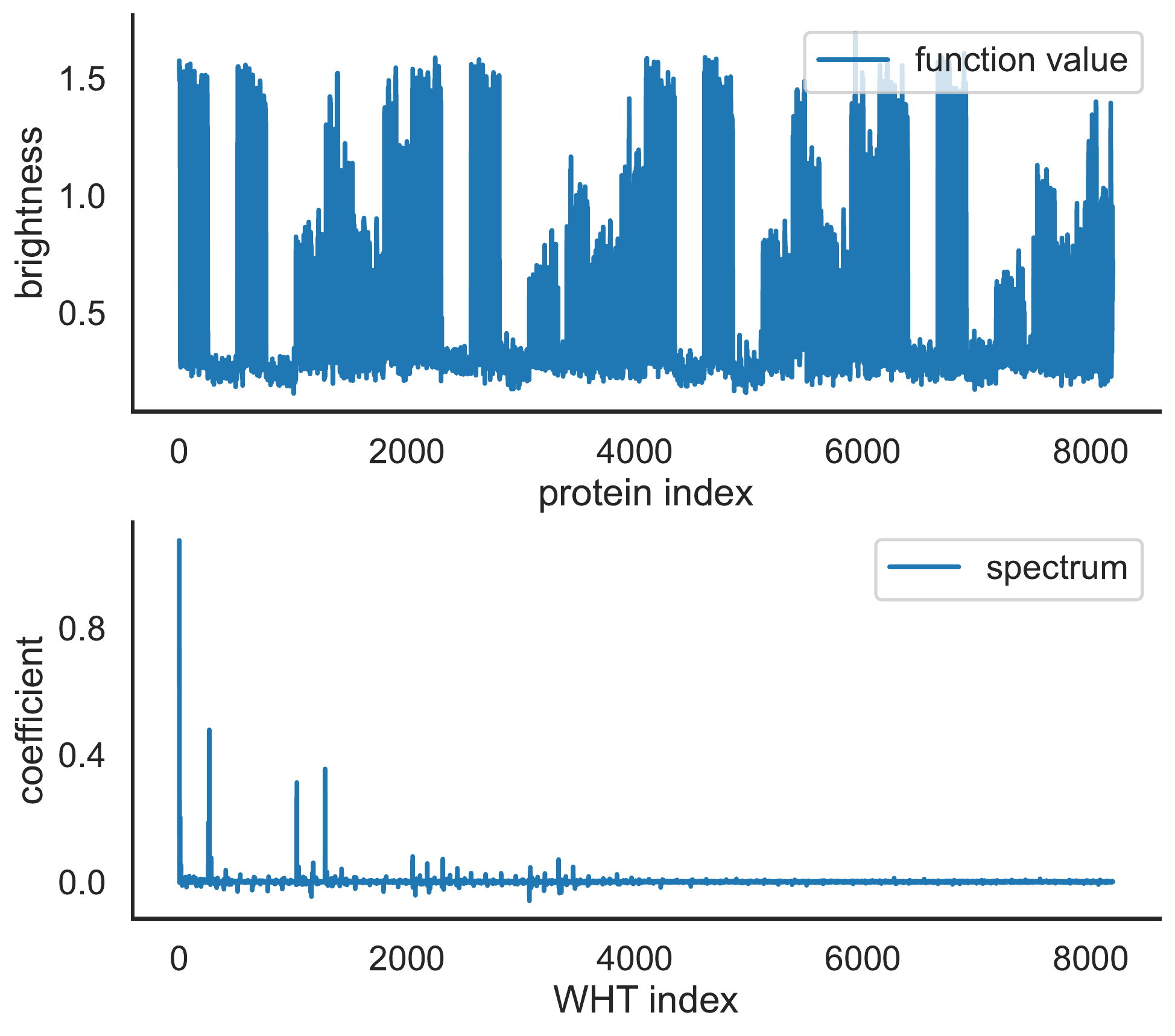}
    \caption{Combinatorial visualization of the brightness of $2^{13}$ proteins (top) and their Walsh-Hadamard transform (bottom) linking two variants of a fluorescence protein that are different in $13$ locations on their amino acid chain~\cite{poelwijk2019learning}. The brightness is a pseudo-Boolean function which maps from $f:\{ \pm 1\}^{13} \rightarrow \mathbb{R}$. The sparse spectrum reveals low and high order interactions among amino acid sites on the protein.}
    \label{fig:wht-protein}
\end{figure}

\subsection{Convergence: training and validation}
In this subsection, we include additional empirical results which focus on the convergence properties of spectral regularization for training DNNs. The first objective we study is in validating whether stochastic gradient descent (SGD) indeed converges to approximate first order stationary interpolators (\Cref{def:afosi}). The goal of these experiments are to see whether, upon running SGD, the MSE loss gets sufficiently small even when the training loss, against which stochastic gradients are computed, is augmented with spectral regularization. In Fig.~\ref{fig:convergence}, we plot the empirical training and validation loss of DNNs trained both with the MSE loss and the additional spectral regularization. We use a depth-$4$ fully connected network (learning rate$=1\times10^{-1}$) to train on the fluorescence protein~\cite{poelwijk2019learning} dataset (also used in the main paper).

\begin{figure}[!h]
    \centering
    \includegraphics[width=1\textwidth]{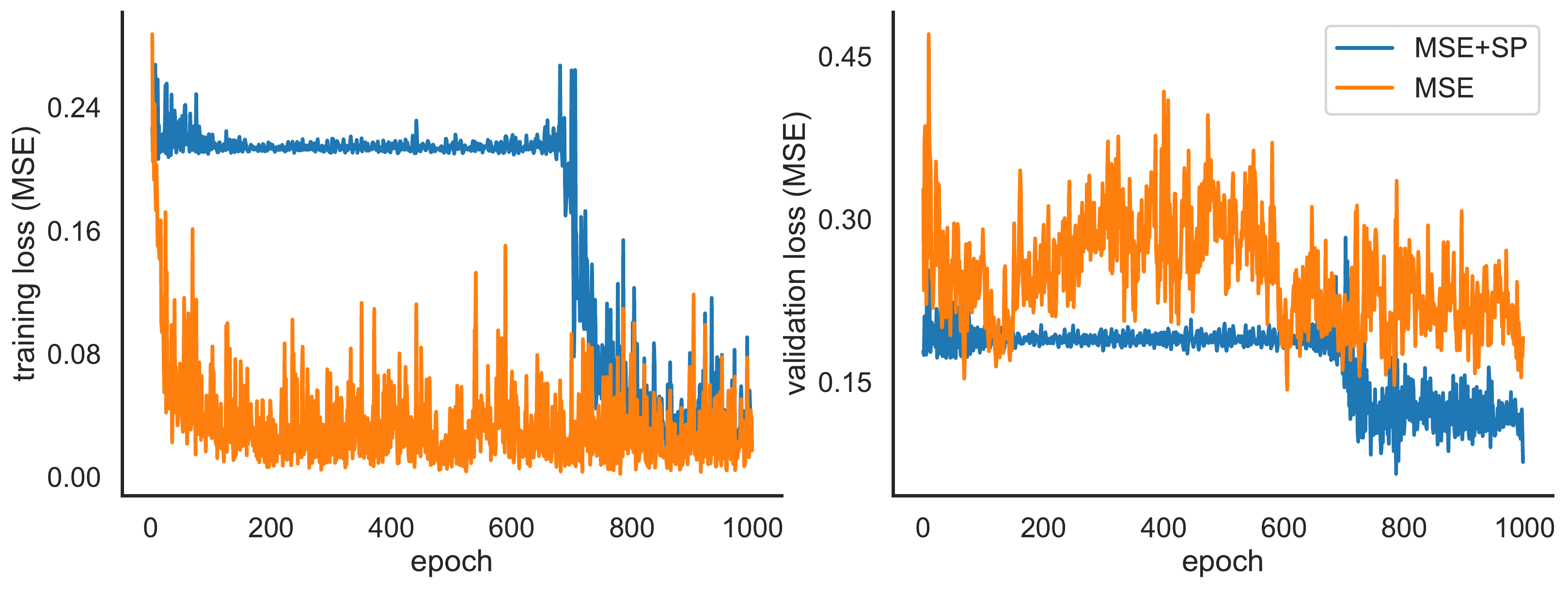}
    \caption{We plot the training and validation error in terms of the empirical mean squared error (MSE) in predicting the brightness of a fluorescence protein~\cite{poelwijk2019learning} using DNNs trained with and without the spectral regularization. We use a depth-$4$ fully connected neural network with Xavier initialization in both cases. The plots demonstrate that 1) the training MSE eventually gets sufficiently small, even when the original loss function on which SGD is run is augmented with spectral regularization, even when network uses a random initialization and 2) spectral regularization consistently allows for a significantly better generalization gap all along the trajectory of training process.}
    \label{fig:convergence}
\end{figure}

\begin{figure}[!h]
    \centering
    \includegraphics[width=1\textwidth]{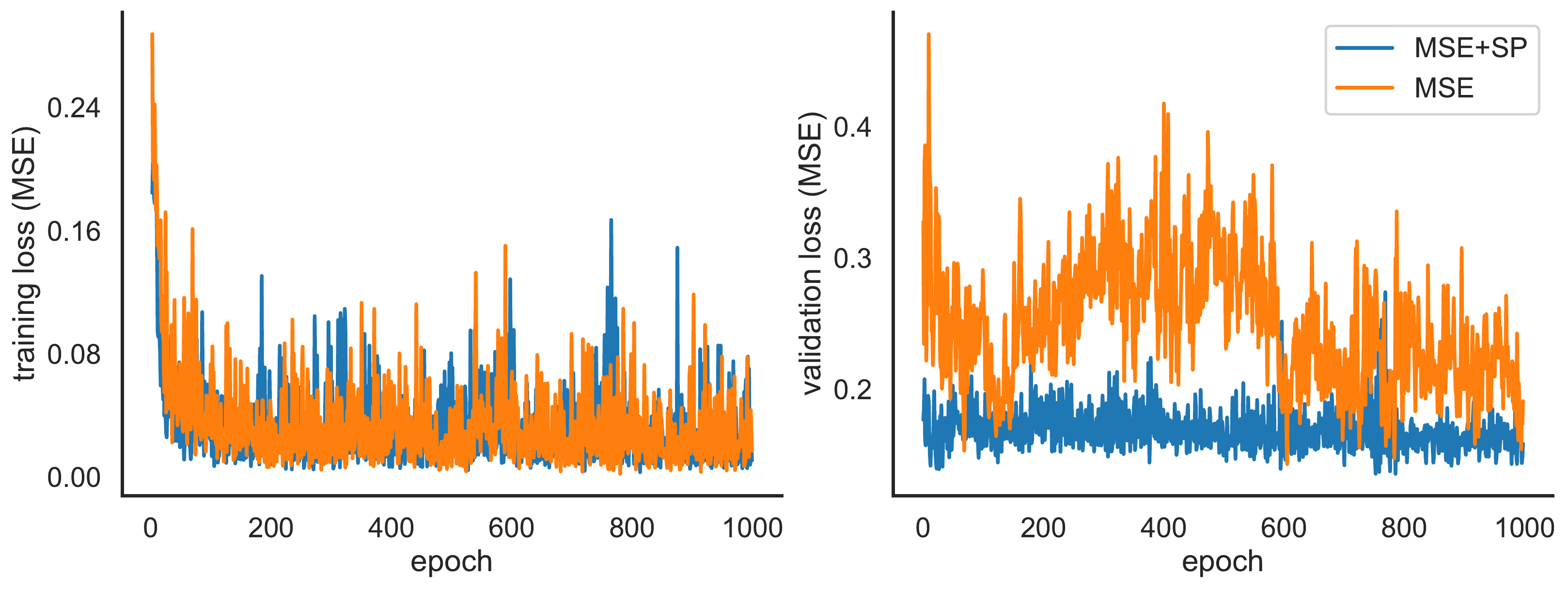}
    \caption{We repeat the experiment above with the only difference that instead of a Xavier initialization, we carry out what is known as ``weight initialization'' (\Cref{def:wi}). Here, we train the DNN with spectral regularization (blue curve) in comparison with a DNN without spectral regularization (orange curve), but with weight initialization. The plots demonstrate that even with weight intialization, spectral regularization does not result in an increase in training, and results in a decrease in validation MSE. Weight initialization has the advantage that the QG constant encountered along the training trajectory of SGD are higher (\Cref{fig:assumption4}) compared to with a Xavier initialization}
    \label{fig:convergence2}
\end{figure}

\subsection{Validating assumption 1(b): additional plots}
In this subsection, we include additional plots on empirical experiments on validating {\bf Assumption 1(b)}. The goal of this experiment is to also provide more details about the new weight initialization method discussed in the Empirical Studies section of the main paper. Note that \Cref{A1} requires showing that for any $\theta$,
\begin{align}
    \mathbb{E}_{x \sim \mathrm{Unif} (D_n)} \left[ \left( f_{\T} (x) - f_{\T^*} (x) \right)^2 \right] \ge C^*_{n,\delta} \| \T - \T^* \|_2^2.
\end{align}
However, since it is prohibitively expensive to check this for all choices of $\theta$, and moreover since the models we consider exhibit some smoothness, we resort to checking this condition only for randomly sampled $\T$ around the reference $\T^*$. In particular, we collect a set of training data points $(x^i,y^i)_{i=1}^{n=1000}$ from an arbitrary sparse polynomial, $f(x) : \{ \pm 1 \}^{13} \to \mathbb{R}$, defined as $3 x_1 + 4 x_2 x_3 + 5 x_4 x_5 + x_{12}$. 

First we train a DNN using SGD on the unregularized MSE, and define this as ${\boldsymbol \theta}^{\ast}$. Then, we repeatedly perturb the weights $K$ times independently with Gaussian noise to generate ${\boldsymbol \theta}^{\text{pert},k} = {\boldsymbol \theta}^{\ast}+W_k$, where $W \sim \mathcal{N} (0,\sigma^2 I)$ is independent and normally distributed, for $k=1,\cdots,K$. By concentration of measure, under the Gaussian perturbations, note that $\| \T^{\text{pert},k} - \T^* \|_2^2$ concentrates around $M \sigma^2$, where $M$ is the number of parameters in the network. Therefore, instead of computing the ratio,
\begin{align}
    C_{n,\delta}^* \min_{\T} \frac{\mathbb{E}_{x \sim \mathrm{Unif} (D_n)} \left[ \left( f_{\T} (x) - f_{\T^*} (x) \right)^2 \right]}{\| \T - \T^* \|_2^2}
\end{align}
we instead approximate it by the ratio (up to scaling by $M$),
\begin{align} \label{eq:lkklll}
    \min_{k \in [K]} \frac{\sum_{i=1}^n \left[ (f_{\boldsymbol \theta^{\text{pert},k}}(x^i)-f_{\boldsymbol \theta^*}(x^i))^2 \right]}{\sigma^2}
\end{align}
as an approximate proxy for perturbations at the scale of $\sigma$ around $\T^*$ in $L_\infty$ distance, which is more accurate, as $K$ grows larger. In Fig.~\ref{fig:val_assump}, we plot the ratio in \cref{eq:lkklll} as a function of $\sigma$.

We repeat this experiment when $\T^*$ is trained starting from weight initialization, and plot the estimated ratio in Fig.~\ref{fig:assumption4}.

\begin{definition}[Weight initialization] \label{def:wi} Weight initialization, refers to initialization of the network by first running $100$ epochs of SGD against the \textit{unregularized MSE}. From this starting point, subsequently we ``turn on'' the regularization and run SGD against the MSE with spectral regularization.
\end{definition}

Each subplot shows the estimated QG ratio (\cref{eq:lkklll}) for models initialized with weight initialization, at the 100, 200, 300 and 400 epochs mark respectively. The plots demonstrate that our weight initialization method significantly improves the QG constant $C_{n,\delta}^*$ along sample trajectory encountered by the SGD, compared to models trained with spectral regularization starting from a random Xavier initialization. 

\begin{figure}[t]
    \centering
    \includegraphics[width=1\textwidth]{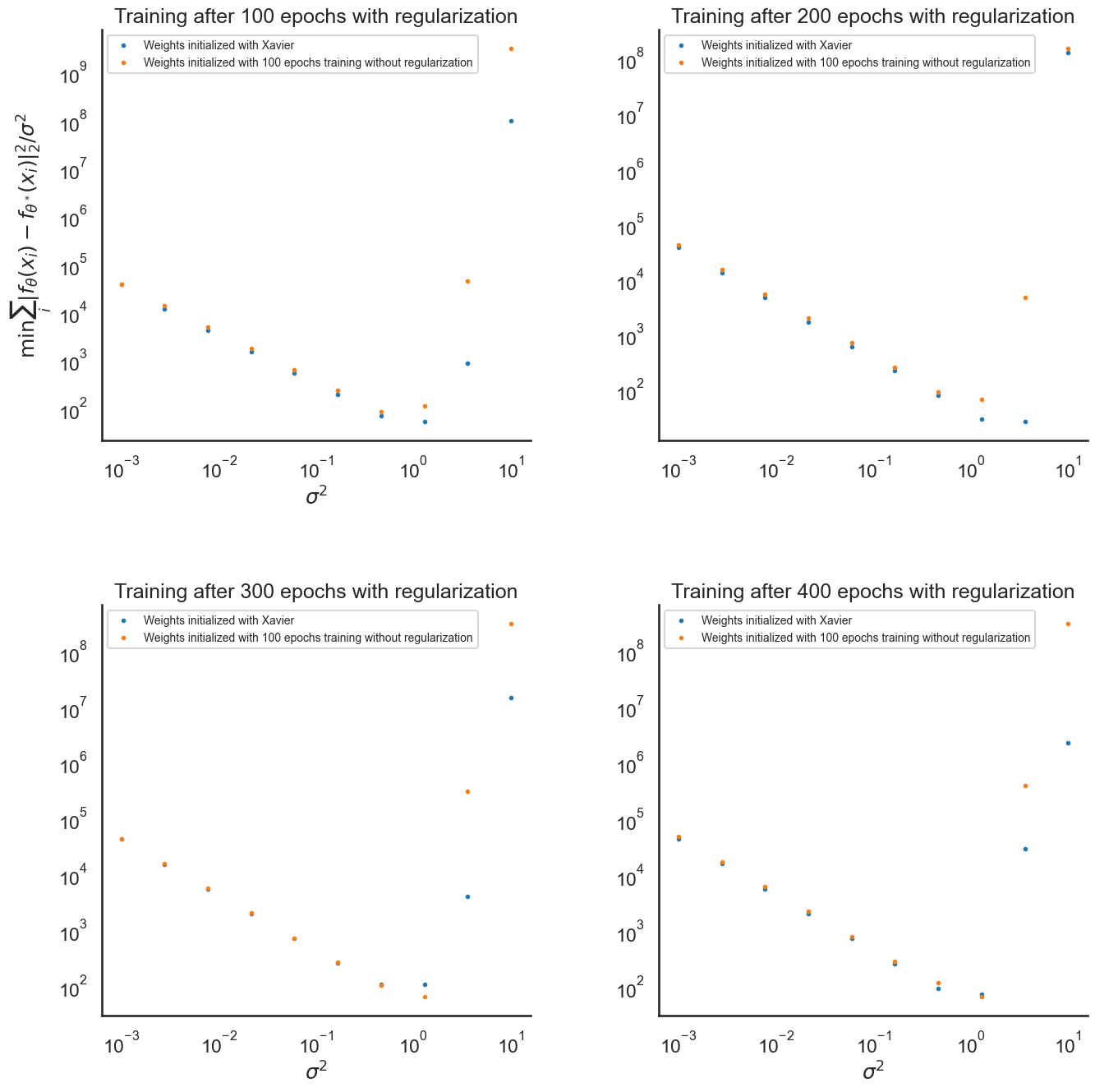}
    \caption{We plot the minimum ratio $\min_{k \in [K]} \sum_{i=1}^n ( f_{\boldsymbol \theta^{\text{pert},k}}(x^i)-f_{\boldsymbol \theta^*}(x^i) )^2 / \sigma^2$ against $\sigma^2$ for 4-layer CNN models trained initialized from (i) Xavier initialization, and (ii) weight initialization, with a learning rate of $10^{-3}$. Similar to the experiments plotted in Fig.~\ref{fig:val_assump}, we perturb $\T^*$, $K = 6000$ times to generate $\T^{\text{pert},k}$.}
    \label{fig:assumption4}
\end{figure}

\subsection{Comparison to other regularization schemes}
In this subsection, we compare the performance of neural network under SP regularization with other regularization schemes which directly penalize the $\ell_1$ and $\ell_2$ norm of the weights of the neural network. The goal of these experiments is to show that promoting sparsity among the weights does not have the same effect as promoting sparsity in the spectral representation of neural networks. We test the generalization power of these networks using the datasets in Fig.~\ref{fig:realworld} under the same experimental conditions. In Fig.~\ref{fig:other_regularizations} we demonstrate that while direct weight-regularization schemes such as $\ell_1$ and $\ell_2$ norm improve the generalization gap of neural networks, the generalization gap is significantly larger for SP regularization.

\begin{figure}[t]
    \centering
    \includegraphics[width=1\textwidth]{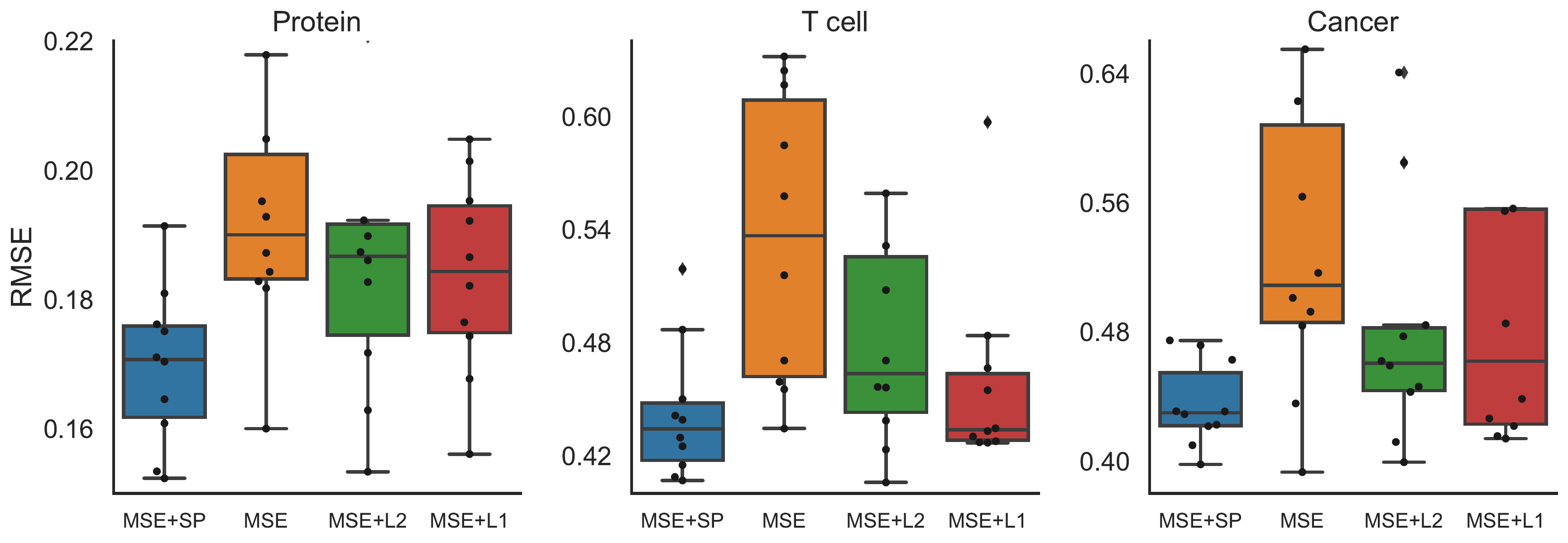}
    \caption{Comparison of the generalization error of different regularization schemes: spectral (SP), L1, and L2 norm regularization.}
    \label{fig:other_regularizations}
\end{figure}

\section{Statistical performance under QG condition - Proof of \Cref{theorem:111}}

In this section we discuss the proof of \Cref{theorem:111}. This result is a consequence of a more general result which characterizes the performance of stationary points of the MSE with spectral regularization when the regularization parameter, $\lambda$, is chosen arbitrarily.

\begin{theorem} \label{theorem:1}
% \begin{align}
% J(\pi^E) - J(\widehat{\pi}) &\le \sum_t \sum_s  d^{\widehat{\pi}}_t (s) \left( \langle \pi^E_t (\cdot | s) , Q_t^{E} (\cdot,s) \rangle - \langle \widehat{\pi}^E_t (\cdot | s) , Q_t^{E} (\cdot,s) \rangle \right) \\
% &\le \sum_t \sum_s  d^{\widehat{\pi}}_t (s) \left( \langle \pi^E_t (\cdot | s) , Q_t^{E} (\cdot,s) \rangle - \langle \widehat{\pi}^E_t (\cdot | s) , Q_t^{E} (\cdot,s) \rangle \right)
% \end{align}
Define $\Delta = \frac{C^*_{n,\delta}}{2 \mu} - \frac{(2d+1) \lambda}{2} - \frac{3}{2} \lambda \sqrt{k}$ and assume $\Delta > 0$. Suppose the regularization factor $\lambda$ satisfies \cref{eq:lambda-asmp}, namely,
\begin{equation}
    \lambda \ge 4 C_0 \sigma \sqrt{ \frac{d + \log(1/\delta)}{n}} \label{eq:lambda-asmp:rep}
\end{equation}
Consider a learner which returns a $\Delta^2$-approximate first order stationary interpolator (\Cref{def:afosi}) of the loss $\mathcal{L}_n (\T) + R(\T)$. Then, under Assumptions~\ref{A1}, \ref{A2} and \ref{A3}, with probability $\ge 1 - 2\delta$,
    \begin{align}
        \| \hatT - \T^* \|_2 \le \frac{3 \lambda L \sqrt{k}}{C_{n,\delta}^*}.
    \end{align}
\end{theorem}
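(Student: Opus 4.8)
The plan is to reduce the parameter-recovery bound to an upper bound on $\errnhat$ via Assumption~\ref{A1}, and then to prove a LASSO-style ``basic inequality'' for $\errnhat$ by exploiting that, although $R(\T)$ is non-convex in $\T$, it is the composition of the \emph{convex} map $\alpha\mapsto\lambda\|\alpha\|_1$ with the smooth nonlinear coefficient map $\T\mapsto\alpha(\T)$, where $\alpha(\T)$ collects the WHT coefficients of $f_{\T}$, so that (by the Remark) $R(\T)=\lambda\|\alpha(\T)\|_1$ and $f_{\T}(x)=\langle\alpha(\T),\chi(x)\rangle$ is \emph{linear} in $\alpha(\T)$ with $\chi(x)=[\prod_{i:z_i=+1}x_i]_z\in\{\pm1\}^{2^d}$. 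First I would apply Assumption~\ref{A1} to get $C^*_{n,\delta}\|\hatT-\T^*\|_2^2\le\errnhat$, reducing the theorem to showing $\errnhat\lesssim\lambda\sqrt{k}\,L\,\|\hatT-\T^*\|_2$; dividing by $\|\hatT-\T^*\|_2$ then produces the claimed $O(\lambda L\sqrt{k}/C^*_{n,\delta})$ bound.

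To bound $\errnhat$ I would combine the two defining properties of a $\Delta^2$-approximate stationary interpolator. Writing stationarity as $\nabla\mathcal{L}_n(\hatT)+\lambda J^\top s=0$ for a subgradient $s\in\partial\|\cdot\|_1(\alpha(\hatT))$ and Jacobian $J=\partial\alpha/\partial\T$, I would test this against $\hatT-\T^*$. The MSE term reproduces $2\errnhat$ plus the noise $\tfrac{2}{n}\sum_i Z^i(f_{\hatT}(x^i)-f_{\T^*}(x^i))=2\langle h,W\rangle$ with $h=\alpha(\hatT)-\alpha^*$ and $W_z=\tfrac1n\sum_i Z^i\chi_z(x^i)\sim\mathcal{N}(0,\sigma^2/n)$; a union bound over the $2^d$ coordinates gives $\|W\|_\infty\lesssim\sigma\sqrt{(d+\log(1/\delta))/n}$ w.p. $\ge1-\delta$, which is exactly why $\lambda=4C_0\sigma\sqrt{(d+\log(1/\delta))/n}$ dominates $2\|W\|_\infty$. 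Convexity of $\|\cdot\|_1$ turns the regularizer term into $\lambda(\|\alpha(\hatT)\|_1-\|\alpha^*\|_1)$, and the usual support decomposition on $S=\mathrm{supp}(\alpha^*)$, $|S|\le k$, collapses everything to $\tfrac{3\lambda}{2}\|h_S\|_1\le\tfrac{3\lambda}{2}\sqrt{k}\,\|h\|_2$. Finally, linearizing $h$ around $\T^*$ and invoking Assumption~\ref{A3} through Parseval --- the key identity $\sum_z v_z v_z^\top=\mathbb{E}_x[\nabla f_{\T^*}(x)\nabla f_{\T^*}(x)^\top]\preceq L^2 I$ with $v_z=\mathbb{E}_x[\chi_z(x)\nabla f_{\T^*}(x)]$ --- bounds $\|h\|_2\le L\|\hatT-\T^*\|_2+O(\mu\|\hatT-\T^*\|_2^2)$, delivering $\errnhat\lesssim\lambda\sqrt{k}\,L\,\|\hatT-\T^*\|_2$ and, via Assumption~\ref{A1}, the theorem; the $\tfrac{3}{2}\lambda\sqrt{k}$ here is precisely the cone-step slack appearing in $\Delta$.

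The hard part is controlling the second-order (curvature) remainders generated by the nonlinear parametrization, which is exactly the ``circumventing the explicit representation of $R$'' difficulty flagged in the text. Testing stationarity against $\hatT-\T^*$ introduces Taylor remainders at every point of $\X$; the remainders supported on the $n$ \emph{sampled} points are benign, since Assumption~\ref{A2} ($\|\nabla^2 f_{\T}(x)\|\le\mu$) together with $\mathcal{L}_n(\hatT)\le\Delta^2$ bounds them by $\mu\Delta\|\hatT-\T^*\|_2^2$, which the hypothesis $\Delta\le C^*_{n,\delta}/(2\mu)$ lets me absorb into the $C^*_{n,\delta}\|\hatT-\T^*\|_2^2$ from Assumption~\ref{A1} (this is where the $C^*_{n,\delta}/(2\mu)$ in $\Delta$ originates). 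The genuinely delicate term is the regularizer's curvature contribution: naively pairing the $\ell_2$-small curvature vector ($\|\cdot\|_2\le\tfrac{\mu}{2}\|\hatT-\T^*\|_2^2$ by Parseval) with the \emph{dense} $\ell_1$-subgradient $s$ costs a factor $2^{d/2}$, which would be catastrophic. I expect the proof to avoid this blow-up by never forming $\nabla_\T R$ explicitly, instead keeping this contribution at the $\mathrm{poly}(d)\cdot\lambda$ scale recorded by the $\tfrac{(2d+1)\lambda}{2}$ slack in $\Delta$, so that the single requirement $\Delta>0$ simultaneously certifies that all these second-order corrections are negligible. Tracking these constants, alongside the cone term above, is what reproduces the exact form of $\Delta$ and the constant $3$ in $\|\hatT-\T^*\|_2\le 3\lambda L\sqrt{k}/C^*_{n,\delta}$.
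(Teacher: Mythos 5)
Your architecture matches the paper's proof almost exactly: test stationarity against $\hatT-\T^*$, bound the noise term by a union bound over the $2^d$ Walsh characters (this is the paper's Lemma~\ref{lemma:3}, giving $\|\HH \mathbf{z}_{D_n}\|_\infty \le \lambda/4\sqrt{2^d}$), use the $\Delta^2$-interpolation condition with \Cref{A2} and Jensen to absorb the Taylor remainders on the sampled points into the quadratic-growth term, run the standard cone decomposition on the $k$-sparse support to get the $\tfrac{3}{2}\lambda\sqrt{k}\|\nu\|_2$ term, relate $\|\nu\|_2$ to $L\|\hatT-\T^*\|_2 + O(\mu\|\hatT-\T^*\|_2^2)$ via \Cref{A3} (the paper's \Cref{lemma:00300}), and close with \Cref{A1} applied twice. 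All of these steps are sound and are exactly what the paper does.

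However, there is one genuine gap, and it sits precisely at the step you yourself flag as ``the genuinely delicate term.'' You write that you \emph{expect} the proof to keep the regularizer's second-order contribution at the $\mathrm{poly}(d)\cdot\lambda$ scale ``by never forming $\nabla_\T R$ explicitly,'' but you give no mechanism, and the naive estimates you rule out (Cauchy--Schwarz against the dense subgradient, costing $2^{d/2}$) are indeed the only ones your outline supports. The paper's resolution is concrete and goes the opposite way: it \emph{does} form the subgradient explicitly as $G = \tfrac{\lambda}{\sqrt{2^d}}(\nabla f_{\T}(\X))\HH z$ with $z \in \textsf{sgn}(\HH f_{\T}(\X))$ (\Cref{prop:reg}), Taylor-expands $(\T-\T^*)^T \nabla f_{\T}(x)$, and bounds the remainder by H\"older as $\tfrac{\lambda}{\sqrt{2^d}}\|\mathbf{A}(\T,\T^*)\|_\infty \|\HH z\|_1 \le \tfrac{\lambda}{\sqrt{2^d}}\,\mu\|\T-\T^*\|_2^2\,\|\HH z\|_1$. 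The whole theorem then hinges on the non-obvious combinatorial fact that $\max_{\|v\|_\infty\le 1}\|\HH v\|_1 \le (2d+1)\sqrt{2^d}$ (\Cref{lemma:2121}, imported from a result on Hadamard matrices in \cite{figula2015numerical}) --- an improvement of the trivial bound $2^d$ by a factor of roughly $\sqrt{2^d}/2d$. Without this fact the slack term in $\Delta$ becomes $\Theta(\sqrt{2^d})\lambda\mu$ rather than $(2d+1)\lambda\mu$, the hypothesis $\Delta>0$ forces $n \gtrsim 2^d$, and the theorem loses all content. Since your proposal neither states nor proves this inequality, and explicitly identifies its absence as the open point, the argument as written does not close.
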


\begin{lemma} \label{lemma:cow}
Under \Cref{A2}, for any $\T \in \mathbb{R}^m$ and any subgradient $G \in (\nabla R) (\T)$,
\begin{align}
    &\Big\langle \T - \T^* , \nabla \mathcal{L}_n (\T) + G \Big\rangle + 2 \mathbb{E}_{x \sim \mathrm{Unif} (D_n)} \left[ \left( g_{\T^*} (x) - y(x)) \right) \left( g_{\T^*} (x) - g_{\T} (x) \right) \right] \\
    &\ge 2 \errn +  R(\T) - R(\T^*) - \left( (2d+1) \lambda \mu + 2 \mu \sqrt{\mathcal{L}_n (\T)} \right) \| \T - \T^* \|_2^2 
\end{align}
\end{lemma}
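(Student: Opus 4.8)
The plan is to lower-bound the left-hand side by splitting it into a data-fit contribution coming from $\nabla \mathcal{L}_n(\T)$ (together with the noise-correlation term that has been added to the left-hand side) and a regularization contribution coming from the subgradient $G$, and to analyze each under \Cref{A2}. Throughout I identify $g_\T$ with $f_\T$ and $y(x^i)$ with the observed label $y^i = f_{\T^*}(x^i) + Z^i$.

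First I would handle the data-fit part. Writing $\nabla \mathcal{L}_n(\T) = \frac{2}{n}\sum_i (f_\T(x^i) - y^i)\,\nabla f_\T(x^i)$, the mean value theorem applied together with the $\mu$-Lipschitzness of $\nabla f_\T$ (a consequence of the Hessian bound in \Cref{A2}) gives $\langle \nabla f_\T(x^i), \T - \T^* \rangle = (f_\T(x^i) - f_{\T^*}(x^i)) + E(x^i)$ with $|E(x^i)| \le \mu \|\T - \T^*\|_2^2$. Substituting the residual decomposition $f_\T(x^i) - y^i = (f_\T(x^i) - f_{\T^*}(x^i)) - Z^i$, the product of the signal parts yields $2\,\errn$ minus a noise cross term $\frac{2}{n}\sum_i Z^i(f_\T(x^i) - f_{\T^*}(x^i))$. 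The key observation is that this cross term is cancelled \emph{exactly} by the added noise-correlation term, since $2\,\mathbb{E}_{x \sim \mathrm{Unif}(D_n)}[(f_{\T^*}(x) - y(x))(f_{\T^*}(x) - f_\T(x))] = \frac{2}{n}\sum_i Z^i(f_\T(x^i) - f_{\T^*}(x^i))$. The leftover Taylor error $\frac{2}{n}\sum_i (f_\T(x^i) - y^i) E(x^i)$ is then controlled, using $|E(x^i)| \le \mu \|\T - \T^*\|_2^2$ and the Cauchy--Schwarz bound $\frac{1}{n}\sum_i |f_\T(x^i) - y^i| \le \sqrt{\mathcal{L}_n(\T)}$, by $2\mu \sqrt{\mathcal{L}_n(\T)}\, \|\T - \T^*\|_2^2$, which produces that term in the claim.

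For the regularization part, the difficulty is that $R(\T) = \frac{\lambda}{\sqrt{2^d}}\|\HH f_\T(\X)\|_1$ is a composition of the convex spectral $L_1$ norm with the nonlinear map $\T \mapsto f_\T(\X)$, so $R$ is not convex in $\T$ and its subgradient inequality cannot be applied directly. By the chain rule, any $G \in (\nabla R)(\T)$ has the form $G = \frac{\lambda}{\sqrt{2^d}} \sum_x (\HH s)_x \nabla f_\T(x)$ for a subgradient $s$ of $\|\cdot\|_1$ at $\HH f_\T(\X)$ with $\|s\|_\infty \le 1$. I would introduce the smooth scalar function $P(\T') = \langle \HH s, f_{\T'}(\X)\rangle$, so that $\frac{\lambda}{\sqrt{2^d}}\nabla P(\T) = G$, and expand $P$ to first order along the segment from $\T^*$ to $\T$. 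Because $\|\cdot\|_1$ is a norm, $\langle s, \HH f_\T(\X)\rangle = \|\HH f_\T(\X)\|_1$ while $\langle s, \HH f_{\T^*}(\X)\rangle \le \|\HH f_{\T^*}(\X)\|_1$, so the first-order part of the expansion extracts precisely $R(\T) - R(\T^*)$, leaving a second-order remainder governed by the Hessian $\sum_x (\HH s)_x \nabla^2 f_{\T'}(x)$ to be controlled via \Cref{A2}.

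I expect the main obstacle to be exactly this regularization remainder, which is where the factor $(2d+1)\lambda\mu$ emerges. The naive estimate $\big\| \sum_x (\HH s)_x \nabla^2 f_{\T'}(x) \big\| \le \mu \|\HH s\|_1$ is exponentially lossy for an adversarial subgradient $s$ (it scales with $\|\HH s\|_1$, which can be of order $2^d$), so a direct bound is hopeless. The delicate step is therefore to exploit the Walsh--Hadamard structure together with the relevant spectral support, so that the operator norm of the Hessian-weighted combination is controlled by $O(d)\,\mu$ rather than $\mu\|\HH s\|_1$; this is the technical heart of the argument. Once the remainder is shown to be at most $(2d+1)\lambda\mu \|\T - \T^*\|_2^2$, adding the two contributions and invoking the cancellation established in the data-fit step yields the stated inequality.
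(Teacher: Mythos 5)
Your overall strategy is the same as the paper's: the data-fit half of your argument is complete and correct, and it reproduces the paper's computation exactly (the paper keeps the noise term on the left-hand side of an inequality chain rather than phrasing it as an ``exact cancellation,'' but these are the same algebra; the Taylor remainder controlled by $2\mu\sqrt{\mathcal{L}_n(\T)}\,\|\T-\T^*\|_2^2$ via Jensen/Cauchy--Schwarz is precisely how the paper obtains that term). Your regularization half also mirrors the paper's \Cref{lemma:reglemma}: the chain-rule form of the subgradient, the first-order expansion of $\T'\mapsto\langle \HH s, f_{\T'}(\X)\rangle$, and the use of $\langle s,\HH f_{\T}(\X)\rangle=\|\HH f_{\T}(\X)\|_1$ together with $\langle s,\HH f_{\T^*}(\X)\rangle\le\|\HH f_{\T^*}(\X)\|_1$ to extract $R(\T)-R(\T^*)$ are all exactly the paper's steps.

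The one genuine gap is the step you yourself flag as the technical heart: bounding the second-order remainder by $(2d+1)\lambda\mu\|\T-\T^*\|_2^2$. After the $\frac{\lambda}{\sqrt{2^d}}$ normalization, this reduces to showing $\|\HH s\|_1\le(2d+1)\sqrt{2^d}$ for any $s$ with $\|s\|_\infty\le1$, improving on the naive $\|\HH s\|_1\le\sqrt{2^d}\,\|s\|_2\le 2^d$. You correctly diagnose that the naive bound is exponentially lossy, but you do not supply the idea that closes it, and your suggestion that one should exploit ``the relevant spectral support'' points in the wrong direction: the paper's \Cref{lemma:2121} makes no use of any sparsity or support structure of $f_{\T}$ --- it is a support-free, purely combinatorial property of the Hadamard matrix, which the paper establishes by reducing $\max_{\|v\|_\infty\le1}\|\HH v\|_1$ to the vertices $v\in\{\pm1\}^{2^d}$ and then invoking a known numerical estimate of \cite{figula2015numerical} on partial row sums of Hadamard matrices. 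The paper itself describes this as ``quite a deep mathematical fact,'' so without citing or reproving it your argument does not go through; everything else in your proposal is sound.
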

\begin{proof}
The proof of this result builds on the analysis in \Cref{lemma:cow0}. Observe that,
\begin{align}
    \left\langle \T - \T^*, \nabla \mathcal{L}_n (\T) \right\rangle = \mathbb{E}_{x \sim \mathrm{Unif} (D_n)} \left[ 2 (g_{\T} (x) - y(x)) \left\langle \T - \T^*, \nabla g_{\T} (x) \right\rangle \right]
\end{align}
For each $x \in [2^d]$ and each $\T$ and $\T^*$, there exists a $\T_x \in \textsf{conv} (\{\T,\T^*\})$ such that $g_{\T} (x) - g_{\T^*} (x) = \left\langle \nabla g_{\T} (x), \T - \T^* \right\rangle - (\T - \T^*)^T \nabla^2 g_{\T_x} (x) (\T - \T^*)$. Plugging this in, and using the assumption that $-\mu I \preceq \nabla^2 g_{\T} (x) \preceq \mu I$,
\begin{align}
    &\left\langle \T - \T^*, \nabla \mathcal{L}_n (\T) \right\rangle - 2 \mathbb{E}_{x \sim \mathrm{Unif} (D_n)} \left[ \left( g_{\T^*} (x) - y(x)) \right) \left( g_{\T} (x) - g_{\T^*} (x) \right) \right] \\
    &= \mathbb{E}_{x \sim \mathrm{Unif} (D_n)} \left[ 2 (g_{\T} (x) - y(x)) \langle \T - \T^*, \nabla g_{\T} (x) \rangle - 2 (g_{\T^*} (x) - y(x)) (g_{\T} (x) - g_{\T^*} (x)) \right] \\
    &\ge \mathbb{E}_{x \sim \mathrm{Unif} (D_n)} \left[ 2 (g_{\T} (x) - y(x)) (g_{\T} (x) - g_{\T^*} (x)) - 2 (g_{\T^*} (x) - y(x)) (g_{\T} (x) - g_{\T^*} (x)) \right] \\
    &\qquad\qquad -2 \mu \| \T - \T^* \|_2^2 \left(  \mathbb{E}_{x \sim \mathrm{Unif} (D_n)} \left[ | g_{\T} (x) - y (x) | \right] \right) \\
    &\ge 2 \mathbb{E}_{x \sim \mathrm{Unif} (D_n)} \left[ ( g_{\T} (x) - g_{\T^*} (x))^2 \right] -2 \mu \| \T - \T^* \|_2^2 \left( \sqrt{\mathcal{L}_n (\T)} \right) \label{eq:1232220}
\end{align}
where the last inequality follows by Jensen's inequality.
% Similarly, considering any subgradient $G \in (\nabla R) (\T)$ of the form $\frac{\lambda}{\sqrt{2^d}} (\nabla g_{\T} (\X)) \H z$, where $z \in \textsf{sgn} (\H g_{\T} (\X))$ we get:
% \begin{align}
%     &\langle \T - \T^*, G \rangle \\
%     &= \frac{\lambda}{\sqrt{2^d}} (\T - \T^*)^T (\nabla g_{\T} (\X)) \H z \\
%     &= \frac{\lambda}{\sqrt{2^d}} \begin{bmatrix} g_{\T} (x) - g_{\T^*} (x) - (\T - \T^*)^T \nabla^2 g_{\T_x} (x) (\T - \T^*) \\
%     \vdots \end{bmatrix}_{x \in [2^d]} \H z \\
%     &\ge \frac{\lambda}{\sqrt{2^d}} (g_{\T} (\X) - g_{\T^*} (\X))^T \H z - \frac{\lambda}{\sqrt{2^d}} \mu \| \T - \T^* \|_2^2 \| \H z \|_1 \\
%     &\ge \frac{\lambda}{\sqrt{2^d}} (\H g_{\T} (\X) - \H g_{\T^*} (\X))^T z - (2d+1) \lambda \mu \| \T - \T^* \|_2^2 \label{eq:12312111}
% \end{align}
% where the last inequality follows from \Cref{lemma:2121}, where we show that for any vector $v : \| v \|_\infty \le 1$ (a condition which is satisfied by $z$), $\| \H v \|_1 \le (2d+1) \sqrt{2^d}$, from the following lemma proved in \cref{proof:lemma:2121}.

% \begin{lemma} \label{lemma:2121}
% $\max_{v : \| v \|_\infty \le 1} \| \H v \|_1 \le (2d + 1) \sqrt{2^d}$.
% \end{lemma}

% Finally, using the convexity of $\| \cdot \|_1$, for any $z \in \textsf{sgn} (\H g_{\T} (\X))$,
% \begin{align}
%     (\H g_{\T} (\X) - \H g_{\T^*} (\X))^T z &= \| \H g_{\T} (\X) \|_1 - (\H g_{\T^*} (\X))^T z  \\
%     &\ge \| \H g_{\T} (\X) \|_1 - \sup_{v : \| v \|_\infty \le 1} \langle v , \H g_{\T^*} (\X) \rangle \\
%     &\ge \| \H g_{\T} (\X) \|_1 - \| \H g_{\T^*} (\X) \|_1 \label{eq:pe}
% \end{align}
Putting together \Cref{lemma:reglemma} and \cref{eq:1232220}, for any subgradient $G \in (\nabla R) (\T)$, we have,
\begin{align}
    &\langle \T - \T^*, \nabla \mathcal{L}_n (\T) + G \rangle -  2 \mathbb{E}_{x \sim \mathrm{Unif} (D_n)} \left[ \left( g_{\T^*} (x) - y(x)) \right) \left( g_{\T} (x) - g_{\T^*} (x) \right) \right] \\
    &\ge 2 \mathbb{E}_{x \sim \mathrm{Unif} (D_n)} \left[ ( g_{\T} (x) - g_{\T^*} (x))^2 \right] +  \frac{\lambda}{\sqrt{2^d}} \left( \| \HH g_{\T} (\X) \|_1 - \| \HH g_{\T^*} (\X) \|_1 \right) \\
    &\qquad\qquad - \left( (2d+1) \lambda \mu + 2 \mu \sqrt{\mathcal{L}_n (\T)} \right) \| \T - \T^* \|_2^2 
\end{align}
This completes the proof.
\end{proof}

Plugging in $\T = \hatT$ into \Cref{lemma:cow}, and noting that the learner returns a first order stationary point of the regularized loss, $0 \in \nabla \mathcal{L}_n (\hatT) + (\nabla R) (\hatT)$, choosing $G$ appropriately,
\begin{align} \label{eq:17}
    &2 \mathbb{E}_{x \sim \mathrm{Unif} (D_n)} \left[ \left( g_{\T^*} (x) - y(x)) \right) \left( g_{\T^*} (x) - g_{\hatT} (x) \right) \right] \\
    &\ge 2 \errnhat +  R(\hatT) - R(\T^*) - \left( (2d+1) \lambda \mu + 2 \mu \sqrt{\mathcal{L}_n (\hatT)} \right) \| \hatT - \T^* \|_2^2 \label{eq:prefinal1}
\end{align}
Under assumption (A1), recall that we assume that $\hatT$ is a sufficiently good interpolator in that, $\mathcal{L}_n (\hatT) \le \left( \frac{C_{n,\delta}^*}{2\mu} - \frac{(2d+1) \lambda}{2} - \frac{3}{2} \sqrt{k} \lambda \right)^2$. Simplifying \cref{eq:prefinal1} under this assumption gives,
\begin{align}
    &2 \mathbb{E}_{x \sim \mathrm{Unif} (D_n)} \left[ \left( g_{\T^*} (x) - y(x)) \right) \left( g_{\T^*} (x) - g_{\hatT} (x) \right) \right] \nonumber\\
    &\ge 2 \errnhat +  R(\hatT) - R(\T^*) - \left( C_{n , \delta}^* - 3\mu \lambda \sqrt{k} \right) \| \hatT - \T^* \|_2^2 \label{eq:final1}
\end{align}

Next we bound the LHS of \cref{eq:final1}.

\begin{lemma} \label{lemma:3}
Recall the assumption \cref{eq:lambda-asmp:rep} lower bounding $\lambda$. Under this condition, define $\mathcal{E}_4$ as the event that for all $\T \in \mathbb{R}^m$,
\begin{align}
    \mathbb{E}_{x \sim \mathrm{Unif} (D_n)} \left[ \left( g_{\T^*} (x) - y(x)) \right) \left( g_{\T^*} (x) - g_{\T} (x) \right) \right] \le \frac{\lambda}{2} \sqrt{\frac{1}{2^d}} \left\| \HH \left( g_{\T^*} (\X) - g_{\T} (\X) \right) \right\|_1
\end{align}
Then, $\Pr (\mathcal{E}_2) \ge 1-\delta$.
\end{lemma}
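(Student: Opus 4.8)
The plan is to recognize the left-hand side as a noise-weighted empirical inner product and to exploit the $\ell_1$--$\ell_\infty$ duality that is built into the spectral regularizer. Writing $h := g_{\T^*} - g_{\T}$, and using that under the generative model $g_{\T^*}(x^i) - y^i = -Z^i$ with the $Z^i \sim \mathcal{N}(0,\sigma^2)$ independent, the left-hand side equals the empirical average $\tfrac1n\sum_{i=1}^n (-Z^i)\, h(x^i)$. Since $\HH$ is orthonormal, $h$ admits the Walsh expansion $h(x) = 2^{-d/2}\sum_{z}\beta_z\,\chi_z(x)$ with $\beta = \HH h(\X)$ and $\chi_z(x) = \prod_{i:z_i=+1}x_i$. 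Substituting this expansion and exchanging the order of summation, I would rewrite the left-hand side as $2^{-d/2}\langle \beta, W\rangle$, where $W_z := \tfrac1n\sum_{i=1}^n (-Z^i)\,\chi_z(x^i)$ is the empirical correlation between the noise and the parity $\chi_z$.

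The key step is then a single application of H\"older's inequality: $\langle \beta, W\rangle \le \|\beta\|_1\,\|W\|_\infty = \|\HH h(\X)\|_1\,\max_z |W_z|$. This is exactly where the structure of the regularizer pays off — the factor $\|\HH h(\X)\|_1$ appearing on the right-hand side of the claimed inequality is produced for free, and the entire dependence on $\T$ is isolated into the direction $\beta/\|\beta\|_1$, which H\"older bounds uniformly. Consequently the statement ``for all $\T$'' collapses to the single requirement $\max_z |W_z| \le \lambda/2$, with no covering or chaining argument needed despite $\T$ ranging over an infinite family.

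It then remains to bound $\max_z |W_z|$. Conditioning on the design $\{x^i\}$, each $W_z$ is a fixed linear combination $\sum_i (-\chi_z(x^i)/n)\, Z^i$ of independent Gaussians, hence $W_z \sim \mathcal{N}(0,\sigma^2/n)$ \emph{exactly}, since $\chi_z(x^i)^2 = 1$ forces the variance $\tfrac{\sigma^2}{n^2}\sum_i \chi_z(x^i)^2 = \sigma^2/n$ regardless of the realized inputs. A union bound over the $2^d$ frequencies gives $\Pr(\max_z|W_z| > \lambda/2) \le 2\cdot 2^d \exp(-n\lambda^2/(8\sigma^2))$, and substituting $\lambda = 4C_0\sigma\sqrt{(d+\log(1/\delta))/n}$ makes the exponent equal to $2C_0^2(d+\log(1/\delta))$, so the failure probability is at most $\delta$ once $C_0$ is a sufficiently large absolute constant. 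Taking complements yields the event in the statement with probability $\ge 1-\delta$.

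The main obstacle is conceptual rather than computational: one must recognize that the $\ell_1$ norm of the WHT appearing in the regularizer is precisely the dual object to the supremum of the noise correlations over the parity basis, which is what converts a uniform-over-$\T$ empirical-process bound into a plain maximum of $2^d$ Gaussians. Once this reduction is in place, the only quantitative inputs are the exact variance computation (relying on the parities being $\pm 1$-valued) and the standard Gaussian maximal inequality; the prescribed scaling of $\lambda$ is calibrated exactly so that the union bound over the $2^d$ coordinates closes at the target probability.
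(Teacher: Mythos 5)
Your proposal is correct and follows essentially the same route as the paper's proof: both pass to the Walsh--Hadamard domain, apply H\"older's inequality to peel off $\| \HH (g_{\T^*}(\X) - g_{\T}(\X)) \|_1$ uniformly in $\T$, observe that the resulting noise--parity correlations are exactly Gaussian with variance $\sigma^2/n$ (your $W_z$ is, up to the $2^{-d/2}$ normalization, the paper's $\langle \HH_i, \mathbf{z}_{D_n}\rangle$), and close with a union bound over the $2^d$ frequencies under the stated choice of $\lambda$. The only difference is notational --- you work with the empirical correlations directly rather than first aggregating the noise into the $2^d$-dimensional vector $\mathbf{z}_{D_n}$.
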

\begin{proof}
The proof of this result is similar to that of \Cref{lemma:31}, and we include the details for completeness. For each sample $x' \in D_n$, define the noise in the sample as $z(x') = g_{\T^*} (x') - y (x)$.
Likewise, define the noise vector $\mathbf{z}_{D_n} = \{ \mathbb{E}_{x \sim \mathrm{Unif} (D_n)} [z(x') \mathbb{I} (x'=x)] : x \in [2^d] \}$. Then, by an application of Holder's inequality, we have,
\begin{align}
    \mathbb{E}_{x \sim \mathrm{Unif} (D_n)} \left[ \left( g_{\T^*} (x) - y(x)) \right) \left( g_{\T^*} (x) - g_{\hatT} (x) \right) \right]
    &\le \left\| \HH \mathbf{z}_{D_n} \right\|_\infty \left\| \HH \left( g_{\T^*} (\X) - g_{\hat{\T}} (\X) \right) \right\|_1.
\end{align}
Note that for each fixed row $i \in [2^d]$, $\langle \HH_i, \mathbf{z}_{D_n} \rangle = \sum_{j \in 2^d} \HH_{ij} \mathbf{z}_{D_n} (j)$. Note that the coordinates of $\mathbf{z}_{D_n}$ are independently distributed and subgaussian. Therefore, by Hoeffding's inequality, for some constant $C_0$,
\begin{equation}
    \mathrm{Pr} \left( \langle \HH_i, \mathbf{z}_{D_n} \rangle \ge C_0 \sqrt{\frac{\left(\sum_{j \in [2^d]} \mathrm{Var} (\mathbf{z}_{D_n} (j)) \right) \log (1/\delta)}{2^d}} \right) \le \delta
\end{equation}
Note that the coordinate of $\mathbf{z}_{D_n}$ labelled by $x \in \{ \pm 1 \}^d$, $\mathbb{E}_{x \sim \mathrm{Unif} (D_n)} [z(x') \mathbb{I} (x'=x)]$, is the sum of $D_n (x)$ independent $\mathcal{N} (0,\sigma^2)$ Gaussians scaled by $1/n$, where $D_n (x)$, defined as the number of times $x$ is sampled in $D_n$. Therefore, $\mathrm{Var} (\mathbf{z}_{D_n} (i)) = \frac{\sigma^2 D_n (x)}{n^2}$.
By union bounding over the $2^d$ rows of $\HH$, with probability $\ge 1 - \delta$,
\begin{equation}
    \mathrm{Pr} \left( \| \HH \mathbf{z}_{D_n} \|_\infty \ge C_0 \sqrt{\frac{ \left(\sum_{x \in [2^d]} \frac{\sigma^2 D_n (x)}{n^2} \right) \log (2^d/\delta)}{2^d}} \right) \le \delta.
\end{equation}
Note that $\sum_{x \in [2^d]} D_n (x) = n$, and therefore, with probability $\ge 1 - \delta$,
\begin{align}
    \| \HH \mathbf{z}_{D_n} \|_\infty \le C_0 \sigma \sqrt{\frac{1}{2^d}} \sqrt{\frac{d + \log (1/\delta)}{n}} \le \frac{\lambda}{4\sqrt{2^d}}.
\end{align}
where the last inequality follows by the assumption on $\lambda$. This implies that with probability $\ge 1 - \delta$,
\begin{align}
    \mathbb{E}_{x \sim \mathrm{Unif} (D_n)} \left[ \left( g_{\T^*} (x) - y(x)) \right) \left( g_{\T^*} (x) - g_{\hatT} (x) \right) \right] \le \frac{\lambda}{2} \sqrt{\frac{1}{2^d}} \left\| \HH \left( g_{\T^*} (\X) - g_{\hat{\T}} (\X) \right) \right\|_1
\end{align}
\end{proof}

Plugging \Cref{lemma:3} into \eqref{eq:final1} and rearranging both sides, under the event $\mathcal{E}_4$,
\begin{align}
&2 \errnhat - \left( C^*_{n,\delta} - 3 \lambda \mu \sqrt{k} \right) \| \hatT - \T^* \|_2^2 \nonumber \\
&\overset{(i)}{\le} \frac{\lambda}{\sqrt{2^d}} \left\| \HH g_{\T^*} (\X) \right\|_1 - \frac{\lambda}{\sqrt{2^d}} \left\| \HH g_{\hatT} (\X) \right\|_1 + \frac{\lambda}{2\sqrt{2^d}} \left\| \HH \left( g_{\T^*} (\X) - g_{\hatT} (\X) \right) \right\|_1 \\
    &\overset{(ii)}{\le} \frac{\lambda}{\sqrt{2^d}} \left( \left\| \HH g_{\T^*} (\X) \right\|_1 - \left\| \HH g_{\hatT} (\X) \right\|_1 + \frac{1}{2} \left\| \HH g_{\T^*} (\X) \right\|_1 + \frac{1}{2} \left\| \HH g_{\hatT} (\X) \right\|_1 \right) \\
    &= \frac{\lambda}{2\sqrt{2^d}} \left( 3 \left\| \HH g_{\T^*} (\X) \right\|_1 - \left\| \HH g_{\hatT} (\X) \right\|_1 \right) \label{eq:24}
% &\ge \errnhat \left( 2 - \frac{ (2d+1) \lambda \mu + 2 \mu \sqrt{\mathcal{L}_n (\hatT)}}{C_{n,\delta}^*} \right) \\
\end{align}
where $(i)$ follows from the definition of the regularization term, $R (\T) = \frac{\lambda}{\sqrt{2^d}} \left\| \HH g_{\T} (\X) \right\|_1$. On the other hand, $(ii)$ follows by triangle inequality of the norm $\| \cdot \|_1$.
Next we focus on the LHS of the above expression and simplify it further. By the quadratic growth condition in \cref{A1}, under the event $\mathcal{E}_3$, which is assumed to happen with probability $\ge 1 - \delta$, $\errnhat \ge C_{n,\delta}^* \| \hatT - \T^* \|_2^2$. Therefore, under $\mathcal{E}_3$ and $\mathcal{E}_4$,
\begin{align}
    &2 \errnhat - \left( C_{n,\delta}^* - 3 \lambda \mu \sqrt{k} \right) \| \hatT - \T^* \|_2^2 \\
    &\ge \| \hatT - \T^* \|_2^2 \left( C_{n,\delta}^* + 3 \lambda \mu \sqrt{k} \right) \\
    &\ge 0. \label{eq:23}
\end{align}
% Recall that we assume that the stationary point $\hatT$ has sufficiently small training error in that,
% \begin{equation} \label{eq:gap-bound-restate}
%     \errnhat \le \left( \frac{\Delta}{2 \mu} \right)^2
% \end{equation}
% where $\Delta = C_{n,\delta}^* - (2d+1) \mu \lambda - 2 \mu \sqrt{\mathcal{L}_n (\T^*)} - \frac{3}{2} \lambda \mu \sqrt{k}$.
% Plugging this into \cref{eq:23} simplifying further results in the lower bound,
% \begin{align}
%     &2 \errnhat - \left( (2d+1) \lambda \mu + 2 \mu \sqrt{\mathcal{L}_n (\T^*)} + 2 \mu \sqrt{\errnhat} \right) \| \hatT - \T^* \|_2^2 \\
%     &\ge \errnhat \left( 1 + \frac{3 \lambda \mu \sqrt{k}}{2 C_{n,\delta}^*} \right) \\
%     &\ge 0
% \end{align}
Plugging this into \cref{eq:24} results in the inequality,
\begin{align} \label{eq:25}
    \frac{\lambda}{2\sqrt{2^d}} \left( 3 \left\| \HH g_{\T^*} (\X) \right\|_1 - \left\| \HH g_{\hatT} (\X) \right\|_1 \right) \ge 0.
\end{align}
Under the events $\mathcal{E}_3$ and $\mathcal{E}_4$.
 
Next, we apply \cite[Lemma 5]{loh2015regularized} to the function $\rho_\lambda (\cdot) = \| \cdot \|_1 $, and note that by assumption $\HH g_{\T^*} (\X)$ is $k$-sparse. By defining $\nu = \mathbf{H} (g_{\T^*} (\X) - g_{\hatT} (\X))$ and $A$ as the set of $k$ largest indices of $\nu$ in absolute value.
\begin{align}
    \frac{\lambda}{2\sqrt{2^d}} \left( 3 \left\| \HH g_{\T^*} (\X) \right\|_1 - \left\| \HH g_{\hatT} (\X) \right\|_1 \right) &\overset{(i)}{\le} \frac{\lambda}{\sqrt{2^d}} \left( 3 \| \nu_A \|_1 - \| \nu_{A^c} \|_1 \right) \\
    &\le \frac{3\lambda \sqrt{k}}{2 \sqrt{2^d}} \| \nu_A \|_2 \\
    &\le \frac{3\lambda \sqrt{k}}{2 \sqrt{2^d}} \| \nu \|_2. \label{eq:b12}
\end{align}
here, $(i)$ uses the fact that $3 \left\| \HH g_{\T^*} (\X) \right\|_1 - \left\| \HH g_{\hatT} (\X) \right\|_1 \ge 0$ from \cref{eq:25}.

% \begin{lemma} \label{lemma:003}
% \begin{equation}
%     \| \nu \|_2 \le 2 \mu \sqrt{2^d} \| \hatT - \T^* \|_2^2 + 2 L \sqrt{2^d} \| \hatT - \T^* \|_2.
% \end{equation}
% \end{lemma}

Finally, we plug the relation between $\| v \|_2 = \| g_{\T^*} (\X) - g_{\hatT} (\X) \|_2$ to $\| \hatT - \T^* \|_2$ proved in \Cref{lemma:00300} into \cref{eq:b12}. This results in the inequality,
\begin{align}
    \frac{\lambda}{2\sqrt{2^d}} \left( 3 \left\| \HH g_{\T^*} (\X) \right\|_1 - \left\| \HH g_{\hatT} (\X) \right\|_1 \right) \le 3 \mu \lambda \sqrt{k} \| \hatT - \T^* \|_2^2 + 3 \lambda L \sqrt{k} \| \hatT - \T^* \|_2
\end{align}
Plugging this back into \cref{eq:24}, under $\mathcal{E}_3$ and $\mathcal{E}_4$,
\begin{align}
    2 \errnhat - \left( C^*_{n , \delta} - 3 \lambda \mu \sqrt{k} \right) \| \hatT - \T^* \|_2^2 \le 3 \mu \lambda \sqrt{k} \| \hatT - \T^* \|_2^2 + 3 \lambda L \sqrt{k} \| \hatT - \T^* \|_2.
\end{align}
Resulting in the bound,
\begin{align}
    2 \errnhat \le 3 \lambda L \sqrt{k}  \| \hatT - \T^* \|_2 + C^*_{n , \delta} \| \hatT - \T^* \|_2^2.
\end{align}
Under the quadratic growth condition, by the event $\mathcal{E}_3$ in \Cref{A1}, $\errnhat \ge C_{n,\delta}^* \| \hatT - \T^* \|_2^2$. Therefore, under the events $\mathcal{E}_3$ and $\mathcal{E}_4$ which jointly occur with probability $\ge 1 - 2 \delta$,
\begin{align}
    \| \hatT - \T^* \|_2 \le \frac{3 \lambda L \sqrt{k}}{C_{n,\delta}^*}.
\end{align}

\Cref{theorem:111} follows by choosing $\lambda$ appropriately according to \Cref{eq:lambda-asmp:rep} and showing that when the size of the dataset $n$ grows to be sufficiently large, the conditions in \Cref{theorem:1} are satisfied.

\subsection{Proof of \Cref{theorem:lb}}
% Define $\mathcal{S}_k$ as the set of sparse polynomials on the hypercube $\{ \pm 1 \}^d \to \mathbb{R}$, with exactly $k$ non-zero monomial coefficients equal to $\Delta$, to be decided later. $\mathcal{S}_k$ is composed of $\binom{2^d}{k}$ polynomials.
In this section, we prove a lower bound on the statistical error of parameter estimation. The objective is to show that for every learner $\hatT$,
\begin{align}
    \sup_{\T^*} \mathbb{E} \left[ \| \hatT - \T^* \|_2 \right] \gtrsim \sigma \sqrt{\frac{kd}{n}}.
\end{align}
We first introduce an auxiliary result related to packing binary vectors with bounded Hamming weight.

\begin{lemma} \label{lemma:packing}
Assume that $d \ge 3$ and $k \le 2^d/4$. Then, there exists a set of $2^{(d-1) \lfloor k/2 \rfloor}$ binary vectors of length $2^d$, denoted $\mathcal{C}_k$, each having at most $k$ ones, such that: the hamming distance between any pair of vectors is at least $\lfloor \frac{k}{2} \rfloor$.
\end{lemma}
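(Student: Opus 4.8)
The plan is to prove \Cref{lemma:packing} by a greedy Gilbert--Varshamov-type packing argument restricted to constant-weight vectors. Write $N = 2^d$ and $m = \lfloor k/2 \rfloor$, and let $\mathcal{W}$ denote the set of all length-$N$ binary vectors of Hamming weight exactly $k$, so that $|\mathcal{W}| = \binom{N}{k}$ and every element of $\mathcal{W}$ trivially has at most $k$ ones. I would construct $\mathcal{C}_k$ greedily: repeatedly select any vector of $\mathcal{W}$ that has not yet been removed, add it to $\mathcal{C}_k$, and delete from $\mathcal{W}$ every vector lying within Hamming distance strictly less than $m$ of the selected vector. By construction the surviving set has pairwise Hamming distance at least $m = \lfloor k/2\rfloor$, and its size is at least $|\mathcal{W}|$ divided by the maximum number of vectors that can be deleted in a single step.

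The next step is to control that per-step deletion count, i.e.\ the constant-weight ball volume $V \triangleq \max_{w \in \mathcal{W}} |\{ w' \in \mathcal{W} : d_H(w,w') < m\}|$. For two weight-$k$ vectors with supports $A$ and $B$ satisfying $|A\setminus B| = |B \setminus A| = t$ one has $d_H(w,w') = 2t$; counting the choice of which $t$ support coordinates to remove and which $t$ new coordinates to add shows that $d_H < m$ forces $t \le \lfloor (m-1)/2\rfloor$, and therefore
\[
    V \;=\; \sum_{t=0}^{\lfloor (m-1)/2\rfloor} \binom{k}{t}\binom{N-k}{t}.
\]
The greedy construction then yields $|\mathcal{C}_k| \ge \binom{N}{k}/V$, so the entire lemma reduces to verifying the single counting inequality $\binom{N}{k} \ge 2^{(d-1)m}\, V$ with $N = 2^d$.

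The main obstacle is precisely this final inequality, and it is where the hypotheses $d \ge 3$ and $k \le 2^d/4$ must be invoked; a crude ball bound (e.g.\ $\sum_{i<m}\binom{N}{i}$) loses far too much, so the estimate has to be done sharply. Concretely, I would bound $V$ by its largest term up to a polynomial-in-$k$ factor using $\binom{k}{t}\le 2^k$ and $\binom{N-k}{t}\le (eN/t)^t$ with $t \le m/2 \le k/4$, lower-bound the numerator by $\binom{N}{k}\ge (N/k)^k$, and substitute $N = 2^d$ to turn the target into a linear inequality between the exponents in $d$ and $k$. The sparsity condition $k \le 2^d/4$ is exactly what keeps the ratio $N/k$ large enough for the $2^{(d-1)m}$ factor to be absorbed, so the delicate point I expect to spend the most effort on is estimating $\binom{N-k}{t}$ at the dominant index $t \approx m/2$ sharply relative to $\binom{N}{k}$, rather than any part of the packing argument itself. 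An equivalent probabilistic-deletion variant (sample roughly $2^{(d-1)m}$ uniform weight-$k$ vectors, bound the expected number of too-close pairs by $V/\binom{N}{k}$, and delete one endpoint of each) leads to the same crux and could be used interchangeably.
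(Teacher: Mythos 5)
Your strategy is essentially the paper's: a greedy Gilbert--Varshamov packing of the weight-$k$ slice, with the size lower-bounded by $\binom{2^d}{k}$ divided by a ball volume. (The paper divides by the full-space ball $\binom{2^d}{\lfloor k/2\rfloor}$; your Johnson-scheme ball $V=\sum_{t\le\lfloor(m-1)/2\rfloor}\binom{k}{t}\binom{2^d-k}{t}$ is the sharper and more natural choice, and your reduction of the lemma to the single inequality $\binom{2^d}{k}\ge 2^{(d-1)m}V$ is correct.) The difficulty is that you stop exactly there: the entire content of the lemma sits in that counting inequality, and you only describe how you \emph{would} attack it. As submitted, the proposal proves nothing beyond the generic GV reduction.

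The deferred step is a genuine gap, not a formality, because the target inequality is false over part of the stated parameter range, so no sharpening of the estimate for $\binom{2^d-k}{t}$ can rescue the plan as written. Even the weaker requirement $2^{(d-1)\lfloor k/2\rfloor}\le\sum_{j\le k}\binom{2^d}{j}$ forces, with $N=2^d$ and $H$ the binary entropy, $(d-1)\lfloor k/2\rfloor\le N\,H(k/N)\approx k\log_2(eN/k)$, i.e.\ roughly $k\lesssim 2^{d/2+2}$. For $d=20$ and $k=2^{18}=2^d/4$, the claimed set has $2^{9.5\cdot 2^{18}}$ elements while there are at most $2^{H(1/4)\,2^{20}}\approx 2^{3.25\cdot 2^{18}}$ binary vectors of weight at most $k$, so the claimed cardinality itself is unattainable. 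Your own heuristic log-comparison would have exposed this had you evaluated it at $k=\Theta(2^d)$, where $\log_2(N/k)=O(1)$ but the exponent $(d-1)k/2$ is $\Theta(dk)$. To close the argument you must restrict to a regime such as $(d-1)\lfloor k/2\rfloor\le\log_2\binom{2^d}{k}$ (e.g.\ $k\lesssim 2^{d/2}$), where the crude bounds you list do suffice. Be aware that the paper's printed proof does not certify the step either: its chain of inequalities silently drops the factor $\lfloor k/2\rfloor!/k!$ from the ratio $\binom{2^d}{k}/\binom{2^d}{\lfloor k/2\rfloor}$, which reverses the first inequality, so the same counting issue is present there.
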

\begin{proof}
The number of vectors within hamming distance $\kappa = \lfloor k/2 \rfloor$ of any vector is $\binom{2^d}{\kappa}$. Note that $k - \kappa\ge \kappa$.

We can greedily construct a packing of size at least $\binom{2^d}{k} / \binom{2^d}{\kappa} \ge \frac{(2^d-\kappa)!}{(2^d-k)!} \ge 2^{d (k-\kappa)} \left( \frac{3}{4} \right)^{k-\kappa} \ge 2^{d \kappa} \left( \frac{3}{4} \right)^{\kappa} \ge 2^{(d-1)\kappa}$ vectors before running out of binary vectors to choose. By construction, every pair of vectors has Hamming distance at least $\kappa$.
\end{proof}

% A consequence of \Cref{lemma:packing} is the fact that we can construct a maximal packing of $\mathcal{S}_k$ , denoted $\mathcal{C}_k \subseteq \mathcal{S}_k$ such that (i) every $|\mathcal{C}_{k}| \ge 2^{(d-1) \lfloor k/2 \rfloor}$, and (ii) every pair of functions in $\mathcal{C}_k$ differ in the coefficients of at least $\lfloor k/2 \rfloor$ monomials. In particular, for any two functions $g_1,g_2 \in \mathcal{C}_k$,
% \begin{align}
%     \mathbb{E} \left[ \right] \ge 
% \end{align}

Define the function space $\mathcal{G} = \{ g_{\T} (\cdot) : \T \in \mathbb{R}^{2^d} \}$, where $g_{\T} (x)$ is defined as the polynomial with coefficients specified by $\T$. Namely, $g_{\T}  (x) = \sum_{S \subseteq [d]} \theta_S \prod_{i \in S} x_i$, where we index the $2^d$ coefficients of  $\theta$ by the $2^d$ subsets of $[d]$. In an alternate notation, we may represent,
\begin{align} \label{eq:linrep}
    g_{\T} (x) = \left\langle \theta, 2^{[x]} \right\rangle
\end{align}
where $2^{[x]}$ denotes the $2^d$ length vector whose element indexed by some subset $S \subseteq [d]$ is $\prod_{i \in S} x_i$.

Furthermore, we assume that the data generating distribution independently samples $n$ pairs $(x_i,y_i)$ where $x_i \sim \mathrm{Unif} (\{ \pm 1 \}^d)$ and $y_i = g_{\T^*} (x_i) + Z_i$ where $Z_i \sim \mathcal{N} (0,\sigma^2)$. Denote $D_n = \{ x_1,\cdots, x_n\}$.

By \Cref{lemma:packing}, the binary vectors belonging to $\mathcal{C}_k$ can be used to construct a subset of the function space $\mathcal{G}$, defined as $\mathcal{S}_k$,
\begin{align}
    \mathcal{S}_k = \left\{ g_{\T} (\cdot) : \T \in \Delta \mathcal{C}_k \right\}.
\end{align}
where $\Delta > 0$ is a scaling factor and $\Delta \mathcal{C}_k = \{ \Delta \T : \T \in \mathcal{C}_k \}$.

Henceforth, we will consider ourselves with learning functions (resp. parameters) in the class $\mathcal{S}_k$ (resp. $\Delta \mathcal{C}_k$).

First we show the properties on $\errn$ and $\mathcal{G}$ in the statement of \Cref{theorem:lb}. Note that $\mathcal{G}$ is a linear family by the representation in \cref{eq:linrep}, and therefore $\mu = 0$.

In addition, note that,
\begin{align}
    \errn &= \mathbb{E}_{x \sim D_n} \left[ ( g_{\T} (x) - g_{\T^*} (x))^2 \right] \\
    &= \mathbb{E}_{x \sim D_n} \left[ \left\langle \T - \T^*, 2^{[x]} \right\rangle^2 \right] \\
    &= (\T - \T^*)^T \mathbb{E}_{x \sim D_n} \left[ 2^{[x]} (2^{[x]})^T \right] (\T - \T^*) \label{eq:332}
\end{align}
Note that $A_x = 2^{[x]} (2^{[x]})^T$ is a matrix whose entries (indexed by pairs of subsets of $[d]$) can be described as $A_x (S,T) = \prod_{i \in S} x_i \prod_{j \in S} x_j$. Note that in expectation over $x \sim \mathrm{Unif} (\X)$, we have that,
\begin{align}
    \mathbb{E}_{x \sim \mathrm{Unif} (\X)} \left[ A_x (S,T) \right] = \mathbb{I} (S=T)
\end{align}
This is because if $S \ne T$, there exists an element $i \in (S \setminus T) \cup (T \setminus S)$ (i.e. the symmetric difference of the two sets) and since $\mathbb{E}_{x \sim \mathrm{Unif} (\X)} [x_i] = 0$, we get the required statement. Therefore,
\begin{align} \label{eq:Axexp}
    \mathbb{E}_{x \sim \mathrm{Unif} (\X)} [A_x] = I
\end{align}
Now, given $n$ samples from the uniform distribution, $\mathbb{E}_{x \sim \mathrm{Unif} (D_n)} [A_x]$ is expected to concentrate around its expectation $\mathbb{E}_{x \sim \mathrm{Unif} (\X)} [A_x]$. In particular, by invoking the matrix Bernstein inequality \cite{tropp}, we have that,
\begin{align}
    \mathrm{Pr} \left( \left\| \mathbb{E}_{x \sim \mathrm{Unif} (D_n)} [A_x] - \mathbb{E}_{x \sim \mathrm{Unif} (\X)} [A_x] \right\|_{\mathrm{op}} \ge t \right) \le 2(2^d) \exp \left(  -nt^2 / 2L^2 \right)
\end{align}
where $L$ is an almost sure upper bound on $\| A_x \|_{\mathrm{op}}$. Note that $\| A_x \|_{\mathrm{op}} = \| 2^{[x]} \|_2 = \sqrt{2^d}$ and therefore we may choose $L = \sqrt{2^d}$. This results in the bound,
\begin{align}
    \mathrm{Pr} \left( \left\| \mathbb{E}_{x \sim \mathrm{Unif} (D_n)} [A_x] - I \right\|_{\mathrm{op}} \ge \frac{1}{2} \right) \le 2(2^d) \exp \left(  -n / 2^{d+3} \right)
\end{align}
Therefore, if $n \gtrsim (d + \log(1/\delta) ) 2^{d+3}$, with probability $\ge 1-\delta$,
\begin{align}
    \frac{1}{2} I \preceq \mathbb{E}_{x \sim \mathrm{Unif} (D_n)} [A_x]  \preceq \frac{3}{2} I.
\end{align}
In \cref{eq:332}, this implies that, with probability $\ge 1 - \delta$,
\begin{align}
    \errn \ge \frac{1}{2} \| \T - \T^* \|_2^2
\end{align}
And likewise, from the linear representation in \cref{eq:linrep},
\begin{align}
    \mathbb{E}_{x \sim \mathrm{Unif} (\X)} \left[ \nabla g_{\T^*} (x) (\nabla g_{\T^*} (x))^T \right] = \mathbb{E}_{x \sim \mathrm{Unif} (\X)} \left[ 2^{[x]} (2^{[x]})^T \right] = I
\end{align}
where the last equation follows from \cref{eq:Axexp}.

To generate the lower bound instance, suppose the ground truth parameter $\T^*$ is sampled uniformly from $\Delta \mathcal{C}_k$. Suppose the learner outputs a candidate parameter $\hatT$. The population level mean squared error of the learner is lower bounded by the testing error,
\begin{align}
    \sup_{\T^* \in \mathbb{R}^{2^d}} \mathbb{E} \left[ \| \hatT - \T^* \|_2 \right] &\ge \mathbb{E}_{\T^* \sim \mathrm{Unif} (\Delta \mathcal{C}_k)} \left[ \mathbb{E} \left[ \left\| \hatT - \T^* \right\|_2 \right] \right] \\
    &\ge \frac{1}{4} \sqrt{\lfloor k/2 \rfloor} \Delta \inf_{\Psi} \mathbb{E}_{\T^* \sim \mathrm{Unif} (\mathcal{C}_k)} \left[ \mathbb{I} (\Psi (D_n) ) \ne \T^*) \right] \label{eq:oasoa3}
\end{align}
where the infimum is over all tests functions which return a function in $\mathcal{C}_k$. This uses the fact that any estimator $\hatT$ induces a testing function for $\T^*$ by returning the $\T \in \Delta\mathcal{C}_k$ such that $\| \T - \hatT \|_2$ is smallest. If this test makes a mistake, then $\hatT$ must have predicted $> \frac{1}{2}\sqrt{\lfloor k/2 \rfloor}$ entries of $\T^*$ as $< \Delta/2$ instead of $\Delta$ or $> \Delta/2$ instead of $0$ (making an error of at least $\Delta/2$ on these coordinates).

The optimal hypothesis testing error can be lower bounded by Fano's inequality as follows,
\begin{align}
    \inf_{\Psi} \mathbb{E}_{\T^* \sim \mathrm{Unif} (\Delta \mathcal{C}_k)} \left[ \mathbb{I} (\Psi (D_n) ) \ne \T^*) \right] \ge 1 - \frac{I (\T^*;D_n) + \log (2)}{\log |\Delta \mathcal{C}_k|} \label{eq:oasoa2}
\end{align}
For each parameter $\T \in \Delta\mathcal{C}_k$, define the distribution,
\begin{align}
    P_{\T} (D_n) = \prod_{i=1}^n \frac{1}{\sqrt{2\pi}} \exp \left( - \frac{1}{2 \sigma^2} (y_i - g_{\T} (x_i))^2 \right) \times \left( \frac{1}{2^d} \right).  \label{eq:ldddk}
\end{align}
which captures the data distribution when the underlying ground truth parameter is $\T$. Note that $D_n \sim P_{\T^*}$; marginally $x_i$ is uniformly distributed on the hypercube and conditionally $y_i$ is normally distributed with mean $g_{\T^*} (x_i)$ and variance $\sigma^2$.
Note that the mutual information can be bounded as,
\begin{align}
    I(\T^*,D_n) &= \inf_{Q} \mathbb{E}_{\T^* \sim \mathrm{Unif} (\Delta\mathcal{C}_k)} \left[ \textsf{KL} (P_{\T^*} \| Q) \right] \\
    &\le \mathbb{E}_{\T^* \sim \mathrm{Unif} (\Delta \mathcal{C}_k)} \left[ \textsf{KL} (P_{\T^*} \| P_0) \right]  \label{eq:oasoa1}
\end{align}
where $P_0$ is the distribution of $D_n$ when the ground truth function $g_{\T}$ in \cref{eq:ldddk} is chosen as $0$ everywhere. Then,
\begin{align}
    \textsf{KL} (P_{\T^*} \| P_0) &= \frac{1}{2\sigma^2} \mathbb{E}_{D_n \sim P_{\T^*}} \left[ \sum_{i=1}^n 2 y_i g_{\T^*} (x_i) - (g^*_{\T} (x_i))^2 \right] \\
    &= \frac{1}{2\sigma^2} \mathbb{E}_{D_n \sim P_{\T^*}} \left[ \sum_{i=1}^n (g_{\T^*} (x_i))^2\right] \\
    &= \frac{n}{2\sigma^2} \mathbb{E}_{x_i \sim \mathrm{Unif} (\{ \pm 1 \}^d)} \left[ (g_{\T^*} (x_i))^2 \right] \label{eq:oasoa}
\end{align}
where the last equation just uses the fact that in $P_{\T^*}$, the marginal distribution of $x_i$ is uniform. For each $\T^*$, by Parseval's theorem, $\mathbb{E}_{x_i \sim \mathrm{Unif} (\{ \pm 1 \}^d)} \left[ (g_{\T^*} (x_i))^2\right] = k \Delta^2$. Overall, plugging into \cref{eq:oasoa} and subsequently into \cref{eq:oasoa1},
\begin{align}
    I(\T^*,D_n) \le \mathbb{E}_{\T^* \sim \mathrm{Unif} (\Delta \mathcal{C}_k)} \left[ \textsf{KL} (P_{\T^*} \| P_0) \right] = \frac{n k \Delta^2}{2 \sigma^2}
\end{align}
Putting these bounds into \cref{eq:oasoa2} and subsequently into \cref{eq:oasoa3} results in,
\begin{align}
    \mathbb{E}_{\T^* \sim \mathrm{Unif} (\Delta \mathcal{C}_k)} \left[ \mathbb{E} \left[ \left\| \hatT - \T^* \right\|_2 \right] \right] &\ge \frac{1}{4} \sqrt{\lfloor k/2 \rfloor} \Delta \left( 1 - \frac{n k \Delta^2/2\sigma^2 + \log(2)}{\log 2^{(d-1)\lfloor k/2\rfloor}}\right) \\
    &\ge \frac{1}{4} \sqrt{\lfloor k/2 \rfloor} \Delta \left( 1 - 10\frac{(n k \Delta^2/\sigma^2 + 1)}{(d-1) \lfloor k/2 \rfloor}\right)
\end{align}
Choosing $\Delta = 8\epsilon/\sqrt{\lfloor k/2 \rfloor}$, for a sufficiently large constant $C$, we get that for any learner, if $n \le C \frac{\sigma^2 d}{\Delta^2} \asymp \frac{ \sigma^2 kd}{\epsilon^2}$,
\begin{align}
    \mathbb{E}_{\T^* \sim \mathrm{Unif} (\Delta \mathcal{C}_k)} \left[ \mathbb{E} \left[ \left\| \hatT - \T^* \right\|_2 \right] \right] &\ge \epsilon \asymp \sigma \sqrt{\frac{kd}{n}}.
\end{align}
\section{Proof of auxiliary lemmas}

\subsection{Proof of \Cref{lemma:00300}}

By the Lagrange form of the Taylor series expansion, for any $x$, there exists a $\T_x$ such that,
\begin{align}
    g_{\hatT} (x) - g_{\T^*} (x) &= (\hatT - \T^*)^T \left( \nabla^2 g_{\T_x} (x) \right) (\hatT - \T^*) + \left\langle \nabla g_{\T^*} (x), \hatT - \T^* \right\rangle
\end{align}
Then, for each $x \in \X$,
\begin{align}
    \left( g_{\hatT} (x) - g_{\T^*} (x) \right)^2 &\le 2\mu \| \hatT - \T^*\|_2^4 + 2 \left\langle \nabla g_{\theta^*} (x) , \hatT - \T^* \right\rangle^2
\end{align}
Summing over $x \in \X$ results in,
\begin{align} \label{eq:b2}
    \| \nu \|_2^2 = \| g_{\hatT} (\X) - g_{\T^*} (\X) \|_2^2 \le 2 \mu^2 2^d \| \hatT - \T^*\|_2^4 + L^2 2^d \| \hatT - \T^* \|_2.
\end{align}
where recall the assumption, $\frac{1}{2^d} \sum_{x \in \X} \nabla g_{\theta^*} (x) (\nabla g_{\theta^*} (x))^T \preceq L^2$.

\subsection{Proof of \Cref{lemma:2121}} \label{proof:lemma:2121}

Note that $\max_{v : \| v \|_\infty \le 1} \| \HH v \|_1 = \max_{v \in \{ -1,+1\}^{2^d}} \| \HH v \|_1$, by \cite[Proposition 1]{RJ00}. Therefore, it suffices to maximize over $v \in \{ -1,+1\}^{2^d}$. It is known from \cite{figula2015numerical} that $\hat{\varrho}^{(d)} \le d 2^d$ where $\hat{\varrho}^{(d)} \triangleq \max_{m \in [2^d]} \sqrt{2^d} \| \Phi \HH \mathbf{1}_m \|_1$ where $\Phi$ is an arbitrary set of vectors from the unit $\| \cdot \|_1$ ball in $\mathbb{R}^{2^d}$, $\mathbf{1}_m$ is the vector with the first $m$ entries $1$ and the remaining entries $0$. Since $\Phi$ can be chosen as any permutation matrix, this result implies that $\max_{v \in \{ 0,1 \}^{2^d}} \| \HH v \|_1 \le d 2^d$. To compare with the previous statement, note that the number of non-zeros in $v$ here equals $m$ in the optimization problem defining $\hat{\varrho}^{(d)}$. Then, we have that,
\begin{align}
    \max_{v \in \{ -1,+1\}^{2^d}} \| \HH v \|_1 &= \max_{v \in \{ 0,1\}^{2^d}} \| \HH (2v - \mathbf{1}) \|_1 \\
    &\overset{(i)}{\le} 2 \| \HH v \|_1 + \| \HH \mathbf{1} \|_1 \\
    &\le 2d \sqrt{2^d} + \sqrt{2^d},
\end{align}
where $(i)$ follows by triangle inequality.

\section{Statistical performance under RSI - Proof of \Cref{theorem:000}} \label{app:theorem:0}

In this section we discuss the proof of \Cref{theorem:000}. We prove a slightly more general result which characterizes the performance of stationary points of the MSE with spectral regularization under the RSI, when the regularization parameter is chosen arbitrarily.

\begin{theorem} \label{theorem:0}
For a sufficiently large absolute constant $C_0 > 0$, suppose the regularization parameter $\lambda$ satisfies,
\begin{equation}
    \lambda \ge 4 C_0 \sigma \sqrt{ \frac{d + \log(1/\delta)}{n}} \label{eq:lambda-asmp}
\end{equation}
Consider a learner which returns any first order stationary point of the loss $\mathcal{L}_n (\T) + R(\T)$. Under Assumptions~\ref{A0}, \ref{A2} and \ref{A3}, if $\frac{C^*_{n,\delta}}{2} - \frac{5}{4}(2d+1) \lambda \mu - 3 \lambda L \sqrt{k} > 0$, with probability $\ge 1 - 2\delta$,
    \begin{align}
        \| \hatT - \T^* \|_2 \le \frac{6 \lambda L \sqrt{k}}{C_{n,\delta}^*}
    \end{align}
\end{theorem}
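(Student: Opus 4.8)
\textbf{Proof proposal (Theorem \ref{theorem:0}).} The plan is to mirror the proof of \Cref{theorem:1} (the QG analog) but to feed in the restricted secant inequality of \Cref{A0} through the \emph{noise-free} loss $\errn$, and to replace the interpolation-based control of the second-order remainder with a Walsh-domain argument. I would start at a first-order stationary point $\hatT$, so that $\nabla \mathcal{L}_n (\hatT) + G = 0$ for some subgradient $G \in (\nabla R)(\hatT)$, and take the inner product against $\hatT - \T^*$. Writing the observation noise as $Z(x)$, the identity $\nabla \mathcal{L}_n (\T) = \nabla \errn - 2 \mathbb{E}_{x \sim \mathrm{Unif}(D_n)}\left[ Z(x) \nabla f_{\T}(x) \right]$ together with stationarity and the RSI of \Cref{A0} applied as a black box to $\langle \hatT - \T^*, \nabla \errn \rangle \ge C_{n,\delta}^* \| \hatT - \T^* \|_2^2$ yields the reduction
\begin{align}
  C_{n,\delta}^* \| \hatT - \T^* \|_2^2 \le - \langle \hatT - \T^* , G \rangle + 2 \mathbb{E}_{x \sim \mathrm{Unif}(D_n)} \big[ Z(x) \langle \hatT - \T^* , \nabla f_{\hatT}(x) \rangle \big].
\end{align}
The crucial point is that, unlike the route through \Cref{lemma:cow}, this never produces a $\sqrt{\mathcal{L}_n(\hatT)}$ factor, which for RSI could not be made small (we do not assume interpolation, and $\mathcal{L}_n(\hatT) \approx \sigma^2$).

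Next I would control the two terms on the right. For the subgradient term I reuse the regularizer lemma underlying \Cref{lemma:cow}, which (via \Cref{A2} and \Cref{lemma:2121}) bounds $-\langle \hatT - \T^*, G\rangle$ by $\tfrac{\lambda}{\sqrt{2^d}}\big(\| \HH f_{\T^*}(\X) \|_1 - \| \HH f_{\hatT}(\X) \|_1\big) + (2d+1)\lambda\mu \| \hatT - \T^* \|_2^2$. For the noise term I Taylor-expand $\langle \hatT - \T^*, \nabla f_{\hatT}(x) \rangle = \big(f_{\hatT}(x) - f_{\T^*}(x)\big) + w(x)$, where the remainder $w(x) = (\hatT - \T^*)^T \nabla^2 f_{\T_x}(x)(\hatT - \T^*)$ obeys $|w(x)| \le \mu \| \hatT - \T^* \|_2^2$ by \Cref{A2}. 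The \emph{signal} part $2\mathbb{E}[Z(x)(f_{\hatT}(x) - f_{\T^*}(x))]$ is precisely the object bounded in \Cref{lemma:3}, contributing $\tfrac{\lambda}{2\sqrt{2^d}} \| \HH(f_{\T^*}(\X) - f_{\hatT}(\X)) \|_1$ on the event of probability $\ge 1-\delta$.

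The step I expect to be the main obstacle is the \emph{noise--remainder} term $2\mathbb{E}[Z(x) w(x)]$, which has no counterpart in the QG proof (there the entire second-order term was swept into $\sqrt{\mathcal{L}_n}$ and annihilated by interpolation). The naive estimate $2\mu \| \hatT - \T^* \|_2^2 \,\mathbb{E}[|Z(x)|] \approx \sigma \mu \| \hatT - \T^* \|_2^2$ is fatal, since $\sigma \gg \lambda$ once $n$ is large. The resolution is to treat $w$ as a bounded pseudo-Boolean function on the cube and recycle the Hölder-plus-Hoeffding estimate of \Cref{lemma:3}: on the \emph{same} event giving $\| \HH \mathbf{z}_{D_n} \|_\infty \le \tfrac{\lambda}{4\sqrt{2^d}}$, I write $2\mathbb{E}[Z(x)w(x)] = 2\langle \HH \mathbf{z}_{D_n}, \HH w(\X) \rangle \le 2 \| \HH \mathbf{z}_{D_n} \|_\infty \| \HH w(\X) \|_1$, and invoke \Cref{lemma:2121} with $\| w \|_\infty \le \mu \| \hatT - \T^* \|_2^2$ to get $\| \HH w(\X) \|_1 \le (2d+1)\sqrt{2^d}\,\mu \| \hatT - \T^* \|_2^2$. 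This controls the noise--remainder term at the $\lambda$-scale, of order $(2d+1)\lambda\mu \| \hatT - \T^* \|_2^2$; combined with the regularizer's $(2d+1)\lambda\mu$ contribution this is what produces the $\tfrac54(2d+1)\lambda\mu$ coefficient appearing in the hypothesis.

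Collecting the three bounds gives $\big(C_{n,\delta}^* - \tfrac54(2d+1)\lambda\mu\big)\| \hatT - \T^* \|_2^2 \le \tfrac{\lambda}{\sqrt{2^d}}\big(\| \HH f_{\T^*}(\X) \|_1 - \| \HH f_{\hatT}(\X) \|_1\big) + \tfrac{\lambda}{2\sqrt{2^d}} \| \HH(f_{\T^*}(\X) - f_{\hatT}(\X)) \|_1$, which is exactly the inequality reached in the QG proof, and the remaining steps are then identical. The positivity hypothesis forces the left side to be nonnegative, hence $3\| \HH f_{\T^*}(\X) \|_1 \ge \| \HH f_{\hatT}(\X) \|_1$; I then apply the cone bound of \cite[Lemma 5]{loh2015regularized} using the $k$-sparsity of $\HH f_{\T^*}(\X)$ to bound the right side by $\tfrac{3\lambda\sqrt{k}}{2\sqrt{2^d}}\| \nu \|_2$ with $\nu = \HH(f_{\T^*}(\X) - f_{\hatT}(\X))$, and \Cref{lemma:00300} converts $\| \nu \|_2 / \sqrt{2^d}$ into $\sqrt{2}\mu\| \hatT - \T^* \|_2^2 + L\| \hatT - \T^* \|_2$. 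The resulting quadratic inequality in $\| \hatT - \T^* \|_2$ is resolved using that the positivity condition $\tfrac{C_{n,\delta}^*}{2} - \tfrac54(2d+1)\lambda\mu - 3\lambda L\sqrt{k} > 0$ leaves a net coefficient of at least $\tfrac{C_{n,\delta}^*}{2}$ on the quadratic term, so $\tfrac{C_{n,\delta}^*}{2}\| \hatT - \T^* \|_2 \le 3\lambda L\sqrt{k}$, giving the claimed $\| \hatT - \T^* \|_2 \le 6\lambda L\sqrt{k}/C_{n,\delta}^*$ (the factor $6$, versus $3$ in \Cref{theorem:1}, reflecting that only half of $C_{n,\delta}^*$ survives). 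The two probability-$\delta$ events—the RSI event of \Cref{A0} and the noise event of \Cref{lemma:3}, which simultaneously governs both the signal and the noise--remainder estimates—are combined by a union bound, for a total failure probability $\le 2\delta$.
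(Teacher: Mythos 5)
Your proposal is correct and follows essentially the same route as the paper's proof in Appendix D: stationarity plus the RSI on $\errn$, the subgradient bound of \Cref{lemma:reglemma}, the Walsh-domain H\"older--Hoeffding control of both the signal and the second-order Taylor remainder in the noise term (this is exactly \Cref{lemma:31}), followed by the cone condition, \cite[Lemma 5]{loh2015regularized}, \Cref{lemma:00300}, and the resolution of the resulting quadratic inequality. The only discrepancy is bookkeeping: your accounting of the two $\mu$-contributions actually yields $\tfrac{3}{2}(2d+1)\lambda\mu$ rather than $\tfrac{5}{4}(2d+1)\lambda\mu$, which matches the constant derived in the paper's proof (the $\tfrac54$ in the theorem statement appears to be the paper's own inconsistency, not yours).
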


The subgradient of the regularizer $R (\T) = \| \HH f_{\T} (\X) \|_1$ is characterized below.
\begin{proposition} \label{prop:reg}
The subgradient of the regularization $R(\T) = \| \HH f_{\T} (\X) \|_1$ is,
\begin{align}
    (\nabla R) (\T) = \left\{ \frac{\lambda}{\sqrt{2^d}} \left( \nabla f_{\T} (\X) \right) \HH z : z \in \textsf{sgn} ( \HH f_{\T} (\X) ) \right\}
\end{align}
Here, for $z \in \mathbb{R}$, $\textsf{sgn} (z) = \begin{cases} \{ +1 \} \qquad &\text{if } z > 0 \\ \{ -1 \} &\text{if } z < 0 \\ [-1,+1] &\text{otherwise} \end{cases}$ and applied on a vector $z = (z_1,\cdots,z_n)$ is the Cartesian product of the sets $\textsf{sgn} (z_1) \times \cdots \times \textsf{sgn} (z_n)$.
\end{proposition}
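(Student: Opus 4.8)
The plan is to recognize the regularizer as the composition of a convex function with a smooth map and to apply the subdifferential chain rule. Writing $h(\T) = \HH f_{\T}(\X) \in \R^{2^d}$ and taking the outer function to be $\phi = \frac{\lambda}{\sqrt{2^d}} \| \cdot \|_1$, we have $R = \phi \circ h$ (matching the scaling in \eqref{eq:OBJ}). The outer map $\phi$ is convex, hence locally Lipschitz and Clarke-regular, while the inner map $h$ is continuously differentiable: by \Cref{A2}, each coordinate $x \mapsto f_{\T}(x)$ is twice differentiable in $\T$, and $h$ is the fixed linear image of the evaluation vector $f_{\T}(\X)$ under $\HH$. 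This is precisely the regime in which the chain rule for subdifferentials holds with \emph{equality}, namely $\partial R(\T) = [Dh(\T)]^{\!\top}\, \partial \phi(h(\T))$, where $Dh(\T)$ denotes the Jacobian of $h$.

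Two routine computations then complete the argument. First, the subdifferential of the $\ell_1$ norm at a point $u \in \R^{2^d}$ is the componentwise sign set, so $\partial \phi(u) = \frac{\lambda}{\sqrt{2^d}}\, \textsf{sgn}(u)$ with $\textsf{sgn}$ as defined in the statement; evaluating at $u = h(\T) = \HH f_{\T}(\X)$ gives $\partial \phi(h(\T)) = \frac{\lambda}{\sqrt{2^d}}\, \textsf{sgn}(\HH f_{\T}(\X))$. Second, by the chain rule applied to the smooth map $\T \mapsto \HH f_{\T}(\X)$, the Jacobian factors as $Dh(\T) = \HH\, (\nabla f_{\T}(\X))^{\!\top}$, where $\nabla f_{\T}(\X) \in \R^{m \times 2^d}$ is the matrix whose columns are the gradients $\nabla f_{\T}(x)$ for $x \in \X$. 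Using that the Hadamard matrix is symmetric, $\HH^{\!\top} = \HH$, the adjoint simplifies to $[Dh(\T)]^{\!\top} = \nabla f_{\T}(\X)\, \HH$.

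Assembling these pieces yields $\partial R(\T) = \nabla f_{\T}(\X)\, \HH \cdot \frac{\lambda}{\sqrt{2^d}}\, \textsf{sgn}(\HH f_{\T}(\X))$, which is exactly the set displayed in the proposition. The one point requiring care is the justification of \emph{equality} (rather than mere inclusion) in the subdifferential chain rule, since $R$ itself is nonconvex whenever $f_{\T}$ depends nonlinearly on $\T$. I expect this to be the main obstacle: the equality is secured by the Clarke-regularity of the convex outer function $\phi$ together with the $C^1$ (indeed $C^2$) smoothness of $h$ supplied by \Cref{A2}, so I would invoke the standard chain rule for the composition of a regular function with a continuously differentiable map to close the gap. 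No constraint qualification beyond smoothness of $h$ is needed, because the nonsmoothness resides entirely in the convex, and hence regular, outer norm.
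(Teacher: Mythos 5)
Your proposal is correct; the paper in fact states \Cref{prop:reg} without any proof, so there is no argument of its own to compare against, and the chain-rule decomposition you give is evidently the intended justification. Writing $R$ as the convex outer function $\frac{\lambda}{\sqrt{2^d}}\|\cdot\|_1$ composed with the inner map $\T \mapsto \HH f_{\T}(\X)$ (which is $C^1$, hence strictly differentiable, by the twice-differentiability in \Cref{A2}), invoking the Clarke chain rule with equality via regularity of the convex outer function, and simplifying the adjoint to $\nabla f_{\T}(\X)\,\HH$ using $\HH^{\top}=\HH$ reproduces exactly the displayed set, including the $\frac{\lambda}{\sqrt{2^d}}$ scaling from \eqref{eq:OBJ}.
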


\begin{lemma} \label{lemma:reglemma}
Under the Lipschitz gradient condition, \Cref{A2}, consider any $\T \in \mathbb{R}^d$ and any subgradient $G \in (\nabla R) (\T)$. Then,
\begin{align}
    \langle \T - \T^* , G \rangle \ge R( \T) - R(\T^*) - (2d+1) \lambda \mu \| \T - \T^* \|_2^2.
\end{align}
\end{lemma}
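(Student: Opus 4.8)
The plan is to work directly with the explicit subgradient formula from \Cref{prop:reg} and to translate the inner product $\langle \T - \T^*, G\rangle$ into a statement about Hadamard transforms. Fix $\T$ and write a generic subgradient as $G = \frac{\lambda}{\sqrt{2^d}}(\nabla f_{\T}(\X))\HH z$ where $z \in \textsf{sgn}(\HH f_{\T}(\X))$, so that $\|z\|_\infty \le 1$. Taking the adjoint of the Jacobian,
\[
\langle \T - \T^*, G\rangle = \frac{\lambda}{\sqrt{2^d}}\left\langle (\nabla f_{\T}(\X))^T(\T - \T^*),\ \HH z\right\rangle,
\]
where $(\nabla f_{\T}(\X))^T(\T - \T^*)$ is the $2^d$-vector whose coordinate indexed by $x$ equals $\langle \nabla f_{\T}(x), \T - \T^*\rangle$.

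First I would linearize each of these coordinates. By the Lagrange form of Taylor's theorem, for each $x$ there is $\T_x \in \textsf{conv}(\{\T,\T^*\})$ with $\langle \nabla f_{\T}(x), \T - \T^*\rangle = (f_{\T}(x) - f_{\T^*}(x)) + \tfrac{1}{2}(\T - \T^*)^T\nabla^2 f_{\T_x}(x)(\T - \T^*)$; invoking the Hessian bound of \Cref{A2}, the quadratic remainder $\epsilon_x$ obeys $|\epsilon_x| \le \mu\|\T - \T^*\|_2^2$. In vector form this reads $(\nabla f_{\T}(\X))^T(\T - \T^*) = \big(f_{\T}(\X) - f_{\T^*}(\X)\big) + \epsilon$ with $\|\epsilon\|_\infty \le \mu\|\T - \T^*\|_2^2$. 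Substituting and splitting the inner product produces a principal term $\frac{\lambda}{\sqrt{2^d}}\langle f_{\T}(\X) - f_{\T^*}(\X),\HH z\rangle$ and a remainder term $\frac{\lambda}{\sqrt{2^d}}\langle \epsilon, \HH z\rangle$ to be controlled separately.

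For the principal term I would use the symmetry of $\HH$ to move it onto $z$, obtaining $\frac{\lambda}{\sqrt{2^d}}\langle \HH f_{\T}(\X) - \HH f_{\T^*}(\X), z\rangle$. Since $z \in \textsf{sgn}(\HH f_{\T}(\X))$, the identity $\langle \HH f_{\T}(\X), z\rangle = \|\HH f_{\T}(\X)\|_1$ holds exactly, while Hölder's inequality with $\|z\|_\infty \le 1$ gives $\langle \HH f_{\T^*}(\X), z\rangle \le \|\HH f_{\T^*}(\X)\|_1$; hence the principal term is at least $\frac{\lambda}{\sqrt{2^d}}(\|\HH f_{\T}(\X)\|_1 - \|\HH f_{\T^*}(\X)\|_1) = R(\T) - R(\T^*)$. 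For the remainder, Hölder again yields $|\langle \epsilon, \HH z\rangle| = |\langle \HH\epsilon, z\rangle| \le \|\HH\epsilon\|_1$, and I would bound this using \Cref{lemma:2121}, which gives $\max_{\|v\|_\infty \le 1}\|\HH v\|_1 \le (2d+1)\sqrt{2^d}$ and therefore $\|\HH\epsilon\|_1 \le (2d+1)\sqrt{2^d}\,\|\epsilon\|_\infty$. Multiplying by $\frac{\lambda}{\sqrt{2^d}}$ cancels the $\sqrt{2^d}$ and bounds the remainder by $(2d+1)\lambda\mu\|\T - \T^*\|_2^2$. Adding the two estimates gives exactly the claimed inequality.

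The only genuinely delicate step is controlling the Taylor remainder through the spectral regularizer. Since $\epsilon$ carries no sparsity or spectral structure, bounding $\frac{1}{\sqrt{2^d}}\|\HH\epsilon\|_1$ crudely through Cauchy--Schwarz and $\|\HH\epsilon\|_2 = \sqrt{2^d}\|\epsilon\|_2$ would cost a factor of order $2^d$ and destroy the estimate. The essential input from \Cref{lemma:2121} is that the $\ell_\infty \to \ell_1$ operator norm of the normalized Hadamard matrix is only $2d+1$---exponentially smaller---which is precisely what keeps the remainder at the scale $(2d+1)\lambda\mu\|\T - \T^*\|_2^2$. Everything else reduces to routine applications of the adjoint identity, Hölder's inequality, and the sign-vector structure of the subgradient.
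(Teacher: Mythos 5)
Your proposal is correct and follows essentially the same route as the paper's proof: the explicit subgradient formula from \Cref{prop:reg}, a Lagrange-form Taylor expansion controlled by the Hessian bound of \Cref{A2}, the sign-vector identity plus H\"older for the principal term, and \Cref{lemma:2121} to control the remainder (you apply it to the normalized remainder vector $\epsilon$ rather than to $z$ as the paper does, but both vectors have $\ell_\infty$ norm at most one, so the two bookkeepings are equivalent). The only cosmetic discrepancy is the factor $\tfrac{1}{2}$ you retain on the quadratic Taylor remainder, which only tightens the estimate and does not change the stated bound.
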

\begin{proof}

Recall that the regularization function $R(\T)$ is defined as $\frac{\lambda}{\sqrt{2^d}} \| \mathbf{H} f_{\T} (\X) \|_1$. Consider any subgradient $G \in (\nabla R) (\T)$ of the regularization function $R(\T)$. By \Cref{prop:reg}, this is of the form $\frac{\lambda}{\sqrt{2^d}} (\nabla f_{\T} (\X)) \HH z$, where $z \in \textsf{sgn} (\HH f_{\T} (\X))$. In particular, using this representation, we have the equation,
\begin{align}
    \langle \T - \T^*, G \rangle = \frac{\lambda}{\sqrt{2^d}} (\T - \T^*)^T (\nabla f_{\T} (\X)) \HH z.
\end{align}
Since $f_{\T} (x)$ is in general non-linear in $\T$, we can relate $(\T - \T^*)^T \nabla f_{\T} (x)$ to $f_{\T} (x) - f_{\T^*} (x)$ by using a Taylor series expansion. In particular, for each $x \in [2^d]$ and each $\T$ and $\T^*$, by the Lagrange form of the Taylor series expansion, there exists a $\T_x \in \textsf{conv} (\{\T,\T^*\})$ such that $f_{\T^*} (x) - f_{\T} (x) = \left\langle \nabla f_{\T} (x), \T^* - \T \right\rangle + (\T^* - \T)^T \nabla^2 f_{\T_x} (x) (\T^* - \T)$. In addition, note from \Cref{A2} that $-\mu I \preceq \nabla^2 f_{\T} (x) \preceq \mu I$. This results in the following set of inequalities,
\begin{align}
    \langle \T - \T^*, G \rangle &= \frac{\lambda}{\sqrt{2^d}} \begin{bmatrix} f_{\T} (x) - f_{\T^*} (x) + (\T - \T^*)^T \nabla^2 f_{\T_x} (x) (\T - \T^*) \\
    \vdots \end{bmatrix}_{x \in [2^d]} \HH z \\
    &\ge \frac{\lambda}{\sqrt{2^d}} (f_{\T} (\X) - f_{\T^*} (\X))^T \HH z - \frac{\lambda}{\sqrt{2^d}} \mu \| \T - \T^* \|_2^2 \| \HH z \|_1 \\
    &\ge \frac{\lambda}{\sqrt{2^d}} (\HH f_{\T} (\X) - \HH f_{\T^*} (\X))^T z - (2d+1) \lambda \mu \| \T - \T^* \|_2^2 \label{eq:12312111}
\end{align}
where the last inequality follows from \Cref{lemma:2121}, where we show that for any vector $v : \| v \|_\infty \le 1$ (a condition which is satisfied by the sign vector $z$), $\| \HH v \|_1 \le (2d+1) \sqrt{2^d}$.

A naive attempt to prove such a bound turns out to result in a loose bound. Indeed, $\| \HH v \|_1 \le \sqrt{2^d} \| \HH v \|_2 = \sqrt{2^d} \| v \|_2 \le \sqrt{2^d} \sqrt{2^d} = 2^d$. The improvement of one of the $\sqrt{2^d}$ factors to $(2d+1)$ turns to be quite a deep mathematical fact, and we invoke a result of \cite{figula2015numerical} to prove this result in \Cref{proof:lemma:2121}.

\begin{lemma} \label{lemma:2121}
$\max_{v : \| v \|_\infty \le 1} \| \HH v \|_1 \le (2d + 1) \sqrt{2^d}$.
\end{lemma}

Finally, using the convexity of $\| \cdot \|_1$, for any $z \in \textsf{sgn} (\HH f_{\T} (\X))$,
\begin{align}
    (\HH f_{\T} (\X) - \HH f_{\T^*} (\X))^T z &= \| \HH f_{\T} (\X) \|_1 - (\HH f_{\T^*} (\X))^T z  \\
    &\ge \| \HH f_{\T} (\X) \|_1 - \sup_{v : \| v \|_\infty \le 1} \langle v , \HH f_{\T^*} (\X) \rangle \\
    &\ge \| \HH f_{\T} (\X) \|_1 - \| \HH f_{\T^*} (\X) \|_1 \label{eq:pe}
\end{align}
Combining this with \cref{eq:12312111} and using the definition of $R(\T)$ completes the proof.
\end{proof}

\begin{lemma} \label{lemma:cow0}
Under Assumptions~\ref{A0} and \ref{A2}, for any $\T \in \mathbb{R}^d$ and any subgradient $G \in (\nabla R) (\T)$, under the event $\mathcal{E}_1$,
\begin{align}
    &\Big\langle \T - \T^* , \nabla \mathcal{L}_n (\T) + G \Big\rangle - 2 \mathbb{E}_{x \sim \mathrm{Unif} (D_n)} \left[ \left( f_{\T^*} (x) - y(x)) \right) \left\langle \T - \T^* , \nabla f_{\T} (x) \right\rangle \right] \\
    &\ge \left( \frac{C_{n,\delta}^*}{2} - (2d+1) \lambda \mu \right) \| \T - \T^* \|_2^2 +  R(\T) - R(\T^*)
\end{align}
\end{lemma}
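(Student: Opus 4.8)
The plan is to split the left-hand side into the contribution of the empirical MSE gradient and that of the regularizer subgradient $G$, and to observe that, once the noise term is subtracted, the MSE contribution collapses \emph{exactly} onto the directional derivative of $\errn$, so that the RSI in Assumption~\ref{A0} can be invoked without any Taylor approximation.

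First I would compute the gradient of the empirical risk. Writing $\mathcal{L}_n(\T) = \mathbb{E}_{x\sim\mathrm{Unif}(D_n)}[(f_\T(x)-y(x))^2]$, the chain rule gives
\begin{align}
\nabla \mathcal{L}_n(\T) = 2\,\mathbb{E}_{x\sim\mathrm{Unif}(D_n)}\left[(f_\T(x)-y(x))\,\nabla f_\T(x)\right].
\end{align}
Pairing with $\T-\T^*$ and subtracting the noise term that appears on the left-hand side of the lemma, the labels $y(x)$ cancel inside the bracket, leaving
\begin{align}
\langle\T-\T^*,\nabla \mathcal{L}_n(\T)\rangle - 2\,\mathbb{E}_{x\sim\mathrm{Unif}(D_n)}\left[(f_{\T^*}(x)-y(x))\langle\T-\T^*,\nabla f_\T(x)\rangle\right] = \langle\T-\T^*,\nabla_\theta\errn\rangle,
\end{align}
where the final equality is just the chain rule applied to $\errn = \mathbb{E}_{x\sim\mathrm{Unif}(D_n)}[(f_\T(x)-f_{\T^*}(x))^2]$, whose gradient in $\T$ equals $2\,\mathbb{E}_{x\sim\mathrm{Unif}(D_n)}[(f_\T(x)-f_{\T^*}(x))\nabla f_\T(x)]$. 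This is the crux: the gradient $\nabla f_\T(x)$ appearing in both $\nabla\mathcal{L}_n$ and the noise term is evaluated at the \emph{same} point $\T$, so the cancellation is exact and no Hessian remainder is incurred.

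The two remaining pieces are then each controlled by one assumption. On the event $\mathcal{E}_1$ on which Assumption~\ref{A0} holds (probability $\ge 1-\delta$), the RSI directly yields $\langle\T-\T^*,\nabla_\theta\errn\rangle \ge C_{n,\delta}^*\|\T-\T^*\|_2^2$. For the regularizer term, \Cref{lemma:reglemma} --- which invokes only the Lipschitz-gradient Assumption~\ref{A2} --- applies verbatim to the arbitrary subgradient $G\in(\nabla R)(\T)$, giving $\langle\T-\T^*,G\rangle \ge R(\T)-R(\T^*)-(2d+1)\lambda\mu\|\T-\T^*\|_2^2$. Adding the two bounds produces
\begin{align}
\langle\T-\T^*,\nabla \mathcal{L}_n(\T)+G\rangle - 2\,\mathbb{E}_{x\sim\mathrm{Unif}(D_n)}\left[(f_{\T^*}(x)-y(x))\langle\T-\T^*,\nabla f_\T(x)\rangle\right] \ge \left(C_{n,\delta}^* - (2d+1)\lambda\mu\right)\|\T-\T^*\|_2^2 + R(\T)-R(\T^*),
\end{align}
and since $C_{n,\delta}^* \ge \tfrac{1}{2}C_{n,\delta}^*$ the stated inequality follows (indeed with the sharper leading constant $C_{n,\delta}^*$ rather than $\tfrac{1}{2}C_{n,\delta}^*$, so the $\tfrac{1}{2}$ is pure slack that can be reserved for the downstream noise control in \Cref{theorem:0}).

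I expect essentially no genuine obstacle at the level of this lemma. The entire content is the exact identification of the ``MSE-gradient minus noise'' term with $\langle\T-\T^*,\nabla_\theta\errn\rangle$; this is precisely the step that lets the RSI be used cleanly, in contrast with the QG counterpart \Cref{lemma:cow}, where the analogous manipulation keeps the function-value difference $f_\T(x)-f_{\T^*}(x)$ instead of the directional derivative and therefore must pay the extra $\mu\sqrt{\mathcal{L}_n(\T)}$ correction via a Taylor expansion and Jensen's inequality. The only point requiring care is the subgradient bookkeeping in the regularizer term, but that is entirely encapsulated in \Cref{lemma:reglemma}, whose own nontrivial ingredient --- the bound $\|\HH v\|_1 \le (2d+1)\sqrt{2^d}$ of \Cref{lemma:2121} --- is already established, so nothing further is needed here.
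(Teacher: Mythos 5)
Your proof is correct and follows essentially the same route as the paper's: the exact cancellation of $y(x)$ identifies the ``MSE-gradient minus noise'' term with $\langle \T - \T^*, \nabla_\theta \errn\rangle$, the RSI on the event $\mathcal{E}_1$ bounds that below by $C^*_{n,\delta}\|\T-\T^*\|_2^2$, and \Cref{lemma:reglemma} handles the subgradient $G$. Your observation that the argument actually yields the sharper leading constant $C^*_{n,\delta}$ (so the stated $\tfrac{1}{2}C^*_{n,\delta}$ is slack) matches what the paper's own proof derives before the lemma statement rounds it down.
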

\begin{proof}
Observe that,
\begin{align}
    \left\langle \T - \T^*, \nabla \mathcal{L}_n (\T) \right\rangle = 2 \mathbb{E}_{x \sim \mathrm{Unif} (D_n)} \left[ (f_{\T} (x) - y(x)) \left\langle \T - \T^*, \nabla f_{\T} (x) \right\rangle \right]
\end{align}
Plugging this in below,
\begin{align}
    &\left\langle \T - \T^*, \nabla \mathcal{L}_n (\T) \right\rangle - 2 \mathbb{E}_{x \sim \mathrm{Unif} (D_n)} \left[ \left( f_{\T^*} (x) - y(x)) \right) \left\langle \T - \T^* , \nabla f_{\T} (x) \right\rangle \right] \\
    &= 2\mathbb{E}_{x \sim \mathrm{Unif} (D_n)} \left[ (f_{\T} (x) - y(x)) \langle \T - \T^*, \nabla f_{\T} (x) \rangle - (f_{\T^*} (x) - y(x)) \left\langle \T - \T^* , \nabla f_{\T} (x) \right\rangle \right] \\
    &= 2 \mathbb{E}_{x \sim \mathrm{Unif} (D_n)} \left[ ( f_{\T} (x) - f_{\T^*} (x)) \left\langle \T - \T^* , \nabla f_{\T} (x) \right\rangle \right] \\
    &= \left\langle \T - \T^*, \nabla \mathbb{E}_{x \sim \mathrm{Unif} (D_n)} \left[ ( f_{\T} (x) - f_{\T^*} (x))^2 \right] \right\rangle \\
    &\ge C_{n,\delta}^* \| \T - \T^* \|_2^2
    \label{eq:1232220lpppl}
\end{align}
where the last inequality follows by the restricted secant condition imposed on $\errn$ in \Cref{A0}. Putting together \cref{eq:1232220lpppl} with \Cref{lemma:reglemma},
\begin{align}
    &\left\langle \T - \T^*, \nabla \mathcal{L}_n (\T) + G \right\rangle - 2 \mathbb{E}_{x \sim \mathrm{Unif} (D_n)} \left[ \left( f_{\T^*} (x) - y(x)) \right) \left\langle \T - \T^* , \nabla f_{\T} (x) \right\rangle \right]\\
    &\qquad \ge C_{n,\delta}^* \| \T - \T^* \|_2^2 +  R(\T) - R(\T^*) - (2d+1) \lambda \mu \| \T - \T^* \|_2^2 
\end{align}
This completes the proof.
\end{proof}

Plugging in $\T = \hatT$ into \Cref{lemma:cow0}, and noting that $0 \in \nabla \mathcal{L}_n (\hatT) + (\nabla R) (\hatT)$, choosing $G$ appropriately,
\begin{align}
    &2 \mathbb{E}_{x \sim \mathrm{Unif} (D_n)} \left[ \left( f_{\T^*} (x) - y(x)) \right) \left\langle \T^* - \hatT, \nabla f_{\hatT} (x) \right\rangle \right] \nonumber\\
    &\qquad \ge R(\hatT) - R(\T^*) + \left( C^*_{n,\delta} - (2d+1) \lambda \mu  \right) \| \hatT - \T^* \|_2^2 \label{eq:prefinal}
\end{align}
Now, we upper bound the LHS of \Cref{lemma:cow0}.

% \subsection{Equivalent definitions of strongly star convex functions} \label{proof:sscequiv}
% The following two definition of strongly star convexity are equivalent,
% \begin{align}
%     \tag{def. 1} &f (z^*) \ge f(z) + \langle \nabla f (z), z^* - z \rangle + \frac{C_f}{2} \| z^* - z \|_2^2 \label{def:1} \\
%     \tag{def. 2} &\langle \nabla f (z), z - z^* \rangle \ge C_f \| z^* - z \|_2^2. \label{def:2}
% \end{align}
% \begin{proof}
% (\ref{def:1} $\implies$ \ref{def:2}).
% Note that \ref{def:1} can be equivalently rewritten as,
% \begin{align}
%     f (z^*) \ge f (z) - \frac{C_f}{2} \| z^* - z \|_2^2 + \left\langle \nabla \left(f (z) - \frac{C_f}{2} \| z - z^* \|_2^2 \right), z^* - z \right\rangle.
% \end{align}
% This is equivalent to saying that $g(z) = f(z) - \frac{C_f}{2} \| z \|_2^2$ is star-convex. This is equivalent to saying that,
% \begin{align}
    
% \end{align}
% \end{proof}

\begin{lemma} \label{lemma:31}
Recall the assumption \cref{eq:lambda-asmp} lower bounding $\lambda$. Under this assumption, define $\mathcal{E}_2$ as the event that, for all $\T \in \mathbb{R}^d$,
\begin{align}
    &\mathbb{E}_{x \sim \mathrm{Unif} (D_n)} \left[ \left( f_{\T^*} (x) - y(x)) \right) \left\langle \T^* - \T, \nabla f_{\T} (x) \right\rangle \right] \\
    &\qquad \le \frac{\lambda}{4} \sqrt{\frac{1}{2^d}} \left\| \HH \left( f_{\T^*} (\X) - f_{\T} (\X) \right) \right\|_1 + \frac{\lambda \mu (2d+1)}{4} \| \T - \T^* \|_2^2.
\end{align}
Then, $\Pr (\mathcal{E}_2) \ge 1-\delta$.
\end{lemma}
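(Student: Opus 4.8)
The plan is to mirror the structure of the proof of \Cref{lemma:3}, with one extra ingredient needed to convert the gradient inner product $\langle \T^* - \T, \nabla f_{\T}(x) \rangle$ appearing here into the function difference $f_{\T^*}(x) - f_{\T}(x)$ that the argument of \Cref{lemma:3} already controls. First I would introduce the per-sample noise $z(x') = f_{\T^*}(x') - y(x')$ and collect it into the length-$2^d$ vector $\mathbf{z}_{D_n}$ whose coordinate indexed by $x$ equals $\mathbb{E}_{x' \sim \mathrm{Unif}(D_n)}[z(x') \mathbb{I}(x'=x)]$, exactly as in \Cref{lemma:3}. The central deterministic fact I would establish is that, under \cref{eq:lambda-asmp} and with probability $\ge 1 - \delta$, $\| \HH \mathbf{z}_{D_n} \|_\infty \le \frac{\lambda}{4} \sqrt{1/2^d}$. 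This follows by applying Hoeffding's inequality to each inner product $\langle \HH_i, \mathbf{z}_{D_n} \rangle$ (whose summands are independent, mean-zero and subgaussian with variance $\sigma^2 D_n(x)/n^2$), using $\sum_x D_n(x) = n$, and union bounding over the $2^d$ rows of $\HH$. Crucially, this bound does not depend on $\T$, which is precisely what will yield the uniform-over-$\T$ guarantee at the end.

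Next I would Taylor-expand to separate the two terms of the claimed right-hand side. By the Lagrange form used in \Cref{lemma:reglemma}, for each $x$ there is a $\T_x \in \textsf{conv}(\{\T,\T^*\})$ with $\langle \T^* - \T, \nabla f_{\T}(x) \rangle = (f_{\T^*}(x) - f_{\T}(x)) - r_x$, where $r_x = (\T^* - \T)^T \nabla^2 f_{\T_x}(x) (\T^* - \T)$ and $|r_x| \le \mu \| \T - \T^* \|_2^2$ by \Cref{A2}. Since $\T_x$ depends only on $x$, the empirical average factors through $\mathbf{z}_{D_n}$, giving the exact decomposition
\begin{align}
    &\mathbb{E}_{x \sim \mathrm{Unif}(D_n)} \big[ (f_{\T^*}(x) - y(x)) \langle \T^* - \T, \nabla f_{\T}(x) \rangle \big] \nonumber \\
    &\qquad = \big\langle \mathbf{z}_{D_n}, f_{\T^*}(\X) - f_{\T}(\X) \big\rangle - \big\langle \mathbf{z}_{D_n}, \mathbf{r} \big\rangle,
\end{align}
where $\mathbf{r} = (r_x)_{x \in [2^d]}$. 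To the first inner product I would apply Hölder's inequality through the orthonormal transform $\HH$, namely $\langle a, b \rangle = \langle \HH a, \HH b \rangle \le \| \HH a \|_\infty \| \HH b \|_1$, together with the $\ell_\infty$ bound above, reproducing the first term $\frac{\lambda}{4} \sqrt{1/2^d} \| \HH(f_{\T^*}(\X) - f_{\T}(\X)) \|_1$ of the claim.

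The main obstacle, and the step that genuinely distinguishes this lemma from \Cref{lemma:3}, is the second-order remainder $\langle \mathbf{z}_{D_n}, \mathbf{r} \rangle$: a naive pointwise bound $|r_x| \le \mu \| \T - \T^* \|_2^2$ followed by $\frac{1}{n} \sum_i |z(x^i)| \approx \sigma$ is far too lossy and would fail to close in the regime $n > n_0$ required by the theorem. Instead I would treat $\mathbf{r}$ as a vector in the same function space as $f_{\T^*}(\X)$ and reuse the identical machinery: Hölder gives $|\langle \mathbf{z}_{D_n}, \mathbf{r} \rangle| \le \| \HH \mathbf{z}_{D_n} \|_\infty \| \HH \mathbf{r} \|_1$, and since $\| \mathbf{r} \|_\infty \le \mu \| \T - \T^* \|_2^2$, invoking \Cref{lemma:2121} (the bound $\| \HH v \|_1 \le (2d+1)\sqrt{2^d}$ whenever $\| v \|_\infty \le 1$) yields $\| \HH \mathbf{r} \|_1 \le \mu \| \T - \T^* \|_2^2 (2d+1) \sqrt{2^d}$. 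Combining with $\| \HH \mathbf{z}_{D_n} \|_\infty \le \frac{\lambda}{4} \sqrt{1/2^d}$ produces exactly $\frac{\lambda \mu (2d+1)}{4} \| \T - \T^* \|_2^2$, the $\sqrt{2^d}$ canceling against $\sqrt{1/2^d}$. Summing the two contributions gives the stated inequality on the event $\{ \| \HH \mathbf{z}_{D_n} \|_\infty \le \frac{\lambda}{4} \sqrt{1/2^d} \}$, which occurs with probability $\ge 1 - \delta$ and holds simultaneously for all $\T$, since neither Hölder step nor \Cref{lemma:2121} introduces any dependence on $\T$ beyond the explicit factors already displayed.
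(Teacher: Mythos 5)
Your proposal is correct and follows essentially the same route as the paper's proof: the same noise vector $\mathbf{z}_{D_n}$, the same Hoeffding-plus-union bound giving $\| \HH \mathbf{z}_{D_n} \|_\infty \le \lambda/(4\sqrt{2^d})$ uniformly in $\T$, the same Lagrange-form Taylor decomposition into a function-difference term and a second-order remainder, and the same treatment of both terms via H\"older through $\HH$ with \Cref{lemma:2121} controlling $\| \HH \mathbf{r} \|_1$. No gaps.
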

\begin{proof}
For each sample $x' \in D_n$, define the noise in the sample as $z(x') = f_{\T^*} (x') - y (x')$.
Likewise, define the noise vector $\mathbf{z}_{D_n} = \{ \mathbb{E}_{x \sim \mathrm{Unif} (D_n)} [z(x') \mathbb{I} (x'=x)] : x \in [2^d] \}$. Note that the coordinates of $\mathbf{z}_{D_n}$ corresponding to inputs unobserved in the dataset $D_n$ are $0$. 

By Taylor series expanding, $f_{\T^*} (x) - f_{\T} (x) = \langle \T^* - \T, \nabla f_{\T} (x) \rangle + ( \T^* - \T )^T \left( \nabla^2 f_{\T_x} (x) \right) (\T^* - \T)$ for some $\T_x \in \textsf{conv} ( \{\T,\T^* \})$. Therefore,
\begin{align}
    &\mathbb{E}_{x \sim \mathrm{Unif} (D_n)} \left[ \left( f_{\T^*} (x) - y(x)) \right) \left\langle \T^* - \T, \nabla f_{\T} (x) \right\rangle \right] \\
    &= \mathbb{E}_{x \sim \mathrm{Unif} (D_n)} \left[ \left( f_{\T^*} (x) - y(x)) \right) \left( f_{\T^*} (x) - f_{\T} (x) - ( \T - \T^* )^T \left( \nabla^2 f_{\T_x} (x) \right) (\T - \T^*) \right) \right] \\
    &= \left\langle \mathbf{z}_{D_n}, f_{\T^*} (\X) - f_{\T} (\X) - \mathbf{A} (\T,\T^*) \right\rangle,
\end{align}
where $\mathbf{A} (\T, \T^*)$ denotes the vector $\left\{ (\T^* - \T)^T \nabla^2 f_{\T_x} (x) (\T^* - \T) : x \in [2^d] \right\}$.

By an application of Holder's inequality and triangle inequality of the $L_1$-norm, we have,
\begin{align}
    &\mathbb{E}_{x \sim \mathrm{Unif} (D_n)} \left[ \left( f_{\T^*} (x) - y(x)) \right) \left\langle \T^* - \T, \nabla f_{\T} (x) \right\rangle \right] \\
    &\qquad \le \left\| \HH \mathbf{z}_{D_n} \right\|_\infty \left\| \HH \left( f_{\T^*} (\X) - f_{\hat{\T}} (\X) \right) \right\|_1 + \left\| \HH \mathbf{z}_{D_n} \right\|_\infty \| \mathbf{H} \mathbf{A} (\T,\T^*) \|_1 \label{eq:zerg}
\end{align}
Note that for each fixed row $i \in [2^d]$, $\langle \HH_i, \mathbf{z}_{D_n} \rangle = \sum_{j \in 2^d} \HH_{ij} \mathbf{z}_{D_n} (j)$. Note that the coordinates of $\mathbf{z}_{D_n}$ are independently distributed and subgaussian. Therefore, by Hoeffding's inequality, for some constant $C_0$,
\begin{equation}
    \mathrm{Pr} \left( \langle \HH_i, \mathbf{z}_{D_n} \rangle \ge C_0 \sqrt{\frac{\left(\sum_{j \in [2^d]} \mathrm{Var} (\mathbf{z}_{D_n} (j)) \right) \log (1/\delta)}{2^d}} \right) \le \delta
\end{equation}
Note that the coordinate of $\mathbf{z}_{D_n}$ labelled by $x \in \{ \pm 1 \}^d$, $\mathbb{E}_{x \sim \mathrm{Unif} (D_n)} [z(x') \mathbb{I} (x'=x)]$, is the sum of $D_n (x)$ independent $\mathcal{N} (0,\sigma^2)$ Gaussians scaled by $1/n$, where $D_n (x)$, defined as the number of times $x$ is sampled in $D_n$. Therefore, $\mathrm{Var} (\mathbf{z}_{D_n} (i)) = \frac{\sigma^2 D_n (x)}{n^2}$.
By union bounding over the $2^d$ rows of $\HH$, with probability $\ge 1 - \delta$,
\begin{equation}
    \mathrm{Pr} \left( \| \HH \mathbf{z}_{D_n} \|_\infty \ge C_0 \sqrt{\frac{ \left(\sum_{x \in [2^d]} \frac{\sigma^2 D_n (x)}{n^2} \right) \log (2^d/\delta)}{2^d}} \right) \le \delta.
\end{equation}
Note that $\sum_{x \in [2^d]} D_n (x) = n$, and therefore, with probability $\ge 1 - \delta$,
\begin{align} \label{eq:wee}
    \| \HH \mathbf{z}_{D_n} \|_\infty \le C_0 \sigma \sqrt{\frac{1}{2^d}} \sqrt{\frac{d + \log (1/\delta)}{n}} \le \frac{\lambda}{4\sqrt{2^d}}.
\end{align}
where the last inequality follows by the assumption on $\lambda$ in \cref{eq:lambda-asmp}.

Note that $\| \mathbf{A} (\T, \T^*) \|_\infty \le \mu \| \T - \T^* \|_2^2$. Thus, the term $\| \mathbf{H} \mathbf{A} (\T,\T^*) \|_1$ can be upper bounded by $\sup_{v : \| v \|_\infty \le \mu \| \T - \T^* \|_2^2} \| \HH v \|_1$. This itself can be further upper bounded using \Cref{lemma:2121} by $\mu \| \T - \T^* \|_2^2 (2d+1) \sqrt{2^d}$.

All in all, combining this argument and \cref{eq:wee} with \cref{eq:zerg}, under the events $\mathcal{E}_1$ and $\mathcal{E}_2$, with probability $\ge 1-\delta$,
\begin{align}
    &\mathbb{E}_{x \sim \mathrm{Unif} (D_n)} \left[ \left( f_{\T^*} (x) - y(x)) \right) \left\langle \T^* - \T, \nabla f_{\T} (x) \right\rangle \right] \\
    &\qquad \le \frac{\lambda}{4} \sqrt{\frac{1}{2^d}} \left\| \HH \left( f_{\T^*} (\X) - f_{\hat{\T}} (\X) \right) \right\|_1 + \frac{\lambda \mu (2d+1)}{4} \| \T - \T^* \|_2^2.
\end{align}
\end{proof}

Plugging \Cref{lemma:31} into \cref{eq:prefinal} with the choice of $\T = \hatT$, and rearranging both sides, under the events $\mathcal{E}_1$ and $\mathcal{E}_2$ which jointly occur with probability $\ge 1 - 2 \delta$,
\begin{align}
    &\left( C^*_{n,\delta} - \frac{3}{2}(2d+1) \lambda \mu  \right) \| \hatT - \T^* \|_2^2 \nonumber\\
    &\overset{(i)}{\le} \frac{\lambda}{\sqrt{2^d}} \left\| \HH f_{\T^*} (\X) \right\|_1 - \frac{\lambda}{\sqrt{2^d}} \left\| \HH f_{\hatT} (\X) \right\|_1 + \frac{\lambda}{2\sqrt{2^d}} \left\| \HH \left( f_{\T^*} (\X) - f_{\hatT} (\X) \right) \right\|_1 \\
    &\overset{(ii)}{\le} \frac{\lambda}{\sqrt{2^d}} \left( \left\| \HH f_{\T^*} (\X) \right\|_1 - \left\| \HH f_{\hatT} (\X) \right\|_1 + \frac{1}{2} \left\| \HH f_{\T^*} (\X) \right\|_1 + \frac{1}{2} \left\| \HH f_{\hatT} (\X) \right\|_1 \right) \\
    &= \frac{\lambda}{2\sqrt{2^d}} \left( 3 \left\| \HH f_{\T^*} (\X) \right\|_1 - \left\| \HH f_{\hatT} (\X) \right\|_1 \right) \label{eq:241}
% &\ge \errnhat \left( 2 - \frac{ (2d+1) \lambda \mu + 2 \mu \sqrt{\mathcal{L}_n (\hatT)}}{C_{n,\delta}^*} \right) \\
\end{align}
where $(i)$ follows from the definition of the regularization term, $R (\T) = \frac{\lambda}{\sqrt{2^d}} \left\| \HH f_{\T} (\X) \right\|_1$. On the other hand, $(ii)$ follows by triangle inequality of the norm $\| \cdot \|_1$. By the assumption $\frac{1}{2} C^*_{n,\delta} \ge \frac{3}{2} (2d+1) \lambda \mu + 3 \lambda L \sqrt{k}$, the LHS is non-negative under the event $\mathcal{E}_1$. 
% Next we focus on the LHS of the above expression and simplify it further. By the function RIP-ness assumption \cref{A1}, under the event $\mathcal{E}_1$, $\errnhat \ge C_{n,\delta}^* \| \hatT - \T^* \|_2^2$. Therefore, conditioned on $\mathcal{E}_1$,
% \begin{align}
%     &2 \errnhat - \left( C_{n,\delta}^* - 3 \lambda \mu \sqrt{k} \right) \| \hatT - \T^* \|_2^2 \\
%     &\ge \| \hatT - \T^* \|_2^2 \left( C_{n,\delta}^* + 3 \lambda \mu \sqrt{k} \right) \\
%     &\ge 0. \label{eq:23}
% \end{align}
% Recall that we assume that the stationary point $\hatT$ has sufficiently small training error in that,
% \begin{equation} \label{eq:gap-bound-restate}
%     \errnhat \le \left( \frac{\Delta}{2 \mu} \right)^2
% \end{equation}
% where $\Delta = C_{n,\delta}^* - (2d+1) \mu \lambda - 2 \mu \sqrt{\mathcal{L}_n (\T^*)} - \frac{3}{2} \lambda \mu \sqrt{k}$.
% Plugging this into \cref{eq:23} simplifying further results in the lower bound,
% \begin{align}
%     &2 \errnhat - \left( (2d+1) \lambda \mu + 2 \mu \sqrt{\mathcal{L}_n (\T^*)} + 2 \mu \sqrt{\errnhat} \right) \| \hatT - \T^* \|_2^2 \\
%     &\ge \errnhat \left( 1 + \frac{3 \lambda \mu \sqrt{k}}{2 C_{n,\delta}^*} \right) \\
%     &\ge 0
% \end{align}
Plugging this into \cref{eq:241} results in the inequality,
\begin{align} \label{eq:251}
    \frac{\lambda}{2\sqrt{2^d}} \left( 3 \left\| \HH f_{\T^*} (\X) \right\|_1 - \left\| \HH f_{\hatT} (\X) \right\|_1 \right) \ge 0.
\end{align}
Conditioned on the event $\mathcal{E}_1$.
 
Next, we apply \cite[Lemma 5]{loh2015regularized} to the function $\rho_\lambda (\cdot) = \| \cdot \|_1 $, and note that by assumption $\HH f_{\T^*} (\X)$ is $k$-sparse. Define $\nu = \mathbf{H} (f_{\T^*} (\X) - f_{\hatT} (\X))$ and $A$ as the set of $k$ largest indices of $\nu$ in absolute value. Using the fact that $3 \left\| \HH f_{\T^*} (\X) \right\|_1 - \left\| \HH f_{\hatT} (\X) \right\|_1 \ge 0$,
\begin{align}
    \frac{\lambda}{2\sqrt{2^d}} \left( 3 \left\| \HH f_{\T^*} (\X) \right\|_1 - \left\| \HH f_{\hatT} (\X) \right\|_1 \right) &\overset{(i)}{\le} \frac{\lambda}{\sqrt{2^d}} \left( 3 \| \nu_A \|_1 - \| \nu_{A^c} \|_1 \right) \\
    &\le \frac{3\lambda \sqrt{k}}{2 \sqrt{2^d}} \| \nu_A \|_2 \\
    &\le \frac{3\lambda \sqrt{k}}{2 \sqrt{2^d}} \| \nu \|_2. \label{eq:b1}
\end{align}
Plugging this back into \cref{eq:241} results in the following inequality, conditioned on the events $\mathcal{E}_1$ and $\mathcal{E}_2$,
\begin{align}
    \left( C^*_{n,\delta} - \frac{3}{2}(2d+1) \lambda \mu  \right) \| \hatT - \T^* \|_2^2 \le \frac{3\lambda \sqrt{k}}{2 \sqrt{2^d}} \| f_{\T^*} (\X) - f_{\hatT} (\X) \|_2 \label{eq:goober}
\end{align}

Finally, we relate $\| f_{\T^*} (\X) - f_{\hatT} (\X) \|_2$ to $\| \hatT - \T^* \|_2$.

\begin{lemma} \label{lemma:00300}
$\| f_{\T^*} (\X) - f_{\hatT} (\X) \|_2 \le 2 \mu \sqrt{2^d} \| \hatT - \T^* \|_2^2 + 2 L \sqrt{2^d} \| \hatT - \T^* \|_2$.
\end{lemma}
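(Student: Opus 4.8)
The plan is to first bound the squared Euclidean norm
\begin{align}
    \left\| f_{\T^*} (\X) - f_{\hatT} (\X) \right\|_2^2 = \sum_{x \in \X} \left( f_{\hatT} (x) - f_{\T^*} (x) \right)^2
\end{align}
by the sum of a quartic and a quadratic term in $\| \hatT - \T^* \|_2$, and only then to pass to the norm itself by taking square roots. For each fixed input $x \in \X$, I would expand $f_{\hatT} (x)$ around $\T^*$ via the Lagrange form of Taylor's theorem: there is some $\T_x \in \textsf{conv} (\{ \hatT, \T^* \})$ with
\begin{align}
    f_{\hatT} (x) - f_{\T^*} (x) = \left\langle \nabla f_{\T^*} (x), \hatT - \T^* \right\rangle + (\hatT - \T^*)^T \nabla^2 f_{\T_x} (x) (\hatT - \T^*).
\end{align}
Squaring this identity and applying the elementary inequality $(a + b)^2 \le 2 a^2 + 2 b^2$ cleanly separates a linear piece from a curvature piece.

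Next I would bound the two pieces. The curvature piece is controlled by \Cref{A2}: since $-\mu I \preceq \nabla^2 f_{\T_x} (x) \preceq \mu I$, the remainder has magnitude at most $\mu \| \hatT - \T^* \|_2^2$, so its square is at most $\mu^2 \| \hatT - \T^* \|_2^4$, uniformly in $x$. Summing over the $2^d$ inputs then produces a contribution of order $2^d \mu^2 \| \hatT - \T^* \|_2^4$. For the linear piece, summing over $x$ assembles the quadratic form
\begin{align}
    \sum_{x \in \X} \left\langle \nabla f_{\T^*} (x), \hatT - \T^* \right\rangle^2 = (\hatT - \T^*)^T \left( \sum_{x \in \X} \nabla f_{\T^*} (x) (\nabla f_{\T^*} (x))^T \right) (\hatT - \T^*),
\end{align}
at which point I would invoke \Cref{A3}, taking care that it is stated as an \emph{average} over $x \sim \mathrm{Unif} (\X)$; multiplying by $2^d$ gives $\sum_{x \in \X} \nabla f_{\T^*} (x) (\nabla f_{\T^*} (x))^T \preceq 2^d L^2 I$, so the linear contribution is at most $2^d L^2 \| \hatT - \T^* \|_2^2$. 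Combining the two yields a bound of the form $\| f_{\T^*} (\X) - f_{\hatT} (\X) \|_2^2 \lesssim 2^d \mu^2 \| \hatT - \T^* \|_2^4 + 2^d L^2 \| \hatT - \T^* \|_2^2$.

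Finally I would take square roots, using subadditivity $\sqrt{a + b} \le \sqrt{a} + \sqrt{b}$: the quartic term becomes a $\sqrt{2^d}\, \mu \| \hatT - \T^* \|_2^2$ term and the quadratic term a $\sqrt{2^d}\, L \| \hatT - \T^* \|_2$ term, and the $\sqrt{2}$ factors generated by the $(a+b)^2$ split stay comfortably below the constant $2$ appearing in the stated bound. I do not expect any genuine obstacle here: the argument is a routine second-order Taylor estimate combined with the two regularity assumptions. The only two points deserving care are (i) the normalization by $2^d$ when converting the population average in \Cref{A3} into a sum over the hypercube, and (ii) the passage from the squared-norm bound to the norm bound, where subadditivity of the square root is precisely what produces the two separate summands (rather than a single merged term) and where one should verify that the resulting $\sqrt 2$ constants remain within the claimed factors of $2\mu$ and $2L$.
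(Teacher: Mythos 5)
Your proposal is correct and follows essentially the same route as the paper's own proof: a Lagrange--Taylor expansion of $f_{\hatT}(x)$ around $\T^*$, the split $(a+b)^2 \le 2a^2 + 2b^2$, the Hessian bound from \Cref{A2} for the quartic term, the rescaled average-gradient bound from \Cref{A3} for the quadratic term, and subadditivity of the square root at the end. If anything, your write-up is slightly more careful than the paper's (which contains minor typos in the intermediate displays, e.g.\ a missing square on $\mu$ and on the final $\| \hatT - \T^* \|_2$ factor), and your accounting of the $\sqrt{2}$ constants against the stated factor of $2$ is exactly right.
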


Plugging the bound in \Cref{lemma:00300} into \cref{eq:goober} results in the following inequality, conditioned on $\mathcal{E}_1$ and $\mathcal{E}_2$,
\begin{align}
    \left( C^*_{n,\delta} - \frac{3}{2}(2d+1) \lambda \mu  \right) \| \hatT - \T^* \|_2^2 \le 3 \mu \lambda \sqrt{k} \| \hatT - \T^* \|_2^2 +  3\lambda L \sqrt{k} \| \hatT - \T^* \|_2.
\end{align}
Furthermore, recalling the assumption that $\frac{C^*_{n,\delta}}{2} - \frac{3}{2}(2d+1) \lambda \mu - 3 \mu \lambda \sqrt{k} \ge 0$, this results in the overall bound,
\begin{align}
    \| \hatT - \T^* \|_2 \le \frac{3 \lambda L \sqrt{k}}{ C^*_{n,\delta} - \frac{3}{2}(2d+1) \lambda \mu - 3 \mu \lambda \sqrt{k}} \le \frac{6 \lambda L \sqrt{k}}{C^*_{n,\delta}},
\end{align}
under the events $\mathcal{E}_1$ and $\mathcal{E}_2$ which jointly occur with probability $\ge 1 - 2 \delta$.

\Cref{theorem:000} follows by choosing $\lambda$ appropriately according to \Cref{eq:lambda-asmp} and showing that when the size of the dataset $n$ is sufficiently large, the conditions in \Cref{theorem:0} are satisfied.

\end{document}